\documentclass[12pt, a4paper]{article}
\usepackage[T1]{fontenc}
\usepackage{tgbonum}
\usepackage{times}
\usepackage{amsmath, amssymb, amsfonts}
\usepackage{graphicx}
\usepackage{epstopdf}
\usepackage{caption, subcaption}
\usepackage[margin=01 in]{geometry}
\usepackage[onehalfspacing]{setspace}
\usepackage{hyperref}
\usepackage{xr-hyper}
\usepackage[affil-it]{authblk}
\numberwithin{equation}{section}
\usepackage{amsthm}
\newtheorem{theorem}{Theorem}[subsection]
\newtheorem{lemma}{Lemma}[subsection]

\newtheorem{remark}{Remark}[subsection]
\newtheorem{example}{Example}[subsection]
\newtheorem{assumption}{Assumption}[subsection]
\newtheorem{definition}{Definition}[subsection]
\usepackage{authblk}
\usepackage{url}
\usepackage{cite}
\usepackage[table,xcdraw]{xcolor}
\usepackage{lineno} 
\usepackage{float}
\usepackage{tcolorbox}
\usepackage{algorithm}
\usepackage{algorithmic}
\usepackage{placeins}
\usepackage{tikz}
\usepackage{multirow}
\usepackage{array}
\newcolumntype{C}[1]{>{\centering\let\newline\\\arraybackslash\hspace{0pt}}m{#1}}
\usepackage{siunitx}

\newenvironment{keywords}{%
  \par\noindent\textbf{\textit{Keywords---}}\ }{\par}

\newenvironment{AMS}{%
  \par\noindent\textbf{\textit{AMS subject classifications---}}\ }{\par}

\title{Turning mechanistic models into forecasters by using machine learning} 

\author[1]{Amit K. Chakraborty} 
\author[1,*]{Hao Wang}
\author[2]{Pouria Ramazi}
\affil[1]{Department of Mathematical and Statistical Sciences \& Interdisciplinary Lab for Mathematical Ecology and Epidemiology (ILMEE), University of Alberta, AB, Canada}
\affil[2]{Department of Biological Sciences \& Department of Mathematics and Statistics, University of Calgary, AB, Canada}
\affil[*]{Corresponding author: hao8@ualberta.ca}
\date{}
\date{}

\begin{document}

% \linenumbers
\maketitle

\begin{abstract}
The equations of complex dynamical systems may not be identified by expert knowledge, especially if the underlying mechanisms are unknown. Data-driven discovery methods address this challenge by inferring governing equations from time--series data using a library of functions constructed from the measured variables. However, these methods typically assume time-invariant coefficients, which limits their ability to capture evolving system dynamics. To overcome this limitation, we allow some of the parameters to vary over time, learn their temporal evolution directly from data, and infer a system of equations that incorporates both constant and time-varying parameters. We then transform this framework into a forecasting model by predicting the time-varying parameters and substituting these predictions into the learned equations. The model is validated using datasets for Susceptible--Infected--Recovered, Consumer--Resource, greenhouse gas concentration, and Cyanobacteria cell count. By dynamically adapting to temporal shifts, our proposed model achieved a mean absolute error below $3\%$ for learning a time series and below $6\%$ for forecasting up to a month ahead. We additionally compare forecasting performance against CNN--LSTM and Gradient Boosting Machine (GBM), and show that our model outperforms these methods across most datasets. Our findings demonstrate that integrating time-varying parameters into data-driven discovery of differential equations improves both modeling accuracy and forecasting performance.
\end{abstract} 

% \keywords{Data-driven learning, Time-varying parameters, SINDy, CNN-LSTM, GBM, Forecasting}

% % REQUIRED
\begin{keywords}
    data--driven dynamical systems, time-varying parameters, sparse regression, ordinary differential equations, mechanistic forecasting, finite-horizon forecast error bounds, machine learning
\end{keywords}

% % REQUIRED
\begin{AMS}
  37M05, 37N25, 65L20, 62M45, 92D30, 86A10
\end{AMS}

\section{Introduction}

Mechanistic models provide the causal and interpretable governing laws of a system, but many natural systems are too complex to be captured by expert knowledge \cite{baker2018mechanistic, lan2025shallow, trappe2023density}. For example, emissions of greenhouse gases, such as methane and carbon dioxide, are influenced by temperature, humidity, wind speed, and other weather variables \cite{small2015emissions, natchimuthu2014influence}. In epidemiology, disease incidence is shaped by environmental factors, population density, social behavior, and policy interventions \cite{lowen2007influenza, shaman2009absolute, chakraborty2024policy, wang2022policy}. The relationships among the variables in such a high-dimensional system are many, time variant, and unknown, making it nearly infeasible to be described by equations that are based on expert knowledge \cite{banga2025mechanistic}. 

A promising alternative is data-driven discovery of the governing equation, often represented as differential equations \cite{song2024towards, brunton2016discovering, rudy2017data, li2019discovering}. Unlike mechanistic modeling that relies on predefined assumptions, data-driven approaches adapt to the intrinsic structure of the system, capturing both linear and nonlinear interactions among the system's variables \cite{rudy2017data}.

A widely used technique for the data-driven discovery of governing equations is the Sparse Identification of Nonlinear Dynamics (SINDy) \cite{brunton2016discovering}. SINDy and its variants have been used to uncover dynamical systems of ordinary and partial differential equations (ODEs and PDEs) in fluid dynamics \cite{brunton2016discovering, fukami2021sparse, callaham2022role, callaham2022empirical, loiseau2020data, loiseau2018constrained}, biological systems \cite{prokop2024biological, mangan2016inferring, kaiser2018sparse, wu2024data, prokop2023data, sandoz2023sindy, prabhu2023data}, mechanical systems \cite{thiele2020system, wulff2024minimal, lee2024energy}, and climate and environmental modeling \cite{guo2024uncertainty, rubio2022modeling, yang2024atmospheric}. It uses sparse regression to identify the minimal set of candidate functions needed to describe the system's dynamics \cite{brunton2016discovering}. Originally developed for systems with constant coefficients, SINDy has since been extended to handle time-varying parameters \cite{li2019discovering, rudy2019data}. Such extensions are needed for systems with evolving dynamics, where temporal changes cannot be accurately represented by fixed parameters \cite{chakraborty2024policy}. For example, the transmission rate and the death rate in an epidemic model are time-varying and can be estimated by calibrating the model to epidemic data \cite{wang2022policy, chakraborty2024policy, wang2023discrete}.

However, for systems with many variables, it remains unclear how many parameters exactly need to be time-varying to capture the data accurately. Identifying only the necessary time-varying parameters can reduce computational cost, prevent overfitting, and improve interpretability.

Another challenge is to use the resulting system of equations equipped with time-varying parameters in forecasting. Finding the time-series of a time-varying parameter (basically a variable) over past data can be done by model fitting. However, forecasting requires running the model with an initial condition, and this requires the time-series of the parameter in the future \cite{chakraborty2024policy}. This is shaped by external forces, interactions among variables, and other dynamic factors \cite{chakraborty2024policy, wang2022policy}. To address this challenge, machine learning (ML) models were employed to forecast time-varying parameters across various fields using their historical time-series and known covariates \cite{wang2023discrete}. In epidemiology, the disease transmission and death rates were forecasted based on policies or weather using epidemic models \cite{bousquet2022deep, long2021identification, wang2022policy, chakraborty2024policy, ji2025hybrid}. In climate science, greenhouse gas drivers were studied by forecasting time-varying parameters \cite{pastpipatkul2024analysis}. Therefore, once a system is modeled with time-varying parameters, they can be forecasted via an ML model using their past values and covariates whose future values are known \cite{wang2023discrete, chakraborty2024early}.

The goal of this study is to develop a data-driven framework for discovering ODEs with fixed and time-varying parameters that capture evolving dynamical systems. We implement this framework using a two-stage application of the SINDy algorithm. In the first stage, SINDy is applied to the full time series to identify the variables that actively contribute to the dynamics and their interactions under constant parameter assumptions. In the second stage, the analysis is repeated on shorter temporal intervals, allowing a subset of parameters--selected based on validation performance--to vary over time, while the remaining parameters are held fixed. This procedure yields an ODE model that preserves the underlying mechanistic structure while accommodating temporal variability in key processes. To enable forecasting, the identified time-varying parameters are predicted using a machine learning model driven by external covariates that influence the time-varying parameters. These predicted parameters are then substituted into the learned ODEs, allowing to run the model and make forecasts that adapt dynamically to changing conditions.

We applied this methodology to two well-known mathematical models--the Susceptible--Infected-- \allowbreak Recovered (SIR) model \cite{kermack1927contribution} and the Consumer--Resource (CR) model \cite{rosenzweig1963graphical}--as well as two real-world datasets: greenhouse gas concentration dynamics in Alberta's oil sands tailings ponds and cyanobacteria cell counts in lakes across Alberta. We predicted the time-varying parameters using a Random Forest (RF) model informed by weather-related drivers, including air temperature, relative humidity, precipitation, and wind speed. To benchmark the forecasting performance of our proposed approach, we compared it against two widely used data-driven baselines: a hybrid Convolutional Neural Network-Long Short-Term Memory (CNN-LSTM) model and a Gradient Boosting Machine (GBM).

\section{Methods} \label{Methods}
\subsection{Problem formulation} \label{sparse-regression-problem}

Consider the evolution of a state vector \( \mathbf{x}(t) \in \mathbb{R}^n \) for a natural number $n\in\mathbb{N}$, described by the dynamical system
\begin{equation}
    \frac{d\mathbf{x}(t)}{dt} = \mathbf{f}(\mathbf{x}(t), \mathbf{u}(t)),
    \label{dynamical-system-equn}
\end{equation}
where 
\( \mathbf{u}(t) \in \mathbb{R}^q \), $q\in\mathbb{N}$, denotes the input variables and \( \mathbf{f} : \mathbb{R}^n \times \mathbb{R}^q \rightarrow \mathbb{R}^n \) is an unknown function. Measurements of the state variable are available at $m$ time points $\mathbf{t}=[t_1,\ldots,t_m]$, resulting in the time-series data
\begin{equation}
    \mathbf{X} =
    \begin{bmatrix}
        \mathbf{x}(t_1)^\top \\
        \mathbf{x}(t_2)^\top \\
        \vdots \\
        \mathbf{x}(t_m)^\top
    \end{bmatrix},
\end{equation}
where \( m \) is the number of measurements and the resulting matrix has dimension \( m \times n \). 
The corresponding time derivative data is
\begin{equation}
    \dot{\mathbf{X}} =
    \begin{bmatrix}
        \dot{\mathbf{x}}(t_1)^\top \\
        \dot{\mathbf{x}}(t_2)^\top \\
        \vdots \\
        \dot{\mathbf{x}}(t_m)^\top
    \end{bmatrix}.
    \label{data-derivative-equation}
\end{equation}
Similarly, the time-series data of the inputs is
\[
\mathbf{U} =
\begin{bmatrix}
    \mathbf{u}(t_1)^\top \\
    \mathbf{u}(t_2)^\top \\
    \vdots \\
    \mathbf{u}(t_m)^\top
\end{bmatrix}.
\]
\\
The library of candidate functions  to estimate $\mathbf{f}(\cdot)$ is constructed from \( \mathbf{X} \) and \( \mathbf{U} \) as
\begin{equation}
    \boldsymbol{\Theta}(\mathbf{X}, \mathbf{U}) =
    \big[\, \mathbf{1},\; \mathbf{X},\; \mathbf{U},\; 
    (\mathbf{X} \odot \mathbf{U}),\; \mathbf{X}^2,\; \mathbf{U}^2,\; \dots, \; (\mathbf{X}^{d-1} \odot \mathbf{U}), \; (\mathbf{X}^{d-2} \odot \mathbf{U}^2), \; \dots, \; (\mathbf{X} \odot \mathbf{U}^{d-1}), \;
    \mathbf{X}^d,\; \mathbf{U}^d \big],
    \label{candidate-term-equation}
\end{equation}
where $d \in \mathbb{N}$, and the candidate library contains all monomials in \( \mathbf{x (t)} \) and \( \mathbf{u(t)} \) of total degree at most $d$. In addition to polynomial terms, the candidate function library may also include nonlinear bases such as $\sin(\cdot)$, $\cos(\cdot)$, $\tan(\cdot)$, their powers, and other user-defined nonlinear functions.

% \( \mathbf{X}^k \) and \( \mathbf{U}^k \) denote the matrices whose columns consist of all monomials formed from \( \mathbf{x (t)} \) and \( \mathbf{u(t)} \) up to degree \( k \), and \( \mathbf{X} \odot \mathbf{U} \) denotes the elementwise products of the components of \( \mathbf{X} \) and \( \mathbf{U} \) for all polynomial orders satisfying \( 0 \le r + s \le d \). 

Based on equation \eqref{dynamical-system-equn}, the derivatives $\dot{\mathbf{X}}$ and the candidate function library $\boldsymbol{\Theta}(\mathbf{X}, \mathbf{U})$ can be related through
\begin{equation}
    \dot{\mathbf{X}} = \boldsymbol{\Theta}(\mathbf{X}, \mathbf{U}) \, \boldsymbol{\Xi},
    \label{sparse_regression_equn}
\end{equation}
where 
\[
\boldsymbol{\Xi} = 
\begin{bmatrix}
\boldsymbol{\xi}_{1} &
\boldsymbol{\xi}_{2} &
\dots &
\boldsymbol{\xi}_{n}
\end{bmatrix}
\]
is the matrix of unknown coefficient vectors \(\xi_k\). For a system with \(n\) variables and a candidate library $\boldsymbol{\Theta}(\cdot)$ of \(l\) terms, the coefficient matrix \( \boldsymbol{\Xi}\) has dimension \( {l \times n} \). The nonzero entries of $\boldsymbol{\xi}_{k}$ indicate the active candidate functions contributing to the dynamics of the $k^{\text{th}}$ state. Once $\boldsymbol{\xi}_{k}$ is identified, the nonlinear equation for the $k^{\text{th}}$ state is given by
\begin{equation}
    \dot{x}_{k} = \boldsymbol{\Theta}(\mathbf{x}, \mathbf{u}) \, \boldsymbol{\xi}_{k},
\end{equation}
where $x_{k}$ is the $k^{\text{th}}$ component of $\mathbf{x}$, and $\boldsymbol{\Theta}(\mathbf{x}, \mathbf{u})$ denotes the vector of candidate functions evaluated at $\mathbf{x}$ and $\mathbf{u}$.

\begin{example}

Consider $n=2$ state variables, i.e.,
\( \mathbf{x} = [x_1, x_2]^\top \), and $q=2$ input variables, i.e.,
\( \mathbf{u} = [u_1, u_2]^\top \), and a library with maximum polynomial degree $d=2$. If the variables are measured at time points $m=2$, i.e.,  \( \mathbf{t} = [t_1, t_2] \), then
\[
\mathbf{X}^2 = 
\big[\, x_1^2(\mathbf{t}),\; x_1 x_2(\mathbf{t}),\; x_2^2(\mathbf{t}) \big], 
\qquad
\mathbf{U}^2 = 
\big[\, u_1^2(\mathbf{t}),\; u_1 u_2(\mathbf{t}),\; u_2^2(\mathbf{t}) \big],
\]
and the candidate library has $l=15$ terms
\begin{align*}
\boldsymbol{\Theta}(\mathbf{X}, \mathbf{U}) = [&
\mathbf{1},\;
x_1(\mathbf{t}),\; x_2(\mathbf{t}),\;
u_1(\mathbf{t}),\; u_2(\mathbf{t}),\;
x_1 x_2(\mathbf{t}),\;
u_1 u_2(\mathbf{t}),\;
x_1 u_1(\mathbf{t}),\; x_1 u_2(\mathbf{t}),\;
x_2 u_1(\mathbf{t}),\; x_2 u_2(\mathbf{t}),\\
& x_1^2(\mathbf{t}),\; x_2^2(\mathbf{t}),\;
u_1^2(\mathbf{t}),\; u_2^2(\mathbf{t})
].
\end{align*}

Suppose the true dynamics are
$\dot{x}_1 = 0.6 x_1 + 0.2x_2 u_1$ and $\dot{x}_2 = -0.4 x_2 + 0.1 u_2^2$. Equation \eqref{sparse_regression_equn} seeks $\boldsymbol{\Xi}\in \mathbb{R}^{15\times2}$, whose columns are coefficient vectors
$\boldsymbol{\xi}_k\in \mathbb{R}^{15\times1}$. In this example,
\[
\boldsymbol{\xi}_1 = [\,0,\;0.6,\;0,\;0,\;0,\;0,\;0,\;0,\;0,\;0.2,\;0,\; \ldots,\;0\,]^\top,
\quad
\boldsymbol{\xi}_2 = [\,0,\;0,\;-0.4,\;0,\; \ldots,\;0,\;0.1\,]^\top,
\]
where the nonzero entries identify the active library terms. \end{example}

Equation \eqref{sparse_regression_equn} can be solved as a sparse regression problem to determine the active coefficients $\boldsymbol{\Xi}$ of the candidate nonlinearities $\mathbf{\Theta(X, U)}$, and a Sparse Identification of Nonlinear Dynamics (SINDy) algorithm was introduced by Brunton et al. (2016) \cite{brunton2016discovering}. Early variants of SINDy used $L^1$ regularization term for the regression, which was applied over the entire time domain to uncover systems of equations with constant coefficients \cite{brunton2016discovering, kaiser2018sparse}. Consequently, Li et al. (2019)\cite{li2019discovering} proposed a time-varying variant of SINDy, which uncovers the time-varying coefficients of a dynamical system, and Rudy et al. (2017, 2019)\cite{rudy2017data, rudy2019data} implemented Ridge Regression to incorporate an $L^2$ regularization term to the loss function. 

\subsection{Discovering time-varying dynamics}

The Sequential Threshold Ridge Regression (STRR), which combines Ridge Regression with a sparsity-enforcing thresholding technique described by Rudy et al. (2017, 2019) \cite{rudy2017data, rudy2019data}, was applied to discover time-varying coefficients $\boldsymbol{\Xi}$ in equation \ref{sparse_regression_equn}, following the framework of Li et al.\ (2019) \cite{li2019discovering}.

First, the input variables $\mathbf{u}(t)$ in \eqref{dynamical-system-equn} were selected based on prior knowledge, and time series data for all variables ($\mathbf{x}(t)$ and $\mathbf{u}(t)$) were collected (Figure \ref{model}a). Each variable's time series was rolled over a number of time points to reduce noise. The derivative $\dot{\mathbf{X}}$ of the state variables was then numerically estimated, and the candidate library $\boldsymbol{\Theta}(\mathbf{X}, \mathbf{U})$ was constructed \eqref{candidate-term-equation}.

Active constant coefficients, $\boldsymbol{\xi}_k$ for the $k^\text{th}$ state of \eqref{sparse_regression_equn}, were extracted over the entire training time domain (Figure \ref{model}b) using STRR. The STRR method minimizes error with regularization while enforcing sparsity through an iterative two-step optimization. Ridge Regression was first applied by minimizing the objective function
\begin{equation}
    \arg \min_{\boldsymbol{\xi_k}} \| \Dot{\mathbf{X}}_k - \mathbf{\Theta(X, U)} \boldsymbol{\xi}_k \|_2^2 + \lambda \| \boldsymbol{\xi}_k \|_2^2,
    \label{objective_function_equation}
\end{equation}
where the term \( \lambda \| \boldsymbol{\xi}_k \|_2^2 \) represents the regularization component that penalizes nonzero coefficients, controlled by the hyperparameter \( \lambda \), which helps in avoiding over-fitting. A sparsity-enforcing thresholding step was then applied, where coefficients smaller than a pre-defined threshold $\tau$ are set to zero:
\begin{equation}
    \xi_{i,k} = 
    \begin{cases} 
    0 & \text{if } |\xi_{i, k}| < \tau \\
    \xi_{i, k} & \text{otherwise}
    \end{cases}
    \label{thrusholding_equaiton}
\end{equation}
where \( \xi_{i, k} \) is \( i \)-th entry of $\boldsymbol{\xi}_k$. This step removes small, less significant features. Ridge regression was then refitted on the remaining active features (those whose coefficients have not been zeroed out) $\mathbf{\Theta_{\text{active}}(X, U)}$, using the same loss function
\begin{equation}
    \arg \min_{\boldsymbol{\xi}_k} \| \Dot{\mathbf{X}}_k - \mathbf{\Theta_{\text{active}}(X, U)} \boldsymbol{\xi}_k \|_2^2 + \lambda \| \boldsymbol{\xi}_k \|_2^2.
    \label{objective_function_equation_2}
\end{equation}

This iterative refinement ensures that the model continues to adapt to the most relevant features. The process repeats until the coefficients converge, i.e., the change in the coefficients between two consecutive iterations is smaller than a specified tolerance \( \epsilon \):
\begin{equation}
    \|\boldsymbol{\xi}_k^{(t)} - \boldsymbol{\xi}_k^{(t-1)}\| < \epsilon,
    \label{tollerance_equation}
\end{equation}
where \( \boldsymbol{\xi}_k^{(t)} \) and \( \boldsymbol{\xi}_k^{(t-1)} \) are the coefficients at iterations \( t \) and \( t-1 \), respectively.

The correlations between the active candidate terms ($\mathbf{\Theta_{\text{active}}(X, U)}$) and the target variable $\dot{\mathbf{X}}_k(\mathbf{t})$ for the $k^\text{th}$ state were computed (Figure \ref{model}c). The top $N$ active candidate terms $\mathbf{\Theta_{\text{top N}}}$ showing the highest correlation with $\dot{\mathbf{X}}_k(\mathbf{t})$ were then identified from the candidate set ($\mathbf{\Theta_{\text{active}}(X, U)}$). The coefficients of $\mathbf{\Theta_{\text{top N}}}$ were selected as time-varying coefficients (referred to as time-varying parameters) (Figure \ref{model}c), while the remaining coefficients were kept constant. Furthermore, the bias term was designated as time-varying by default to account for baseline shifts between intervals.

To capture the time-varying dynamics of the system \eqref{dynamical-system-equn}, a window size $w$ was defined and kept fixed over the entire training period. STRR (\eqref{objective_function_equation}-\eqref{tollerance_equation}) was applied across each time window $[t, t+w]$ (Figure \ref{model}d), with each parameter initialized using its value from the preceding interval. This sequential fitting strategy allowed the time-varying parameters to be dynamically adjusted while less impactful parameters remained fixed, capturing essential changes in the model structure and maintaining continuity between intervals. The discovered expression for the $k^\text{th}$ state can be written as
\begin{equation}
    \dot{x}_k (t) = \xi_{k,0}(t) + \sum_{z_i(t) \in \Theta_{\text{top N}}} \xi_{k,i}(t) z_i(t) + \sum_{z_j(t) \notin \Theta_{\text{top N}}} \bar{\xi}_{k,j} z_j(t),
    \label{data-learning-equation}
\end{equation}
where $\xi (t)$ are time-varying parameters and $\bar{\xi}$ are the constant coefficients. Here, $z_i(t) \in \mathbf{\Theta_{\text{top }N}}$ denotes the top $N$ active candidate terms and $z_j(t) \in \mathbf{\Theta_{\text{active}}(X, U)} \setminus  \mathbf{\Theta_{\text{top }N}}$ represents the remaining active terms with fixed coefficients.

\subsection{Forecasting from learned dynamics}

To forecast the future dynamics of the state variables, the discovered representation in equation~\eqref{data-learning-equation}, which incorporates both fixed and time-varying coefficients, was leveraged. The main idea involves predicting the time-varying parameters, $\xi(t)$, while keeping the constant coefficients, $\bar{\xi}$, unchanged. Forecasting is thereby enabled by updating only the components of the model that evolve over time.

A ML approach was employed to model the evolution of the time-varying parameters. During model development, the time-varying parameters $\xi(t)$ obtained from STRR were piecewise constant over each window of length $w$. However, during forecasting, the predicted parameters were not constrained to remain constant within a window; instead, continuous predictions were produced by the ML model based on the input features. Relevant predictors were used to train the models, with each time-varying parameter designated as a target output (Figure~\ref{model}e). By combining the forecasted time-varying parameters with the fixed coefficients and the active candidate terms, the right-hand side of the dynamical model~\eqref{data-learning-equation} was updated, and the forecast of the $k^{\text{th}}$ state variable was obtained by
\begin{equation}
    \dot{\hat{x}}_k (t) = \hat{\xi}_{k,0}(t) 
    + \sum_{z_i(t) \in \Theta_{\text{top N}}} \hat{\xi}_{k,i}(t)\, z_i(t)
    + \sum_{z_j(t) \notin \Theta_{\text{top N}}} \bar{\xi}_{k,j}\, z_j(t),
    \label{forecasting-equation}
\end{equation}
where $\hat{\xi}(t)$ are the forecasted time-varying parameters and $\bar{\xi}$ are the constant coefficients. Here, $z_i(t) \in \mathbf{\Theta_{\text{top }N}}$ denotes the top $N$ active candidate terms, and $z_j(t) \in \mathbf{\Theta_{\text{active}}(X, U)} \setminus \mathbf{\Theta_{\text{top }N}}$ represents the remaining active terms with fixed coefficients.

\begin{figure}[htb!]
    \centering
    \includegraphics[width=\linewidth]{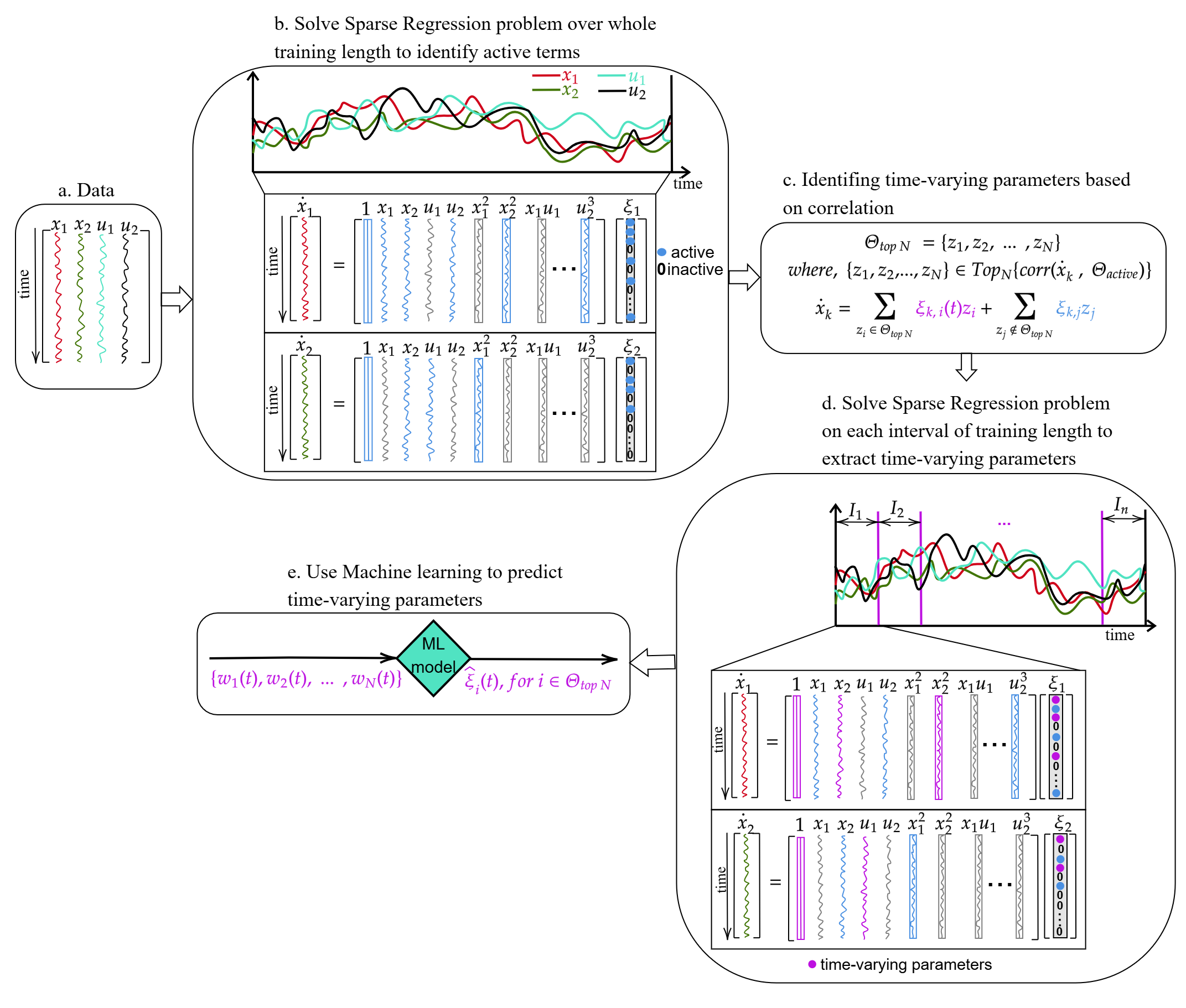}
    \caption{Schematic of data-driven discovery of governing equations and forecasting from the learned system. a) Time-series data of variables are gathered from natural or simulated systems. b) A sparse regression problem is solved over the whole time series to identify active terms and constant coefficients. Variables in gray represent non-active terms, while variables in blue represent active terms with constant coefficients. Variables in boxes represent generated terms. c) Top N candidate terms are chosen based on the correlation of active candidate terms with the derivative of the state variables. The coefficients of these terms are treated as time-varying parameters. By default, the bias term is also considered time-varying. d) Time series data is split into intervals, and sparse regression is performed within each interval to determine the time-varying parameters of the top candidate terms. Purple represents top candidate terms and their time-varying parameters, while blue indicates terms with constant coefficients. e) A ML model is employed to forecast the time-varying parameters. Inputs are relevant predictors, and outputs are the time-varying parameters.}
    \label{model}
\end{figure}

\section{Case studies}
We applied the proposed method (Section \ref{Methods}) to four datasets: two simulated from known ODE systems and two empirical. The simulated datasets were generated from 1) the Susceptible-Infectious-Recovered (SIR) model and 2) a Consumer-Resource (CR) model. The empirical datasets consisted of 1) Carbon dioxide (CO\textsubscript{2}) and methane (CH\textsubscript{4}) concentrations measured in Alberta's oil sands tailings ponds and 2) cyanobacteria cell counts collected from lakes across Alberta.

\subsection{SIR model}

The SIR model \cite{kermack1927contribution} is a foundation for most disease-spreading models. In this model, the population is categorized into three compartments: susceptible ($S$), infectious ($I$), and recovered ($R$). The susceptible compartment represents individuals who are not yet infected but are at risk of contracting the disease through contact with infectious individuals. The infectious compartment consists of individuals who are actively carrying the disease and can transmit it to others. Lastly, the recovered compartment includes individuals who have recovered from the disease and are assumed to have gained immunity, no longer participating in disease transmission. These compartments interact dynamically, with transitions governed by a system of ODEs that model the rates of change in each group over time. The model equations are given by
\begin{equation}
\begin{cases}
\frac{dS(t)}{dt} \;=\; \Lambda \,-\, \beta(t) S(t)I(t) \,-\, \mu S(t), \\
\frac{dI(t)}{dt} \;=\; \beta(t) S(t)I(t) \,-\, \alpha I(t) \,-\, \mu I(t), \\
\frac{dR(t)}{dt} \;=\; \alpha I(t) \,-\, \mu R(t),
\end{cases}
\label{SIR_model_pre}
\end{equation}
where $\Lambda$ is the recruitment rate of susceptible populations, $\beta(t)$ is the time-dependent disease transmission rate, $\mu$ is the natural death rate, and $\alpha$ is the recovery rate from the disease. The first two equations of \eqref{SIR_model_pre} are independent of recovered ($R$), and the reduced equations are given by
\begin{equation}
\begin{cases}
\frac{dS(t)}{dt} \;=\; \Lambda  \,-\, \beta(t) S(t)I(t) \,-\, \mu S(t), \\
\frac{dI(t)}{dt} \;=\; \beta(t) S(t) I(t) \,-\, \alpha I(t) \,-\, \mu I(t).
\end{cases}
\label{SIR_model_reduced}
\end{equation}

When a white noise is added to the equations of the SIR model \eqref{SIR_model_reduced}, then the SIR model with additive white noise has the form \cite{chakraborty2024early}  
\begin{eqnarray}\label{WhiteNoise_Model}
\begin{cases}
\displaystyle{dS(t)} \;=\; \displaystyle{ \Lambda dt \,-\, \beta(t) S(t)I(t) dt \,-\, \mu S(t) dt \,+\, \sigma_1 dW_1(t) }, \\
\displaystyle{dI(t)} \;=\; \displaystyle{\beta(t) S(t)I(t) dt \,-\, \alpha I(t) dt \,-\, \mu I(t) dt \,+\, \sigma_2 dW_2(t)},\\
\end{cases}
\end{eqnarray}
where $\sigma_i,\, i = 1, 2,$  are the intensities of white noise, and $W_i(t),\, i = 1, 2,$ are independent Wiener processes.

We used equation \eqref{WhiteNoise_Model} to simulate the time series of susceptible and infected populations over 1500 days. The model parameters were set as $\Lambda = 100$, $\alpha = 0.1$, $\mu = 0.1$, with initial conditions $S_0 = 500$ and $I_0 = 7$. The transmission rate $\beta(t)$ was defined as a combination of sinusoidal functions to capture irregular seasonal peaks by
\begin{equation}
\beta(t) = A + A_1 \sin(\omega_1 t) + A_2 \sin(\omega_2 t + \pi/4) + A_3 \sin(\omega_3 t + \pi/6)\sin(0.005 t),
\end{equation}
where $A = 0.0003$, $A_1 = 0.00015$, $A_2 = 0.00012$, $A_3 = 0.00010$, and the angular frequencies are $\omega_1 = 2\pi / (365/2)$, $\omega_2 = 2\pi / 365$, and $\omega_3 = 2\pi / (365/2)$. Small random fluctuations were added to $\beta(t)$ at each time step to account for irregular environmental variability, and the values were clipped to remain non-negative. Additive stochasticity was introduced through Wiener processes, $dW_i \sim \mathcal{N}(0, \sqrt{dt}), i = 1,2$, and the Euler--Maruyama method \cite{kloeden1992numerical} was used for numerical integration with $dt = 0.01$. Simulations were repeated with noise intensities $\sigma_1 = \sigma_2 = 0, 0.1, 0.2, 0.3,$ and $0.4$ to examine the effect of increasing stochasticity on model predictions.

The simulated trajectories of $S(t)$ and $I(t)$ were then used to learn the dynamics of the SIR model using time-varying parameters following the proposed method in Section~\ref{Methods}. To discover the governing equations, we expressed the system in the form
\begin{equation}
\begin{cases}
    \frac{dS}{dt} = f(S, I ), \\
    \frac{dI}{dt} = f(S, I ),
\end{cases}
\label{SIR-model-discovered-equation}
\end{equation}
where the time-varying parameters were systematically selected according to the procedure described in Section~\ref{Methods}.

\subsection{CR model}

The CR model \cite{rosenzweig1963graphical} is a fundamental framework used to study the interactions between two populations: a resource population (e.g., prey or a nutrient source) and a consumer population (e.g., predators or organisms that depend on the resource). This model captures the dynamics of population growth and interactions, providing insights into ecological systems. 

In this model, the resource population (\(R\)) grows at a rate proportional to its size, governed by the growth rate \(r\). The growth of \(R\) is constrained by consumption from the consumer population (\(C\)), which follows a saturating functional response. This response is characterized by the consumption rate \(a\) and the half-saturation constant \(b\). Simultaneously, the consumer population (\(C\)) grows through resource consumption but experiences a constant mortality rate \(m\). The dynamics of the model are described by the following equations
\begin{equation}
\begin{cases}
        \frac{dR}{dt} = rR - \frac{aR C}{1 + bR}, \\
    \frac{dC}{dt} = \frac{aR C}{1 + bR} - mC.
\end{cases}
\label{CR-model}
\end{equation}

To incorporate the stochastic fluctuation into the model, we added noise into each of the equations of the system \eqref{CR-model}. The noise-induced CR model has the form

\begin{equation}
\begin{cases}
        \displaystyle{dR(t)} = \displaystyle{rR(t) dt - \frac{aR(t) C(t)}{1 + bR(t)}dt + \sigma_1 dW_1}, \\
    \displaystyle{dC(t)} = \displaystyle{\frac{aR(t) C(t)}{1 + bR(t)} dt - mC(t) dt + \sigma_2 dW_2},
\end{cases}
\label{CR-model-noise}
\end{equation}
where $\sigma_i,\, i = 1, 2,$  are the intensities of white noise, and $W_i(t),\, i = 1, 2,$ are independent Wiener processes.

We simulated the CR models dynamics using the Euler--Maruyama method, which is well suited for SDEs. The model parameters were chosen as the resource growth rate $r = 0.1$, the consumption rate $a = 0.005$, the half-saturation constant $b = 0.001$, and the consumer mortality rate $m = 0.025$. The initial population sizes were set to $R_0 = 20$ for the resource and $C_0 = 10$ for the consumer. Additive environmental stochasticity was incorporated into both equations through Wiener processes sampled from
$
dW_i \sim \mathcal{N}(0, \sqrt{dt}), \, i = 1,2,
$
and simulations were repeated across a range of noise intensities,
$
\sigma_1 = \sigma_2 \in \{0,\, 0.5,\, 1.0,\, 1.5,\, 2.0\}.
$
The system was simulated over a time horizon of 1500 units with a time step of $dt = 0.01$. After each Euler--Maruyama update, any negative population values were prevented by enforcing small positive lower bounds to maintain biological realism.

We used the simulated time series of $C$ and $R$ to learn the dynamics of the CR model using the proposed method described in Section~\ref{Methods}. During the learning phase, we assumed that the system could be represented in the general form
\begin{equation}
\begin{cases}
        \frac{dR}{dt} = f(C, R), \\
    \frac{dC}{dt} = f(C, R),
\end{cases}
\label{CR-model-discovered-equation}
\end{equation}
where the time-varying parameters were systematically selected following the procedure in Section~\ref{Methods}.

\subsection{Gas concentration}

\label{time-varying parameters-method}

CO\textsubscript{2} and CH\textsubscript{4} are two major greenhouse gases, and a significant amount of these gases is found in the oil-sand tailing areas. Our goal is to learn a system of equations that can capture the concentration time-series of CO\textsubscript{2} and CH\textsubscript{4} in Alberta's oil-sand tailing ponds based on relevant weather conditions and diluent inputs. Additionally, we leveraged the learned equation to predict future concentrations based on estimated weather conditions and diluent inputs. 

We used two different air monitoring station datasets that contain CO\textsubscript{2} and CH\textsubscript{4} concentrations, and one nearby meteorological tower dataset by the Wood Buffalo Environmental Association (WBEA) \cite{wbeaNetworkStation}, located in the regional municipality of Wood Buffalo in northeast Alberta, Canada. The air monitoring stations we used are AMS-1: Bertha Ganter--Fort McKay (Latitude: 57.189428, Longitude: -111.640583), and AMS-18: Stony Mountain (Latitude: \allowbreak 55.621408, Longitude: -111.172686). The meteorological tower we used is JP104-Site 1004 (Latitude: 57.11901, Longitude: -111.42542). The parameters used in this study and their sources are described in Table \ref{gas_varibale_list}. Air monitoring station and meteorological tower data can be accessed from \cite{wbeaNetworkStation}. The period of the collected data was from January 2019 to December 2024. Air monitoring stations recorded hourly data with a few missing data. We deleted rows containing missing measures and took the daily average of the recorded data.

Both stations measure CO\textsubscript{2} and CH\textsubscript{4} and some common weather variables (AT, RH, WS, WD, PC, and GR) (Table \ref{gas_varibale_list}). We wanted to keep the same number of weather variables (AT, RH, WS, WD, GR, BP, DEW, PAR, SR, and PC) in each dataset. Therefore, the other weather variables (BP, DEW, PAR, and SR) were considered the same in both datasets which were collected from nearby meteorological tower JP104 (Table \ref{gas_varibale_list}). 

To incorporate the source of the hydrocarbons, which are responsible for producing these gases, we used the total diluent lost data from Alberta Energy Regulator (AER) reports \cite{albertaenergyregulator}. Since it is unknown which monitoring stations capture gases from which exact source tailing pond, we took the summation of monthly reported diluent loss data of all the reported mining companies in that region between January 2019 and December 2024. Then we divided it equally throughout the months to get daily diluent estimations.

\begin{table}[hbt!]
    \caption{Variables and data sources used in this case study.}
    \label{gas_varibale_list}
    \centering
    \footnotesize
    \begin{tabular}{|l|l|l|l|}
    \hline
    \rowcolor{gray!25}
       \textbf{Variable}  & \textbf{Abbreviation}  & \textbf{Unit} & \textbf{Data Sources} \\
       \hline
       Methane & CH\textsubscript{4}  & Parts per million (ppm) & AMS-1, AMS-18 \\
       \hline
       Carbon Dioxide & CO\textsubscript{2} & Parts per million (ppm) & AMS-1, AMS-18 \\
       \hline 
       Ambient Temperature & AT  & Degrees Celsius (\textdegree C) & AMS-1, AMS-18 \\
       \hline
       Relative Humidity & RH & Percentage (\%) & AMS-1, AMS-18 \\
       \hline
       Wind Speed & WS  & $\text{km} \, \text{h}^{-1}$ & AMS-1, AMS-18 \\
       \hline
       Wind Direction  & WD  & Degrees $\in [0, 360)$ & AMS-1, AMS-18 \\
        \hline
       Global radiation & GR  & $\text{W} \, \text{m}^{-2}$  & AMS-1, AMS-18 \\
       \hline
       Precipitation & PC  & millimeters (mm)  & AMS-1, AMS-18 \\
       \hline
       Barometric pressure  & BP  & mB & JP104 \\
       \hline
       Dew point & DEW  & Degrees Celsius (\textdegree C) & JP104 \\
       \hline
       Photosynthetically active radiation at 16m & PAR  & $\mu\text{mol} \, \text{m}^{-2} \, \text{s}^{-1}$ & JP104 \\
       \hline
       Solar radiation & SR  & $\text{W} \, \text{m}^{-2}$ & JP104 \\
       \hline
       Diluent & DIL &  $\text{m}^3$ & AER \\
       \hline
    \end{tabular}
\end{table}

To learn the system of equations, We took up to quadratic terms of the library of candidate nonlinear functions $\mathbf{\Theta(X, U)}$ (equation \eqref{candidate-term-equation}) to fit the data of each of the gas with a quadratic equation. We considered the diluent lost data as a source of CO\textsubscript{2} and CH\textsubscript{4}, and since the gas concentration varies with the weather conditions, we incorporated all the weather variables as input variables in the candidate terms. Additionally, CH\textsubscript{4} can be produced by the chemical reaction of CO\textsubscript{2}. Therefore, based on these considerations, we considered the learned system of equations to have the general form
\begin{equation}
\begin{cases}
    \frac{dCO_2}{dt} = f(DIL, \text{weather variables} ), \\
    \frac{dCH_4}{dt} = f(CO_2, DIL, \text{weather variables} ),
\end{cases}
\label{gases-model}
\end{equation}
where the weather variables are AT, RH, WS, WD, GR, BP, DEW, PAR, SR, and PC, described in Table \ref{gas_varibale_list}. 

\subsection{Cyanobacteria cell count}
Cyanobacterial blooms are commonly found in freshwater ecosystems and can be highly toxic, posing significant risks to aquatic life, wildlife, and human health through the production of harmful toxins that contaminate water sources. Our goal is the data-driven discovery of an equation of cyanobacteria cell counts and prediction from the discovered equation based on nutrient availability, meteorological conditions, lake features, and watershed data. 

The dataset includes observations from 78 lakes and reservoirs across Alberta, Canada, spanning the years from 1986 to 2017 \cite{cheggerud_2023_10109225}. These lakes and reservoirs are situated between latitudes 49.3\textdegree N and 58.8\textdegree N and longitudes 110.0079\textdegree W and 119.21667\textdegree W \cite{heggerud2024predicting}. The lake monitoring data is collected during May to October, which includes phytoplankton data gathered through the Alberta Lake Monitoring Program. The original lake monitoring dataset contains various parameters such as chemical concentrations in lake water, Secchi depth measurements, lake stratification data, cyanobacteria cell counts, etc, provided by Alberta Environment and Parks (AEP). We used the processed datasets from Heggerud et al. \cite{heggerud2024predicting}, which can be accessed from \cite{cheggerud_2023_10109225}. We averaged the data based on each year and month from different lakes. The variables used for this case study are described in Table \ref{cb_variables_table}.

\begin{table}[htb!]
\centering
\footnotesize
\caption{List of variables and their abbreviations used for CB cell count}
\label{cb_variables_table}
\begin{tabular}{|l|l|}
\hline
\rowcolor{gray!25}
\textbf{Variable} & \textbf{Description}                                                                                                                           \\ 
\hline
CB\textsubscript{cell}          & Cell count of cyanobacteria sampled on the monitoring day                                                                                      \\ 
\hline
P                 & Phosphorus concentration sampled on the monitoring day                                                                                         \\ 
\hline
N                 & Nitrogen concentration sampled on the monitoring day                                                                                           \\ 
\hline
SD                & Secchi depth of lake on the monitoring day                                                                                                     \\ 
\hline
ST                & Stratification state of lake on the monitoring day                                                                                             \\ 
\hline
AT                & \begin{tabular}[c]{@{}l@{}}Average daily temperature from the day of monitoring to 2 weeks\\ Before the next available CB sample\end{tabular}  \\ 
\hline
PC                & \begin{tabular}[c]{@{}l@{}}Average daily precipitation from the day of monitoring to 2 weeks\\ Before the next available sample\end{tabular}     \\ 
\hline
WS                & \begin{tabular}[c]{@{}l@{}}Average daily wind speed from the day of monitoring to 2 weeks\\ before the next available sample\end{tabular}      \\ 
\hline
SR                & \begin{tabular}[c]{@{}l@{}}Average daily solar radiation from the day of monitoring to 2 weeks\\ before the next available sample\end{tabular} \\ 
\hline
LD                & Lake depth                                                                                                                                     \\ 
\hline
LE                & Lake elevation                                                                                                                                 \\ 
\hline
PL                & \% pastureland in watershed                                                                                                                    \\ 
\hline
CL                & \% cropland in watershed                                                                                                                       \\ 
\hline
FT                & \% forest in watershed                                                                                                                         \\ 
\hline
WL                & \% wetland in watershed                                                                                                                        \\ 
\hline
LWAR              & Lake-watershed area ratio                                                                                                                      \\ 
\hline
\end{tabular}
\end{table}

We used the method described in section \ref{Methods} to learn a single equation for CB cell count (CB\textsubscript{cell}) based on all the covariates listed in Table \ref{cb_variables_table}. The learned equation has the general form of 
\begin{equation}
    \frac{dCB_{cell}}{dt} = f( CB_{cell}, \text{Lake Monitoring}, \text{Meteorology}, \text{Lake feature}, \text{Watershed features} ),
\end{equation}
where the Lake Monitoring includes P, N, SD, and ST. Meteorology includes AT, PC, WS, and SR.
Lake feature includes LD and LE. Watershed features include PL, CL, FT, WL, and LWAR. The CB\textsubscript{cell} remains in the RHS of the equation to indicate that the new cell count depends on the existing cell count. 

\subsection{Simulated weather data}

Once the system was learned through the time-varying parameters, we used a ML model to forecast these parameters based on weather conditions. In this study, we considered four weather variables-temperature, humidity, wind speed, and precipitation-as predictors of the time-varying parameters. For the gases and CB datasets, these weather variables were available in the dataset and directly used for forecasting. In contrast, for the SIR and CR models, we simulated the corresponding weather variables so that the ML model could be applied in the same way, enabling prediction of the time-varying parameters based on these simulated environmental conditions.

We used the Ornstein-Uhlenbeck (OU) process \cite{uhlenbeck1930theory} to simulate the time series of the weather variables. The OU process is represented by a stochastic differential equation (SDE) commonly used to model mean-reverting phenomena. The process is described by the equation
\begin{equation}
    dX(t) = -\theta_X \left( X(t) - X_{\text{mean}} \right) dt + \sigma_X \, dW(t) , 
    \label{OU-process}
\end{equation}
where \( X(t) \) represents the state variable (e.g., temperature) at time \( t \), and \( X_{\text{mean}} \) is the long-term mean around which the process fluctuates. The parameter \( \theta_X \) quantifies the rate of mean reversion, indicating how quickly deviations from the mean are corrected. The term \( \sigma_X \) represents the intensity of stochastic noise, and \( dW(t) \) is the Wiener process, which introduces randomness.

To simulate data from the OU process, we discretized the continuous-time equation using the Euler-Maruyama method. The discretized version of the SDE is given by

\[
X_{i+1} = X_i - \theta_X \left( X_i - X_{\text{mean}} \right) \Delta t + \sigma_X \sqrt{\Delta t} \, \xi_i ,
\]
where \( \Delta t \) is the time increment, \( \xi_i \) are independent standard normal random variables. 

The weather simulation model was designed with parameter values that closely mimic realistic dynamics for temperature, humidity, wind speed, and precipitation over the simulated period. We used a time step of \( \Delta t = 0.01 \), which provides sufficient granularity to capture variations in weather conditions over a total duration of 1500 days. The initial temperature was \( 20^\circ \mathrm{C} \), the initial humidity was \( 60\% \), the initial wind speed was \( 10 \, \mathrm{m/s} \), and the initial precipitation was \( 0.2 \, \mathrm{mm/day} \). The long-term mean values for each parameter were defined as \( T_{\text{mean}} = 0^\circ \mathrm{C} \), \( H_{\text{mean}} = 50\% \), \( W_{\text{mean}} = 5 \, \mathrm{m/s} \), and \( P_{\text{mean}} = 0.1 \, \mathrm{mm/day} \). The rates of mean reversion (\( \theta \)) were chosen as \( \theta_T = 0.1 \), \( \theta_H = 0.05 \), \( \theta_W = 0.2 \), and \( \theta_P = 0.1 \). The noise intensities were set as \( \sigma_T = 10 \), \( \sigma_H = 5 \), \( \sigma_W = 1 \), and \( \sigma_P = 0.05 \), corresponding to fluctuations in temperature, humidity, wind speed, and precipitation, respectively. Higher noise intensities for temperature and humidity reflect their typically greater variability compared to wind speed and precipitation. Additionally, boundaries were specified for each variable to prevent unrealistic values. Temperature was limited to the range \([-20, 35]^\circ \mathrm{C}\), humidity to \([0, 100]\%\), wind speed to \([0, 30] \, \mathrm{m/s}\), and precipitation to \([0, 100] \, \mathrm{mm/day}\).

\subsection{ML models, error metrics, and cross-validation}

Each variable's time series was first smoothed using a 30-point rolling window to reduce high-frequency noise. To account for differences in scale across variables, all series were normalized to the range $[0,1]$ using min--max scaling,
\[
X' = \frac{X - X_{\min}}{X_{\max} - X_{\min}},
\]
where $X'$ denotes the normalized variable and $X_{\min}$ and $X_{\max}$ are the minimum and maximum values of $X$, respectively.

As a case study, we employed a Random Forest (RF) regression model \cite{breiman2001random} to predict each time-varying parameter independently from four weather-related covariates: air temperature, relative humidity, wind speed, and precipitation. The RF model was implemented using the \texttt{RandomForest\allowbreak Regressor} class from \texttt{sklearn.ensemble} \cite{pedregosa2011scikit}. To ensure ensemble stability, we used 5000 trees (\texttt{n\_estimators} = 5000). Model complexity was controlled by restricting the maximum tree depth to 5 (\texttt{max\_depth}=5), requiring a minimum of 10 samples to split an internal node (\texttt{min\_samples\_\allowbreak split}=10), and enforcing at least 5 samples per leaf (\texttt{min\_samples\_leaf}=5). The weather variables served as predictors, and each time-varying parameter was treated as a separate target (Figure~\ref{model}e).

For each dataset, the time series was partitioned into training, validation, and testing segments. The testing period was divided into five folds of 30 days each, except for the CB dataset, which consisted of five monthly observations. Temporal ordering was strictly preserved. For each test fold, a validation fold of equal duration was selected immediately preceding the test period (Figure~\ref{train-test-valid}). The governing equations were fitted to the training data using equation~\eqref{data-learning-equation}, and forecasts were generated using equation~\eqref{forecasting-equation}.

Model selection was performed using an expanding-window time-series cross-validation strategy (Figure~\ref{train-test-valid}). The window length $w$ was varied from 7 to 28 days in increments of 7 (from 5 to 25 in increments of 5 for the CB dataset). Simultaneously, the number of time-varying parameters was varied from zero (fixed-parameter model) up to the full set of active parameters, with the bias term always included. Starting from an initial group of validation folds (e.g., folds 1-5), the validation window was progressively expanded (1-6, 1-7, \ldots, 1-9) (Figure~\ref{train-test-valid}). For each configuration, performance was assessed using the mean absolute error (MAE),
\[
\mathrm{MAE} = \frac{1}{n} \sum_{i=1}^{n} \left| \hat{y}_i - y_i \right|,
\]
where $y_i$ and $\hat{y}_i$ denote the observed and predicted values, respectively. The configuration minimizing the average validation MAE was selected and used to forecast the subsequent test fold.

To examine the influence of interval length and the number of time-varying parameters, simulations were performed on each test fold and the resulting MAEs were computed. Within each fold, the configuration yielding the lowest MAE was identified as the \emph{optimal configuration} and compared with the cross-validation-selected model. For all datasets, we additionally compared the proposed time-varying parameter model with a fixed-parameter baseline, in which the entire time series is represented by constant coefficients (Figure~\ref{model}b) \cite{rudy2017data}. In both cases, the regularization parameter was set to $\lambda=0.01$, the sparsity threshold to $\tau=0.001$, and the maximum number of iterations to 10,000.

To further benchmark our framework, we compared its forecasting performance against two standard machine learning baselines: a hybrid CNN--LSTM model \cite{shi2015convolutional, hochreiter1997long, bai2018empirical} and a Gradient Boosting Machine (GBM) \cite{friedman2001greedy}.

For the CNN--LSTM model, inputs were constructed using sliding windows of length $w$ days. For each target variable, the predictors consisted of the exogenous covariates over the same $w$-day interval, while the response comprised the corresponding $w$-day sequence of the target state, aligned in time. For the SIR and CR datasets, the covariates included air temperature (AT), wind speed (WS), precipitation (PC), and relative humidity (RH). For the gas concentration and CB datasets, all weather variables listed in Tables~\ref{gas_varibale_list} and~\ref{cb_variables_table} were used.

The CNN--LSTM architecture combines convolutional layers for feature extraction 
\cite{bai2018empirical} with recurrent memory units \cite{hochreiter1997long}, 
and was implemented in \texttt{TensorFlow/Keras} \cite{abadi2016tensorflow,chollet2015keras}. The architecture consisted of a one-dimensional causal convolutional layer,
$\mathrm{Conv1D}(\text{filters}=F,\ \text{kernel size}=K,\ \text{activation}=\mathrm{ReLU})$,
followed by a max-pooling layer with pool size 2, an LSTM layer with $U$ hidden units, a dropout layer with rate $d$, and two fully connected layers: a ReLU-activated dense layer with $D$ units and a final linear output layer producing $w$ outputs (with $w=30$ in our experiments). Hyperparameters $(F, K, U, D, d)$ and the learning rate were selected using a \texttt{KerasTuner} \cite{omalley2019kerastuner} random search to minimize the validation mean squared error (MSE). Training employed the Adam optimizer \cite{kingma2014adam} with early stopping \cite{prechelt1998early} (patience of 10 epochs and restoration of the best-performing weights), using a non-shuffled time-series cross-validation scheme. Within each fold, the most recent block of the training data, matching the test length, was reserved for validation.

As an additional baseline, we implemented a GBM model using the same set of predictors as in the CNN--LSTM setup. The GBM input consisted of the exogenous covariates at each time step, and the response was the corresponding target value. The model was trained using 1000 boosting trees, a learning rate of 0.01, a maximum tree depth of 30, and a subsampling ratio of 0.9 to mitigate overfitting. Each terminal node was constrained to contain at least 10 samples. The squared error loss function was used, corresponding to a Gaussian regression framework.

\begin{figure}[hbt!]
    \centering
    \includegraphics[width=1\linewidth]{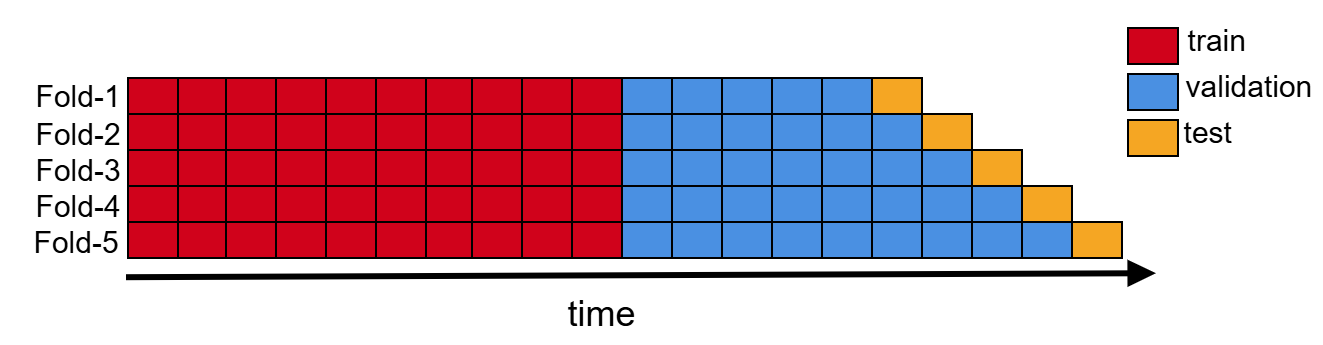}
    \caption{Illustration of the sequential, non-overlapping training, testing, and validation scheme used in this study. The training period (red) covers the majority of the time series. The validation window blocks (blue), equal in length to the test window, are positioned immediately before the test period (yellow). For each fold of the test, only the validation window is expanded forward while the training set remains fixed, producing multiple validation folds without expanding or overlapping the training data.}
    \label{train-test-valid}
\end{figure}

\section{Results}

\subsection{Theoretical guarantees}
\label{subsec:theory}

In this subsection, we establish theoretical guarantees for three key components of our
framework: (i) the propagation
of errors from the learned equations and the forecasted time-varying parameters to the
forecasted states, (ii) error bounds to the split structure of learned and forecasted equations, and (iii) conditions under which the time-varying parameterisation admits
strictly tighter finite-horizon forecast error bounds than any fixed-parameter model built on
the same library. 

Our analysis builds on classical ODE stability theory and sparse equation discovery results \cite{Teschl2012, ZhangSchaeffer2019, brunton2016discovering}. We extend these foundations by deriving finite-horizon forecast error bounds for time-varying parameter models and quantifying error propagation from ML-forecasted parameters to state trajectories.

\subsubsection{Finite-horizon forecast error bounds}

We now quantify how errors in the learned governing equations and in the forecasted
time-varying parameters propagate to errors in the forecasted trajectories. Let
$\mathbf{f}:[0, H]\times\mathbb{R}^n\to\mathbb{R}^n$ denotes the (unknown) right-hand side of the
true system,
\[
\dot{\mathbf{x}}(t) = \mathbf{f}(t,\mathbf{x}(t)), \qquad t\in[0,H],
\]
where $H>0$ is a finite time horizon. We assume that $\mathbf{f}(\cdot)$ can be uniformly approximated on a compact region by a representation
constructed from our candidate library $\boldsymbol{\Theta}(\cdot)$.

\begin{assumption}[Lipschitz dynamics and library approximation \cite{Teschl2012, ZhangSchaeffer2019, brunton2016discovering}]
\label{ass:lipschitz}
Let $D\subset\mathbb{R}^n$ be a compact set containing the trajectories of interest.
Assume:
\begin{enumerate}
  \item[(a)] (Lipschitz dynamics) For each $t\in[0,H]$, the map $x\mapsto f(t,x)$ is
  globally Lipschitz on $D$ with constant $L>0$:
  \[
  \|f(t,x)-f(t,y)\| \le L \|x-y\| \quad\text{for all } x,y\in D.
  \]
  \item[(b)] (Library approximation) There exist coefficient functions
  $\xi^\star:[0,H]\to\mathbb{R}^p$ and a library vector
  $\Theta:\,D\times[0,H]\to\mathbb{R}^p$ such that
  \[
  \bigl\|f(t,x) - \Theta(x,u(t))\,\xi^\star(t)\bigr\| \le \varepsilon_{\mathrm{lib}}
  \quad\text{for all } (t,x)\in[0,H]\times D,
  \]
  for some $\varepsilon_{\mathrm{lib}}\ge 0$.
  \item[(c)] (Bounded library) There exists $B_\Theta>0$ such that
  $\|\Theta(x,u(t))\|_{\mathrm{op}} \le B_\Theta$ for all $(t,x)\in[0,H]\times D$,
  where $\|\cdot\|_{\mathrm{op}}$ denotes the denotes the spectral norm.
\end{enumerate}
\end{assumption}

Our forecasting pipeline produces estimated time-varying coefficients
$\hat\xi(t)$ (via STRR on sliding windows, followed by ML model forecasts), and uses the model
\[
\hat f(t,x) := \Theta(x,u(t))\,\hat\xi(t).
\]
We assume that the ML model forecasts approximate the ideal coefficient functions $\xi^\star(t)$
with a uniform bound on the forecast horizon.

\begin{assumption}[Accuracy of coefficient forecasts]
\label{ass:param-err}
There exists $\varepsilon_{\mathrm{par}}\ge 0$ such that
\[
\sup_{t\in[0,H]} \bigl\|\hat\xi(t) - \xi^\star(t)\bigr\|
\;\le\; \varepsilon_{\mathrm{par}}.
\]
\end{assumption}

Under Assumptions~\ref{ass:lipschitz}-\ref{ass:param-err}, the difference between the true
vector field $f$ and the learned-forecasted vector field $\hat f$ is uniformly bounded:
\[
\bigl\|f(t,x) - \hat f(t,x)\bigr\|
\le \varepsilon_{\mathrm{lib}} + B_\Theta \varepsilon_{\mathrm{par}}
=: \delta
\quad\text{for all } (t,x)\in[0,H]\times D.
\]

We now quantify the impact of this bound on the state trajectories.

\begin{theorem}[Finite-horizon forecast error bound]
\label{thm:ode-error}
Suppose Assumptions~\ref{ass:lipschitz} and~\ref{ass:param-err} hold and let
$\delta = \varepsilon_{\mathrm{lib}} + B_{\Theta}\varepsilon_{\mathrm{par}}$.
Let $\mathbf{x}:[0,H]\to D$ solve the true system
$\dot{\mathbf{x}}(t)=\mathbf{f}(t,\mathbf{x}(t))$ with initial condition $\mathbf{x}(0)=\mathbf{x}_0$, and let
$\hat{\mathbf{x}}:[0,H]\to D$ solve the learned system
$\dot{\hat{\mathbf{x}}}(t)=\hat f(t,\hat{\mathbf{x}}(t))$ with the same initial condition
$\hat{\mathbf{x}}(0)=\mathbf{x}_0$.
Then, for all $t\in[0,H]$,
\[
\|x(t)-\hat x(t)\|
\;\le\; \frac{\delta}{L}\bigl(e^{Lt}-1\bigr)
\;\le\; \frac{\delta}{L}\bigl(e^{LH}-1\bigr).
\]
\end{theorem}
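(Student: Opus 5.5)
The plan is the standard Grönwall comparison argument. Set $e(t) := \mathbf{x}(t)-\hat{\mathbf{x}}(t)$, so $e(0)=0$. Both trajectories stay in $D$ by hypothesis, so Assumption~\ref{ass:lipschitz}(a) and the uniform vector-field bound $\|f(t,x)-\hat f(t,x)\|\le\delta$ derived just before the theorem apply along them. Subtracting the two integral equations gives
\[
e(t) = \int_0^t \Bigl( f(s,\mathbf{x}(s)) - f(s,\hat{\mathbf{x}}(s)) \Bigr)\,ds + \int_0^t \Bigl( f(s,\hat{\mathbf{x}}(s)) - \hat f(s,\hat{\mathbf{x}}(s)) \Bigr)\,ds .
\]
The first integrand is bounded using the Lipschitz constant $L$, and the second using $\delta$. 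This yields the integral inequality
\[
\|e(t)\| \le \delta t + L\int_0^t \|e(s)\|\,ds, \qquad t\in[0,H].
\]

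Next I will close this inequality with Grönwall's lemma, in the form that handles an affine forcing term. Define $\phi(t) := \int_0^t \|e(s)\|\,ds$, which is $C^1$ because $e$ is continuous. Then $\phi'(t) \le \delta t + L\phi(t)$, so $\frac{d}{dt}\bigl(e^{-Lt}\phi(t)\bigr)\le \delta t e^{-Lt}$. Integrating from $0$ gives
\[
\phi(t) \le \frac{\delta}{L^2}\bigl(e^{Lt}-1-Lt\bigr).
\]
Substituting back, $\|e(t)\| \le \delta t + L\phi(t) \le \frac{\delta}{L}(e^{Lt}-1)$, which is the first claimed bound.

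An alternative route is to apply the Grönwall inequality directly to the upper Dini derivative of $\|e(t)\|$, using $D^+\|e\|\le \|\dot e\|\le L\|e\|+\delta$. Comparing with the solution of $y'=Ly+\delta$, $y(0)=0$, gives the same bound. The integral form is cleaner because it avoids differentiating the norm.

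The second inequality holds because $t\mapsto e^{Lt}-1$ is increasing and $t\le H$.

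I do not expect a genuine obstacle. The only care needed is the following. The Lipschitz estimate in Assumption~\ref{ass:lipschitz}(a) concerns the true field $f$, not $\hat f$, so I must split the difference as above, with $\hat{\mathbf{x}}$ as the common evaluation point in the middle. This split lets the $\delta$ term absorb all model error, and it means no Lipschitz constant for $\hat f$ is needed. It also matters that both solutions are assumed to remain in $D$ on $[0,H]$, since the assumptions only hold on $D$. Measurability and integrability of $s\mapsto f(s,\cdot)$ along the trajectories are implicit in the solutions existing in the integral sense, and I will note this without further justification.
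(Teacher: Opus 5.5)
Your proof is correct and follows the paper's argument: the same splitting of $f(t,\mathbf{x})-\hat f(t,\hat{\mathbf{x}})$ through the intermediate point $\hat{\mathbf{x}}$, giving a linear inequality with constant forcing $\delta$ that is closed by Gr\"onwall's inequality. The only difference is that the paper applies the differential form directly to $y(t)=\|e(t)\|$ (your Dini-derivative alternative), whereas your integral version via $\phi(t)=\int_0^t\|e(s)\|\,ds$ never has to differentiate the norm.
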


\begin{proof}
Define the error $e(t):=x(t)-\hat x(t)$. Differentiating gives
\[
\dot e(t) = f(t,x(t)) - \hat f(t,\hat x(t))
= \bigl(f(t,x(t)) - f(t,\hat x(t))\bigr)
+ \bigl(f(t,\hat x(t)) - \hat f(t,\hat x(t))\bigr).
\]
Taking norms and applying Assumption~\ref{ass:lipschitz}(a) and the definition of $\delta$
yields
\[
\|\dot e(t)\|
\le L \|e(t)\| + \delta.
\]
Let $y(t):=\|e(t)\|$. Then $y(0)=0$ and
\[
\dot y(t) \le L y(t) + \delta
\quad\text{for a.e. } t\in[0,H].
\]
By the differential form of Gr\"onwall's inequality \cite{gronwall1919note, Teschl2012},
\[
y(t) \le \frac{\delta}{L}\bigl(e^{Lt}-1\bigr)
\quad\text{for all } t\in[0,H],
\]
which proves the stated bound.
\end{proof}

Theorem~\ref{thm:ode-error} is a standard ODE stability result specialised to our
library-based representation. It shows that all sources of error in our pipeline enter the
state forecast through the single quantity $\delta = \varepsilon_{\mathrm{lib}} +
B_\Theta \varepsilon_{\mathrm{par}}$, combining the approximation error of the library
representation and the error of the time-varying parameter forecasts.

%%%%%%%%%%%%%%%%%%%%%%%%%%%%%%%%%%%%%%%%%%%%%%%%%%%%%%%

\subsubsection{Bounds to the split structure of learned and forecasted equations}

We now specialize the baseline bound to the learned forecasting model in this paper.
For reference, we restate the split form of the learned dynamics \eqref{data-learning-equation} and its forecasting counterpart \eqref{forecasting-equation}.
Let
\begin{equation}
\label{eq:tv_model}
\dot{\mathbf{x}}(t)=\mathbf{f}_{\mathrm{tv}}(t,\mathbf{x}(t)):= \Theta_{\mathrm{top}N}(x(t),u(t))\,\xi_{\mathrm{tv}}(t)
+ \Theta_{\mathrm{fix}}(x(t),u(t))\,\bar\xi,
\end{equation}
where $\Theta_{\mathrm{fix}}:=\Theta_{\mathrm{active}}\setminus \Theta_{\mathrm{top}N}$ and $\bar\xi$ denotes the
fixed coefficient vector for the non-top-$n$ active terms. The forecast model is
\begin{equation}
\label{eq:tv_forecast}
\dot{\hat{\mathbf{x}}}(t)=\hat{\mathbf{f}}_{\mathrm{tv}}(t,\hat {\mathbf{x}}(t))
:= \Theta_{\mathrm{top}N}(\hat x(t),u(t))\,\hat\xi_{\mathrm{tv}}(t)
+ \Theta_{\mathrm{fix}}(\hat x(t),u(t))\,\bar\xi.
\end{equation}

\begin{assumption}[Uniform boundedness of the top-$n$ library block]
\label{ass:topn_bounded}
There exists $B_{\mathrm{top}N}>0$ such that
\[
\sup_{t\in[0,H]}\ \sup_{x\in D}\ \|\Theta_{\mathrm{top}N}(x,u(t))\|_{\mathrm{op}}\le B_{\mathrm{top}N}.
\]
\end{assumption}

\begin{assumption}[Uniform accuracy of time-varying coefficient forecasts]
\label{ass:topn_forecast}
There exists $\varepsilon_{\mathrm{top}N}\ge 0$ such that
\[
\sup_{t\in[0,H]}\ \|\hat\xi_{\mathrm{tv}}(t)-\xi_{\mathrm{tv}}(t)\|\le \varepsilon_{\mathrm{top}N}.
\]
\end{assumption}

\begin{theorem}[Split-model forecast bound: only $\Theta_{\mathrm{top}N}$ forecast errors enter]
\label{thm:split_only_topn}
Assume Assumption~\ref{ass:lipschitz}a holds for $f_{\mathrm{tv}}$ in \eqref{eq:tv_model}, and
Assumptions~\ref{ass:topn_bounded}--\ref{ass:topn_forecast} hold. Let $\dot {\mathbf{x}}$ solve \eqref{eq:tv_model}
and $\widehat{\dot {\mathbf{x}}}$ solve \eqref{eq:tv_forecast} with the same initial condition $\mathbf{x}(0)=\hat{\mathbf{x}}(0)$.
If $x(t),\hat x(t)\in D$ for all $t\in[0,H]$, then for all $t\in[0,H]$,
\[
\|x(t)-\hat x(t)\|
\le \frac{B_{\mathrm{top}N}\varepsilon_{\mathrm{top}N}}{L}\bigl(e^{Lt}-1\bigr)
\le \frac{B_{\mathrm{top}N}\varepsilon_{\mathrm{top}N}}{L}\bigl(e^{LH}-1\bigr).
\]
\end{theorem}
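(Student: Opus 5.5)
The plan is to reduce Theorem~\ref{thm:split_only_topn} to Theorem~\ref{thm:ode-error} by exhibiting a uniform bound on the vector-field mismatch $\|f_{\mathrm{tv}}(t,x)-\hat f_{\mathrm{tv}}(t,x)\|$ on $[0,H]\times D$, and then repeating the Gr\"onwall argument verbatim. Comparing \eqref{eq:tv_model} and \eqref{eq:tv_forecast} at the same state $x$, the fixed block $\Theta_{\mathrm{fix}}(x,u(t))\bar\xi$ appears identically in both, since the constant coefficients are not forecast. It therefore cancels exactly, leaving
\[
f_{\mathrm{tv}}(t,x)-\hat f_{\mathrm{tv}}(t,x)=\Theta_{\mathrm{top}N}(x,u(t))\bigl(\xi_{\mathrm{tv}}(t)-\hat\xi_{\mathrm{tv}}(t)\bigr).
\]
By the definition of the operator norm together with Assumptions~\ref{ass:topn_bounded} and~\ref{ass:topn_forecast}, this is bounded by $B_{\mathrm{top}N}\varepsilon_{\mathrm{top}N}=:\delta_{\mathrm{top}}$ uniformly in $(t,x)\in[0,H]\times D$.

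Next I set $e(t)=x(t)-\hat x(t)$ and split
\[
\dot e=\bigl(f_{\mathrm{tv}}(t,x)-f_{\mathrm{tv}}(t,\hat x)\bigr)+\bigl(f_{\mathrm{tv}}(t,\hat x)-\hat f_{\mathrm{tv}}(t,\hat x)\bigr),
\]
exactly as in the proof of Theorem~\ref{thm:ode-error}. The first term is bounded by $L\|e\|$ using Assumption~\ref{ass:lipschitz}(a) applied to $f_{\mathrm{tv}}$, which is legitimate because both trajectories stay in $D$ by hypothesis. The second term is bounded by $\delta_{\mathrm{top}}$ because $\hat x(t)\in D$. This gives $\|\dot e\|\le L\|e\|+\delta_{\mathrm{top}}$ with $e(0)=0$. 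Gr\"onwall's inequality then yields $\|e(t)\|\le (\delta_{\mathrm{top}}/L)(e^{Lt}-1)$, and the second inequality follows from monotonicity of $e^{Lt}$ in $t$.

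The only delicate point is passing from $\|\dot e\|$ to the derivative of $y=\|e\|$, since $y$ need not be differentiable where $e=0$. I would handle this as the earlier proof implicitly does. One option is to use the upper Dini derivative, $D^+y\le\|\dot e\|$. A cleaner alternative is to work with the integral form: $\|e(t)\|\le\int_0^t(L\|e(s)\|+\delta_{\mathrm{top}})\,ds$, followed by the integral Gr\"onwall inequality, which avoids differentiability issues entirely. Measurability of $\hat\xi_{\mathrm{tv}}$ (for example, piecewise continuity) is needed so that solutions exist in the Carath\'eodory sense; I would state this as implicit in the hypothesis that the solutions exist.
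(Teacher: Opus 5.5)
Your proposal is correct and is essentially the paper's proof: the fixed block $\Theta_{\mathrm{fix}}\bar\xi$ cancels, the remaining mismatch $\Theta_{\mathrm{top}N}(\xi_{\mathrm{tv}}-\hat\xi_{\mathrm{tv}})$ is bounded by $B_{\mathrm{top}N}\varepsilon_{\mathrm{top}N}$, and the Gr\"onwall argument of Theorem~\ref{thm:ode-error} is reused with $\delta=B_{\mathrm{top}N}\varepsilon_{\mathrm{top}N}$. Re-running that argument instead of citing the theorem is, if anything, slightly cleaner, because Theorem~\ref{thm:ode-error} is formally stated under Assumptions~\ref{ass:lipschitz}(b)--(c) and~\ref{ass:param-err}, which are not assumed here. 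One caution on the integral route: compare with $v(t)=\int_0^t(L\|e(s)\|+\delta)\,ds$, which satisfies $\dot v\le Lv+\delta$, to get the sharp $\frac{\delta}{L}(e^{Lt}-1)$ rather than the cruder $\delta t\,e^{Lt}$.
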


\begin{proof}
For any $(t,x)\in[0,H]\times D$, the fixed block cancels exactly:
\[
f_{\mathrm{tv}}(t,x)-\hat f_{\mathrm{tv}}(t,x)
=\Theta_{\mathrm{top}N}(x,u(t))\bigl(\xi_{\mathrm{tv}}(t)-\hat\xi_{\mathrm{tv}}(t)\bigr).
\]
Hence, by Assumptions~\ref{ass:topn_bounded}--\ref{ass:topn_forecast},
\[
\sup_{(t,x)\in[0,H]\times D}\|f_{\mathrm{tv}}(t,x)-\hat f_{\mathrm{tv}}(t,x)\|
\le B_{\mathrm{top}N}\varepsilon_{\mathrm{top}N}.
\]
Apply Theorem~\ref{thm:ode-error} with $\delta=B_{\mathrm{top}N}\varepsilon_{\mathrm{top}N}$ to obtain the stated bound.
\end{proof}

\begin{remark}
\label{rem:only_topn}
Theorem~\ref{thm:split_only_topn} is specific to the split structure of \eqref{eq:tv_model}--\eqref{eq:tv_forecast}:
the fixed coefficients $\bar\xi$ do not appear in the forcing term of the error bound. Consequently, improving the
forecast accuracy of $\hat\xi_{\mathrm{tv}}$ for the selected $\Theta_{\mathrm{top}N}$ terms directly tightens the
finite-horizon state forecast bound, while the fixed block affects the bound only indirectly through the Lipschitz
constant $L$ of the resulting vector field.
\end{remark}

%%%%%%%%%%%%%%%%%%%%%%%%%%%%%%%%%%%%%%%%%%%%%%%%%%%%%%%

\subsubsection{When time-varying parameters outperform fixed parameters}

Finally, we compare our time-varying parameterisation with a fixed-parameter model. Let $\mathcal{C}_{\mathrm{const}}$ denote the class of constant coefficient vectors
\[
\mathcal{C}_{\mathrm{const}}:=\{\xi:[0,H]\to\mathbb{R}^p:\ \xi(t)\equiv c\},\]
and let $\mathcal{C}_{\mathrm{tv}}^{(m)}$ denote the class of piecewise
constant coefficient functions with at most $m$ subintervals:

\[\mathcal{C}_{\mathrm{tv}}^{(m)}
:=\Bigl\{\xi:[0,H]\to\mathbb{R}^p:\ \xi(t)=c_r\ \text{for }t\in I_r,\ r=1,\dots,m\Bigr\},
\]
where $\{I_r\}_{r=1}^m$ is a partition of $[0,H]$ into $m$ intervals. We quantify how well these classes can approximate the ideal coefficient trajectory
$\xi^\star(t)$.

\begin{definition}[Best uniform coefficient approximation errors]
Let
\[
E_{\mathrm{const}}
:= \inf_{\xi\in\mathcal{C}_{\mathrm{const}}}
\sup_{t\in[0,H]} \|\xi^\star(t)-\xi(t)\|,
\qquad
E_{\mathrm{tv}}^{(m)}
:= \inf_{\xi\in\mathcal{C}_{\mathrm{tv}}^{(m)}}
\sup_{t\in[0,H]} \|\xi^\star(t)-\xi(t)\|.
\]
\end{definition}

Clearly $\mathcal{C}_{\mathrm{const}}\subset\mathcal{C}_{\mathrm{tv}}^{(m)}$, so
$E_{\mathrm{tv}}^{(m)}\le E_{\mathrm{const}}$ for every $m$. The next lemma shows that, under
mild regularity, the inequality is strict for sufficiently large $m$ whenever
$\xi^\star(t)$ is truly time-varying. It is a constructive instance of the fact that
step functions are dense in $C([0,H])$ with respect to the supremum norm
\cite[Chapter~2]{Rudin1987}.

\begin{assumption}[Regularity and non-stationarity of the true coefficients]
\label{ass:coeff-cont}
Each component of $\xi^\star(t)$ is continuous on $[0,H]$, and $\xi^\star$ is not constant
on $[0,H]$ (that is, there exist $t_1,t_2$ with $\xi^\star(t_1)\neq \xi^\star(t_2)$).
\end{assumption}

\begin{lemma}[Step-function approximation beats constant approximation]
\label{lem:piecewise-better}
Suppose Assumption~\ref{ass:coeff-cont} holds. Then $E_{\mathrm{const}}>0$, and for every
$0<\varepsilon<E_{\mathrm{const}}$ there exists an integer $m_\varepsilon\ge 1$ such that,
for all $m\ge m_\varepsilon$,
\[
E_{\mathrm{tv}}^{(m)} \le \varepsilon < E_{\mathrm{const}}.
\]
In particular, for all sufficiently large $m$, we have
$E_{\mathrm{tv}}^{(m)} < E_{\mathrm{const}}$.
\end{lemma}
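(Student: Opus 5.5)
The proof splits into two parts: first showing $E_{\mathrm{const}}>0$, then building an explicit step function via uniform continuity.

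\emph{Positivity of $E_{\mathrm{const}}$.} By Assumption~\ref{ass:coeff-cont} there are $t_1,t_2$ with $\xi^\star(t_1)\neq\xi^\star(t_2)$. For any constant $c\in\mathbb{R}^p$, the triangle inequality gives
\[
\|\xi^\star(t_1)-\xi^\star(t_2)\|\le \|\xi^\star(t_1)-c\|+\|c-\xi^\star(t_2)\|\le 2\sup_{t\in[0,H]}\|\xi^\star(t)-c\|.
\]
Taking the infimum over $c$ yields $E_{\mathrm{const}}\ge \tfrac12\|\xi^\star(t_1)-\xi^\star(t_2)\|>0$. Finiteness of $E_{\mathrm{const}}$ also follows, since $\xi^\star$ is continuous on a compact interval and hence bounded (take $c=0$).

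\emph{Construction of the step approximant.} Fix $0<\varepsilon<E_{\mathrm{const}}$.
\begin{enumerate}
\item Each component of $\xi^\star$ is continuous on the compact set $[0,H]$, so $\xi^\star$ is uniformly continuous in the chosen norm. Hence there is $\eta>0$ with $\|\xi^\star(s)-\xi^\star(t)\|\le\varepsilon$ whenever $|s-t|\le\eta$.
\item Set $m_\varepsilon:=\lceil H/\eta\rceil$. For any $m\ge m_\varepsilon$, partition $[0,H]$ into $m$ equal intervals $I_r$, using half-open intervals $[\,(r-1)H/m,\,rH/m)$ and closing the last one. Each has length $H/m\le\eta$.
\item Define $\xi_m(t):=\xi^\star(s_r)$ for $t\in I_r$, where $s_r$ is, say, the left endpoint of $I_r$. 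Then $\xi_m\in\mathcal{C}_{\mathrm{tv}}^{(m)}$.
\item For $t\in I_r$ we have $|t-s_r|\le\eta$, so $\|\xi^\star(t)-\xi_m(t)\|\le\varepsilon$. Therefore $E_{\mathrm{tv}}^{(m)}\le\sup_t\|\xi^\star(t)-\xi_m(t)\|\le\varepsilon<E_{\mathrm{const}}$.
\end{enumerate}
The final claim follows by taking any such $\varepsilon$, for instance $\varepsilon=E_{\mathrm{const}}/2$.

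\emph{Main obstacle.} The only real subtlety is definitional. The class $\mathcal{C}_{\mathrm{tv}}^{(m)}$ says "at most $m$ subintervals", so it must be read as allowing a partition into exactly $m$ intervals for every $m\ge m_\varepsilon$; the equal-length partition satisfies this. Uniform continuity of the vector-valued map follows from componentwise uniform continuity together with the equivalence of norms on $\mathbb{R}^p$.
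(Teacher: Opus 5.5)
Your proof is correct and follows essentially the same route as the paper: uniform continuity of $\xi^\star$ on $[0,H]$ yields a step function sampled at reference points of a fine partition, and the triangle inequality gives $E_{\mathrm{const}}\ge \tfrac12\|\xi^\star(t_1)-\xi^\star(t_2)\|>0$. Your version is slightly tidier in two places. You build an equal partition directly for every $m\ge m_\varepsilon$, whereas the paper only asserts that the bound persists for larger $m$. You also use the non-strict form $|s-t|\le\eta$, which avoids the paper's small mismatch between $|t-s|<\delta$ and intervals of length at most $\delta$.
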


\begin{proof}
Because $\xi^\star$ is continuous on the compact interval $[0,H]$, it is uniformly
continuous. Hence, for any $\varepsilon>0$, there exists $\delta>0$ such that
$\|\xi^\star(t)-\xi^\star(s)\|<\varepsilon$ whenever $|t-s|<\delta$.

Given such a $\delta$, choose $m_\varepsilon$ large enough that a partition of $[0,H]$
into $m_\varepsilon$ subintervals has each interval of length at most $\delta$. Define a
piecewise constant function $\tilde\xi\in\mathcal{C}_{\mathrm{tv}}^{(m_\varepsilon)}$ by
setting $\tilde\xi(t)$ equal to $\xi^\star$ evaluated at a reference point in each
subinterval. By construction,
\[
\sup_{t\in[0,H]} \|\xi^\star(t)-\tilde\xi(t)\| \le \varepsilon,
\]
so $E_{\mathrm{tv}}^{(m_\varepsilon)}\le\varepsilon$ and the same bound holds for all
$m\ge m_\varepsilon$.

To show $E_{\mathrm{const}}>0$, note that by Assumption~\ref{ass:coeff-cont} there exist
$t_1,t_2$ such that $\xi^\star(t_1)\neq \xi^\star(t_2)$. Let
\[
V := \|\xi^\star(t_1)-\xi^\star(t_2)\| > 0.
\]
For any constant vector $c\in\mathbb{R}^p$,
\[
V = \|\xi^\star(t_1)-\xi^\star(t_2)\|
\le \|\xi^\star(t_1)-c\| + \|\xi^\star(t_2)-c\|
\le 2 \sup_{t\in[0,H]} \|\xi^\star(t)-c\|,
\]
so $\sup_{t\in[0,H]} \|\xi^\star(t)-c\| \ge V/2$ for every $c$. Taking the infimum over
$c$ yields $E_{\mathrm{const}}\ge V/2>0$. Choosing any
$0<\varepsilon<E_{\mathrm{const}}$ then gives the claim.
\end{proof}

We now combine Lemma~\ref{lem:piecewise-better} with Theorem~\ref{thm:ode-error} to obtain
a sufficient condition under which the time-varying parameterisation yields a strictly
smaller finite-horizon forecast error bound than any fixed-parameter model.

\begin{theorem}[When time-varying parameters dominate fixed parameters]
\label{thm:tv-vs-const}
Suppose Assumptions~\ref{ass:lipschitz}, \ref{ass:param-err}, and
\ref{ass:coeff-cont} hold. Consider two idealised models built on the same library
$\Theta$:
\begin{itemize}
  \item a fixed-parameter model with coefficient trajectory
  $\xi_{\mathrm{const}}^\star\in\mathcal{C}_{\mathrm{const}}$ achieving the error
  $E_{\mathrm{const}}$,
  \item a time-varying model with $m$ intervals and coefficient trajectory
  $\xi_{\mathrm{tv}}^\star\in\mathcal{C}_{\mathrm{tv}}^{(m)}$ achieving the error
  $E_{\mathrm{tv}}^{(m)}$.
\end{itemize}
let $\hat{\mathbf{x}}_{\mathrm{tv}}^{(m)}(t)$ denote the trajectory of the time-varying model
\[
\dot{\hat{\mathbf{x}}}(t)
= \boldsymbol{\Theta}(\hat{\mathbf{x}}(t),\mathbf{u}(t))\,\boldsymbol{\xi}_{\mathrm{tv}}^\star(t),
\qquad \hat{\mathbf{x}}_{\mathrm{tv}}^{(m)}(0)=\mathbf{x}_0,
\]
with $\boldsymbol{\xi}_{\mathrm{tv}}^\star \in \mathcal{C}_{\mathrm{tv}}^{(m)}$. For these two models, define
\[
\delta_{\mathrm{const}} := \varepsilon_{\mathrm{lib}} + B_\Theta E_{\mathrm{const}},\qquad
\delta_{\mathrm{tv}}^{(m)} := \varepsilon_{\mathrm{lib}} + B_\Theta E_{\mathrm{tv}}^{(m)}.
\]
Then, for any $H>0$ and any $m\ge m_\varepsilon$ with
$E_{\mathrm{tv}}^{(m)} < E_{\mathrm{const}}$, the finite-horizon forecast error bound of the time-varying model satisfies
\[
\sup_{t\in[0,H]} \|x(t)-\hat x_{\mathrm{tv}}^{(m)}(t)\|
\;\le\;
\frac{\delta_{\mathrm{tv}}^{(m)}}{L}\bigl(e^{LH}-1\bigr)
<
\frac{\delta_{\mathrm{const}}}{L}\bigl(e^{LH}-1\bigr)
\;\ge\;
\sup_{t\in[0,H]} \|x(t)-\hat x_{\mathrm{const}}(t)\|.
\]
In particular, for all sufficiently large $m$, the time-varying parameterisation admits a
strictly smaller worst-case finite-horizon forecast error bound than any fixed-parameter
model based on the same library.
\end{theorem}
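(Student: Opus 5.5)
The proof will reduce the comparison to two applications of Theorem~\ref{thm:ode-error}, one for each idealised model, together with Lemma~\ref{lem:piecewise-better} for the strict inequality between the two forcing constants. The time-varying model uses $\hat f_{\mathrm{tv}}(t,x)=\Theta(x,u(t))\,\xi^\star_{\mathrm{tv}}(t)$, where $\xi^\star_{\mathrm{tv}}$ attains (or, if the infimum is not attained, comes arbitrarily close to) $E^{(m)}_{\mathrm{tv}}$. For every $(t,x)\in[0,H]\times D$ I will bound the vector-field mismatch by the triangle inequality:
\[
\|f(t,x)-\hat f_{\mathrm{tv}}(t,x)\|\le \|f(t,x)-\Theta(x,u(t))\xi^\star(t)\|+\|\Theta(x,u(t))\|_{\mathrm{op}}\|\xi^\star(t)-\xi^\star_{\mathrm{tv}}(t)\|\le \varepsilon_{\mathrm{lib}}+B_\Theta E^{(m)}_{\mathrm{tv}}=\delta^{(m)}_{\mathrm{tv}},
\]
using Assumption~\ref{ass:lipschitz}(b),(c). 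This is exactly the role played by $\varepsilon_{\mathrm{par}}$ in Theorem~\ref{thm:ode-error}, with $\varepsilon_{\mathrm{par}}$ replaced by $E^{(m)}_{\mathrm{tv}}$. Invoking Theorem~\ref{thm:ode-error} (its Gr\"onwall proof needs only a uniform bound $\delta$ on the mismatch) will give
\[
\sup_{t}\|x(t)-\hat x^{(m)}_{\mathrm{tv}}(t)\|\le \frac{\delta^{(m)}_{\mathrm{tv}}}{L}\bigl(e^{LH}-1\bigr).
\]
The same argument applied to $\xi^\star_{\mathrm{const}}$ will give
\[
\sup_{t}\|x(t)-\hat x_{\mathrm{const}}(t)\|\le \frac{\delta_{\mathrm{const}}}{L}\bigl(e^{LH}-1\bigr).
\]
This is the last relation in the displayed chain, read as ``the constant model's error is bounded by its own bound''; the printed $\ge$ should be understood this way.

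For the middle, strict inequality I will use Lemma~\ref{lem:piecewise-better}. Under Assumption~\ref{ass:coeff-cont} the lemma gives $E_{\mathrm{const}}>0$ and, for $m\ge m_\varepsilon$, $E^{(m)}_{\mathrm{tv}}\le\varepsilon<E_{\mathrm{const}}$. Since $B_\Theta>0$, $L>0$, and $e^{LH}-1>0$ for $H>0$, multiplying through preserves strictness, so $\delta^{(m)}_{\mathrm{tv}}<\delta_{\mathrm{const}}$ and the bounds are strictly ordered. The final sentence, about \emph{any} fixed-parameter model, follows because $E_{\mathrm{const}}$ is an infimum over all constants $c$. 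Any fixed model $c$ therefore has a forcing constant at least $\delta_{\mathrm{const}}$, and hence a bound at least as large.

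The main obstacle is a technical one: Theorem~\ref{thm:ode-error} requires both trajectories to remain in $D$, and $\xi^\star_{\mathrm{tv}}$ is discontinuous at interval endpoints. I will handle the first by stating, as the theorem implicitly does, that the idealised trajectories are assumed to lie in $D$. For the second, I will solve in the Carathéodory sense interval by interval; the Gr\"onwall step already holds ``for a.e.\ $t$'', and $\|e(t)\|$ remains absolutely continuous across the finitely many breakpoints. If the infima are not attained, I will replace them by near-minimisers whose errors lie strictly between $E^{(m)}_{\mathrm{tv}}$ and $E_{\mathrm{const}}$, which keeps the strict inequality intact.
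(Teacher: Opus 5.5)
Your proposal is correct and follows the paper's proof: Lemma~\ref{lem:piecewise-better} gives $E_{\mathrm{tv}}^{(m)}<E_{\mathrm{const}}$, hence $\delta_{\mathrm{tv}}^{(m)}<\delta_{\mathrm{const}}$. Theorem~\ref{thm:ode-error} is then applied separately to each idealised model, with $\varepsilon_{\mathrm{par}}$ replaced by that model's uniform coefficient error, exactly as you do. Your additions (the explicit triangle-inequality bound, trajectories staying in $D$, Carath\'eodory solutions across breakpoints) only make explicit what the paper leaves implicit. The near-minimiser fallback is unnecessary, since the statement posits that the errors are attained; it would also yield the first bound only with a slightly inflated $\delta$.
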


\begin{proof}
By Lemma~\ref{lem:piecewise-better}, for all sufficiently large $m$ there exists a
time-varying coefficient trajectory $\xi_{\mathrm{tv}}^\star\in\mathcal{C}_{\mathrm{tv}}^{(m)}$
with $E_{\mathrm{tv}}^{(m)} < E_{\mathrm{const}}$. For this choice of $m$ we have
$\delta_{\mathrm{tv}}^{(m)} < \delta_{\mathrm{const}}$. Applying
Theorem~\ref{thm:ode-error} separately to the fixed-parameter and time-varying models
gives
\[
\sup_{t\in[0,H]} \|x(t)-\hat x_{\mathrm{const}}(t)\|
\le \frac{\delta_{\mathrm{const}}}{L}(e^{LH}-1),
\quad
\sup_{t\in[0,H]} \|x(t)-\hat x_{\mathrm{tv}}^{(m)}(t)\|
\le \frac{\delta_{\mathrm{tv}}^{(m)}}{L}(e^{LH}-1),
\]
and the strict inequality $\delta_{\mathrm{tv}}^{(m)}<\delta_{\mathrm{const}}$ yields the
claimed ordering of the bounds.
\end{proof}

Theorem~\ref{thm:tv-vs-const} formalises the intuition that, when the true parameters are
non-stationary in time, a sufficiently rich time-varying parameterisation can represent the coefficient trajectory more accurately than any fixed-parameter model and therefore
admits strictly tighter worst-case bounds on finite-horizon forecast errors. In practice,
our STRR+RF pipeline searches over window lengths and numbers of time-varying parameters, providing a data-driven approximation to the theoretically favourable regime
identified above.

\subsection{Experiment results}

\subsubsection{Learning and reproducing system dynamics}

In the SIR dataset, the time-varying parameter model \allowbreak demonstrated strong capability of capturing the dynamics, achieving a MAE of 0.9\% for the susceptible ($S$) population and 1.7\% for the infected ($I$) population (Figure~\ref{train_mae_comparison}a). In contrast, the fixed-parameter model yielded significantly higher errors--29.7\% for $S$ and 29.9\% for $I$ (Figure~\ref{train_mae_comparison}a). Across all noise levels, the time-varying model consistently maintained an average MAE below 2\%, whereas the fixed-parameter model exhibited considerably poorer accuracy, with MAEs ranging from 28\% to 33\% (SI Appendix Figure~\ref{train_mae_noise_comparison}). 

For the CR dataset, the time-varying parameter model effectively reproduced the time series dynamics, with an average MAE of 1.9\% for the consumer ($C$) variable and 2.5\% for the resource ($R$) variable (Figure~\ref{train_mae_comparison}b). In comparison, the fixed-parameter model performed substantially worse, producing MAEs of 21.5\% for $C$ and 31.1\% for $R$ (Figure~\ref{train_mae_comparison}b). When examined across varying noise levels, the time-varying model maintained a stable MAE range of 1.6--2.7\%, while the fixed-parameter model fluctuated between 11--38\% (SI Appendix Figure~\ref{train_mae_noise_comparison}).

In the gases dataset, the time-varying parameter approach also exhibited superior performance, capturing the temporal behavior of $\mathrm{CO_2}$ and $\mathrm{CH_4}$ with MAEs of 1.4\% and 1.9\%, respectively (Figure~\ref{train_mae_comparison}c). Conversely, the fixed-parameter model achieved notably higher MAEs of 16.9\% for $\mathrm{CO_2}$ and 9.1\% for $\mathrm{CH_4}$ (Figure~\ref{train_mae_comparison}c). Averaged across multiple station datasets, the time-varying model maintained an overall MAE below 3\%, while the fixed-parameter counterpart ranged between 11--17\% (SI Appendix Figure~\ref{train_mae_noise_comparison}).

In the CB dataset, both models performed comparably, with the time-varying and fixed-parameter models producing nearly identical MAEs of 3.51\% and 3.53\%, respectively (Figure~\ref{train_mae_comparison}d).

Sample plots illustrate that the fixed-parameter model generally fails to capture the evolving trends in the time series across most variables in our case studies, with the exception of $\mathrm{CH_4}$ and CB (Figure~\ref{sample_plot_learning}). The model performs well on the CB dataset, which exhibits only a single peak, and moderately on $\mathrm{CH_4}$, which has multiple peaks (SI Appendix Figure~\ref{sample_plot_learning}). In contrast, the time-varying parameter model accurately captures the temporal dynamics for all variables (SI Appendix Figure~\ref{sample_plot_learning}).

\begin{figure}[hbt!]
    \centering
    % First row
    \begin{subfigure}{0.49\textwidth}
        \includegraphics[width=\linewidth]{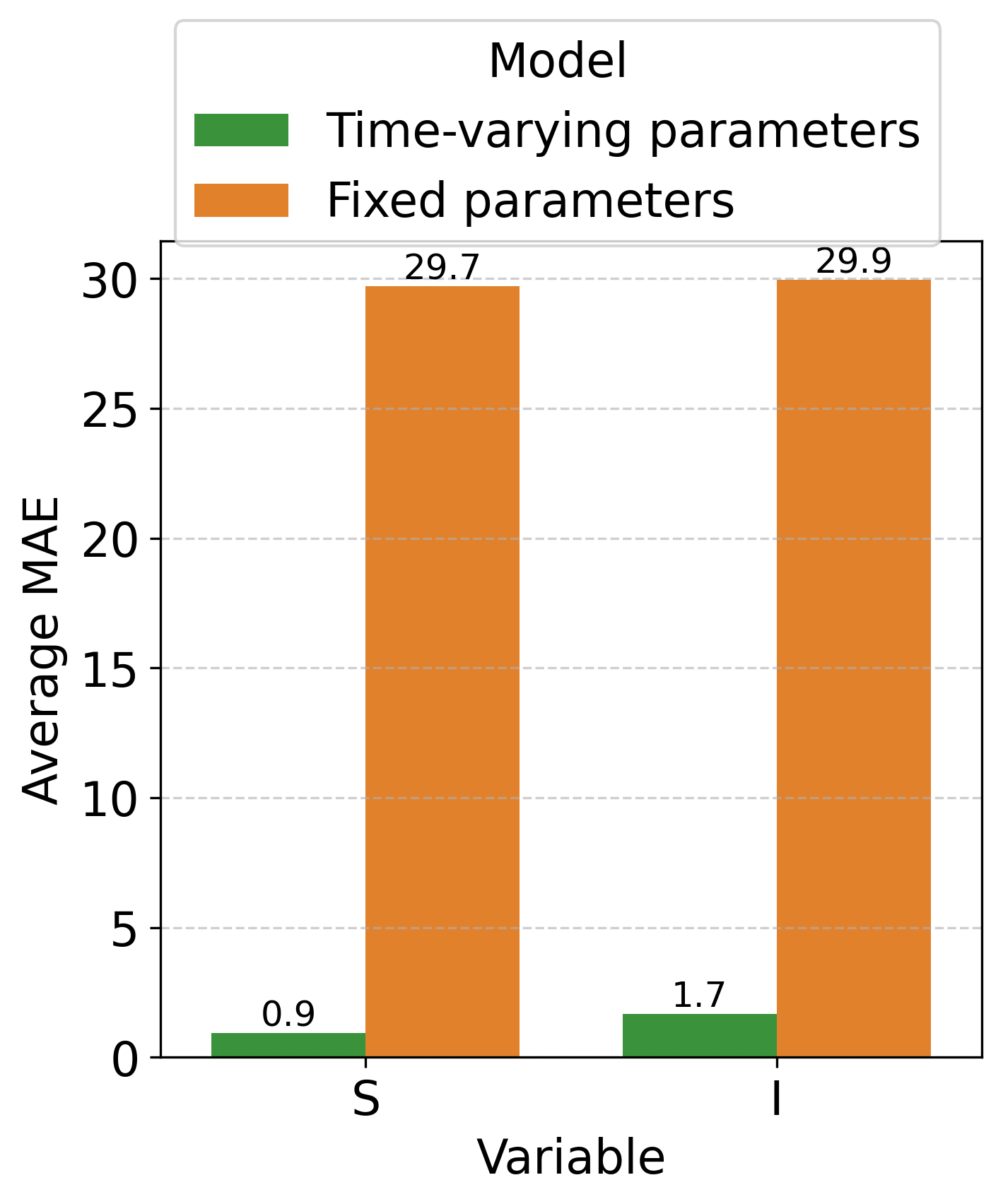}
        \put(-225,210){\textbf{(a)}}
    \end{subfigure}
    \hfill
    \begin{subfigure}{0.49\textwidth}
        \centering
        \includegraphics[width=\linewidth]{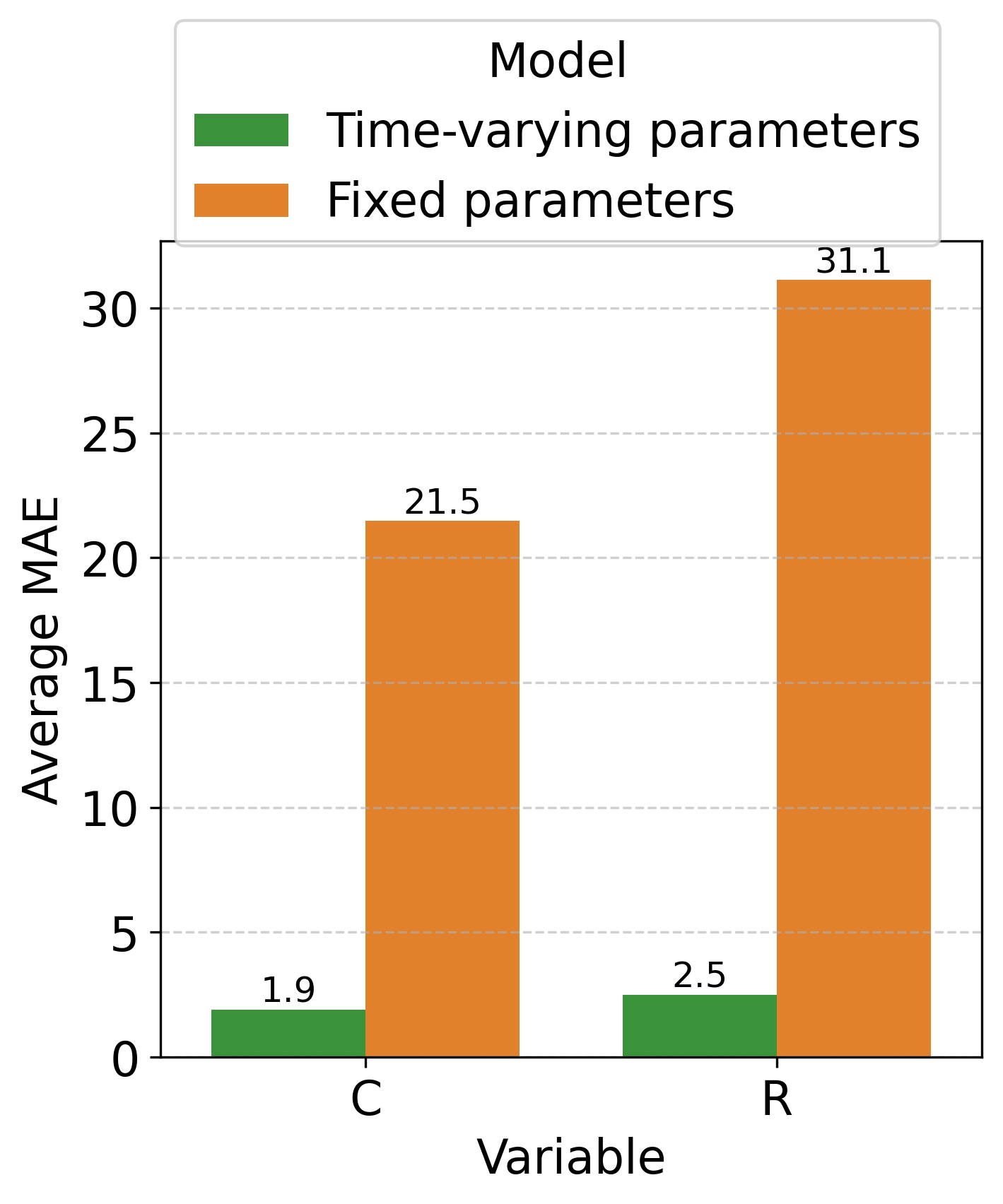}
        \put(-225,210){\textbf{(b)}}
    \end{subfigure}

    \vspace{0.0cm}

    % Second row
    \begin{subfigure}{0.49\textwidth}
        \centering
        \includegraphics[width=\linewidth]{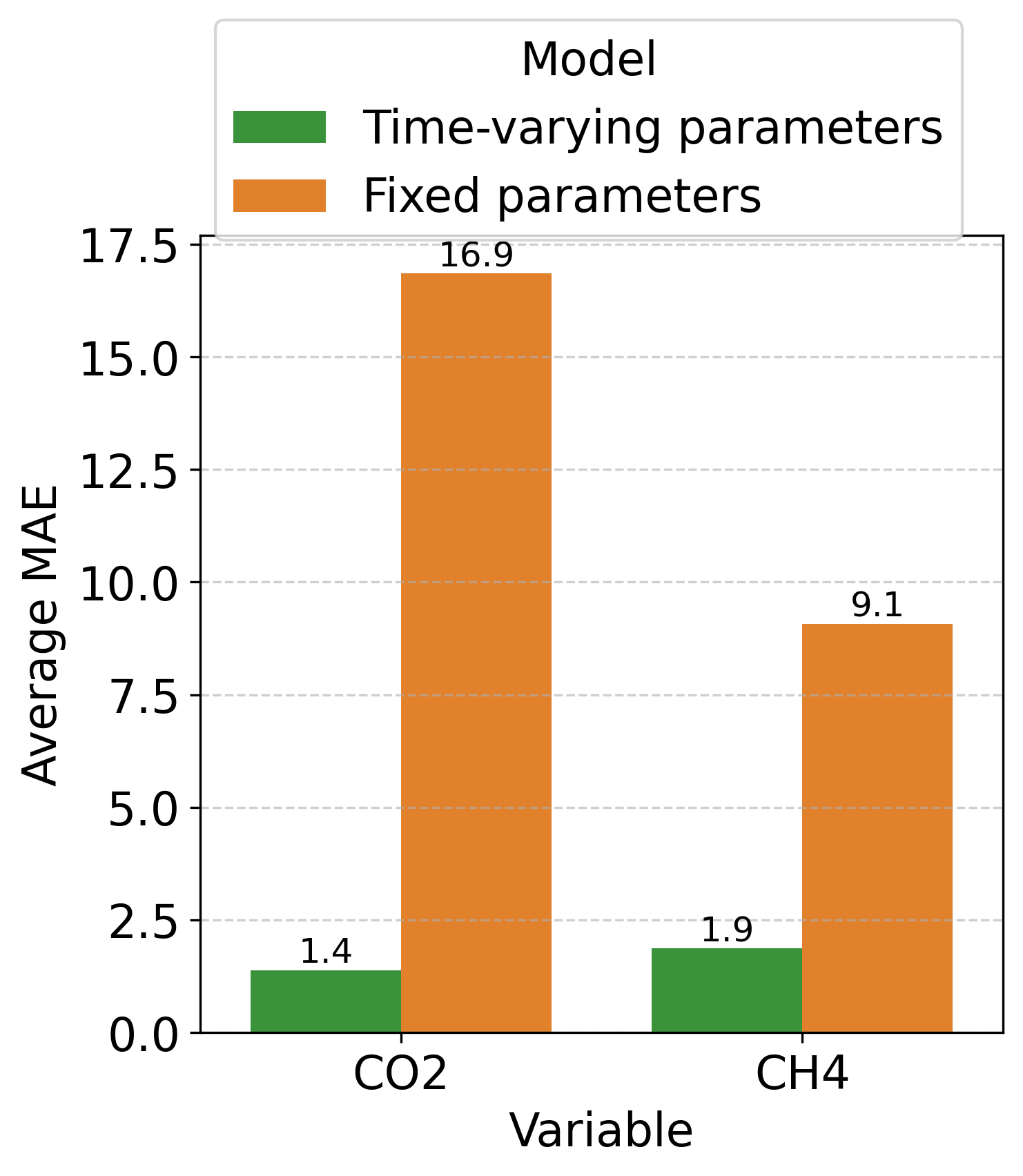}
        \put(-225,210){\textbf{(c)}}
    \end{subfigure}
    \hfill
    \begin{subfigure}{0.49\textwidth}
        \centering
        \includegraphics[width=\linewidth]{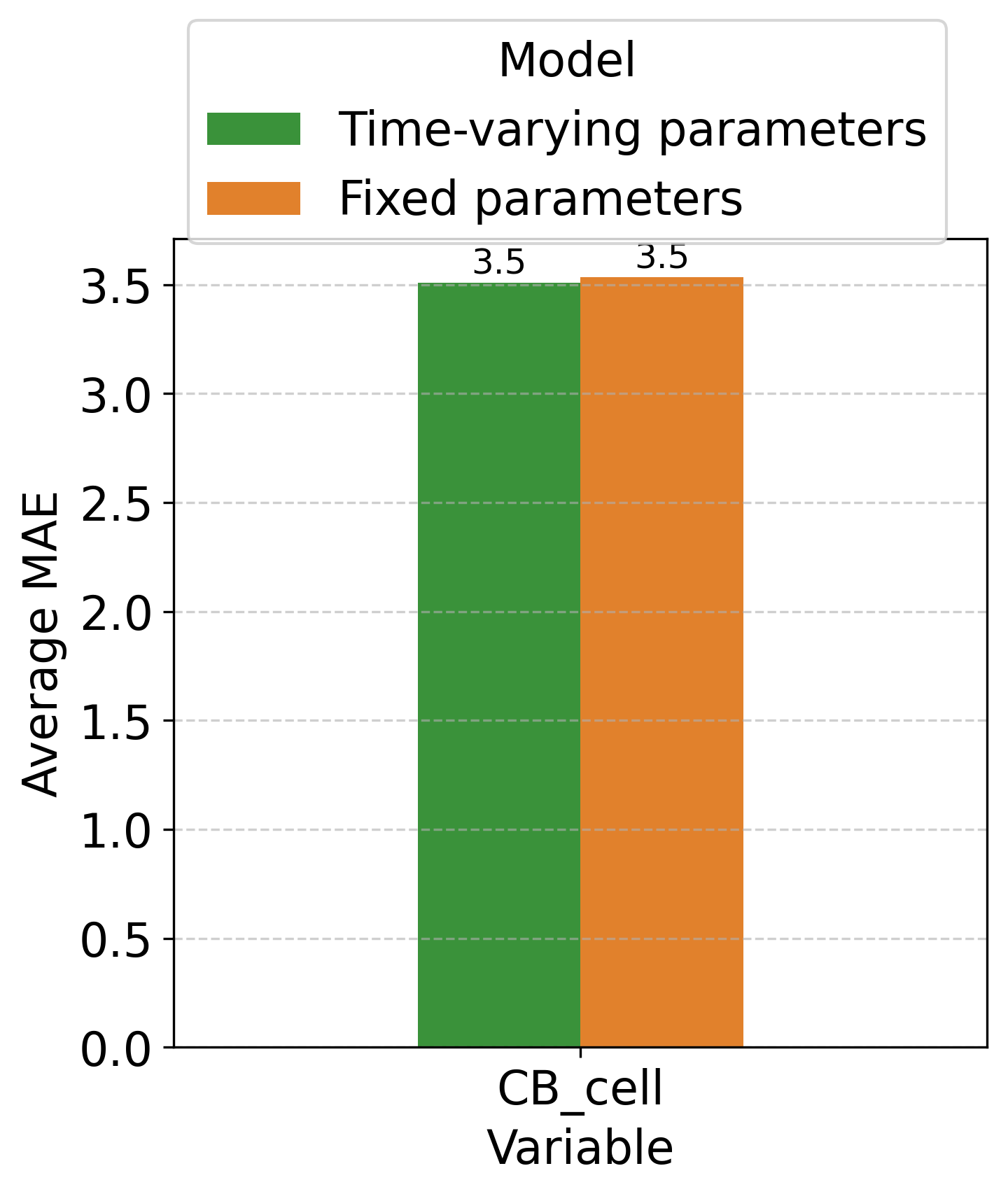}
        \put(-225,210){\textbf{(d)}}
    \end{subfigure}

    \caption{Average learning MAEs for the (a) SIR dataset, (b) CR dataset, (c) gases dataset, and (d) CB dataset. Green bars represent the time-varying parameter model, and orange bars represent the fixed-parameter model. MAEs were computed for each variable across different noise levels or monitoring stations within each dataset and then averaged.}
    \label{train_mae_comparison}
\end{figure}

\subsubsection{Forecasting system dynamics}

For the SIR dataset, the time-varying parameter model demonstrated strong forecasting skill for both the susceptible ($S$) and infected ($I$) populations, achieving average MAEs of 10.0\% and 9.9\%, respectively (Figure~\ref{test_mae_comparison}a). In contrast, the fixed-parameter model yielded substantially higher errors, with MAEs of 13.8\% for $S$ and 14.0\% for $I$ (Figure~\ref{test_mae_comparison}a). Across all noise levels, the time-varying model consistently maintained MAEs between 9\% and 13\%, whereas the fixed-parameter model exhibited degraded performance, with errors ranging from 12\% to 15\% (SI Appendix Figure~\ref{test_mae_noise_comparison}). 

Purely data-driven baselines performed notably worse in forecasting. The CNN--LSTM model produced MAEs exceeding 21\% for both $S$ and $I$, while the GBM achieved slightly lower but still elevated errors of about 21\% for both variables (Figure~\ref{test_mae_comparison}a). This performance gap persisted across increasing noise levels, where CNN--LSTM and GBM errors rose sharply relative to the mechanistic models (SI Appendix Figure~\ref{test_mae_noise_comparison}).

In the CR dataset, the time-varying parameter model accurately forecasted the dynamics of both the consumer ($C$) and resource ($R$) variables, achieving average MAEs of 3.9\% and 5.4\%, respectively (Figure~\ref{test_mae_comparison}b). The fixed-parameter model achieved comparable accuracy, with MAEs of 3.9\% for $C$ and 5.1\% for $R$ (Figure~\ref{test_mae_comparison}b). Across varying noise conditions, both mechanistic models maintained relatively low forecasting errors, generally between 2\% and 7\% (SI Appendix Figure~\ref{test_mae_noise_comparison}).

In contrast, the CNN--LSTM and GBM baselines showed substantially higher errors. The CNN--LSTM yielded MAEs of approximately 18.5\% for $C$ and 18.8\% for $R$, while the GBM achieved MAEs of about 18.0\% and 15.9\%, respectively (Figure~\ref{test_mae_comparison}b). At every noise level, CNN-LSTM and GBM errors remained roughly three to five times larger than those of the mechanistic models (SI Appendix Figure~\ref{test_mae_noise_comparison}).

In the gases dataset, the time-varying model effectively forecasted the temporal behavior of $\mathrm{CO_2}$ and $\mathrm{CH_4}$, achieving MAEs of 10.4\% and 12.0\%, respectively (Figure~\ref{test_mae_comparison}c). In contrast, the fixed-parameter model showed higher errors for $\mathrm{CH_4}$, with MAEs 24.5\% (Figure~\ref{test_mae_comparison}c). Averaged across multiple stations, the time-varying approach maintained an overall MAE around 11\%, whereas the fixed-parameter model ranged from 6\% to 27\% (SI Appendix Figure~\ref{test_mae_noise_comparison}).

The CNN--LSTM and GBM models performed substantially worse, with CNN--LSTM MAEs exceeding 25\% for $\mathrm{CH_4}$ and 35\% for $\mathrm{CO_2}$, and GBM errors remaining above 16\% and 34\%, respectively (Figure~\ref{test_mae_comparison}c). Across different stations, the mechanistic time-varying model remained comparatively stable, whereas CNN--LSTM and GBM errors varied between 23\% to 34\% (SI Appendix Figure~\ref{test_mae_noise_comparison}).

For the CB dataset, the time-varying parameter and fixed-parameter models achieved comparable forecasting performance, with MAEs of 3.0\% and 3.3\%, respectively (Figure~\ref{test_mae_comparison}d). The CNN-LSTM baseline yielded a MAE of approximately 2.2\%, outperforming the mechanistic models, whereas GBM achived MAE of around 10.8\%.

\begin{figure}[htb!]
    \centering
    % First row
    \begin{subfigure}{0.49\textwidth}
        \includegraphics[width=\linewidth]{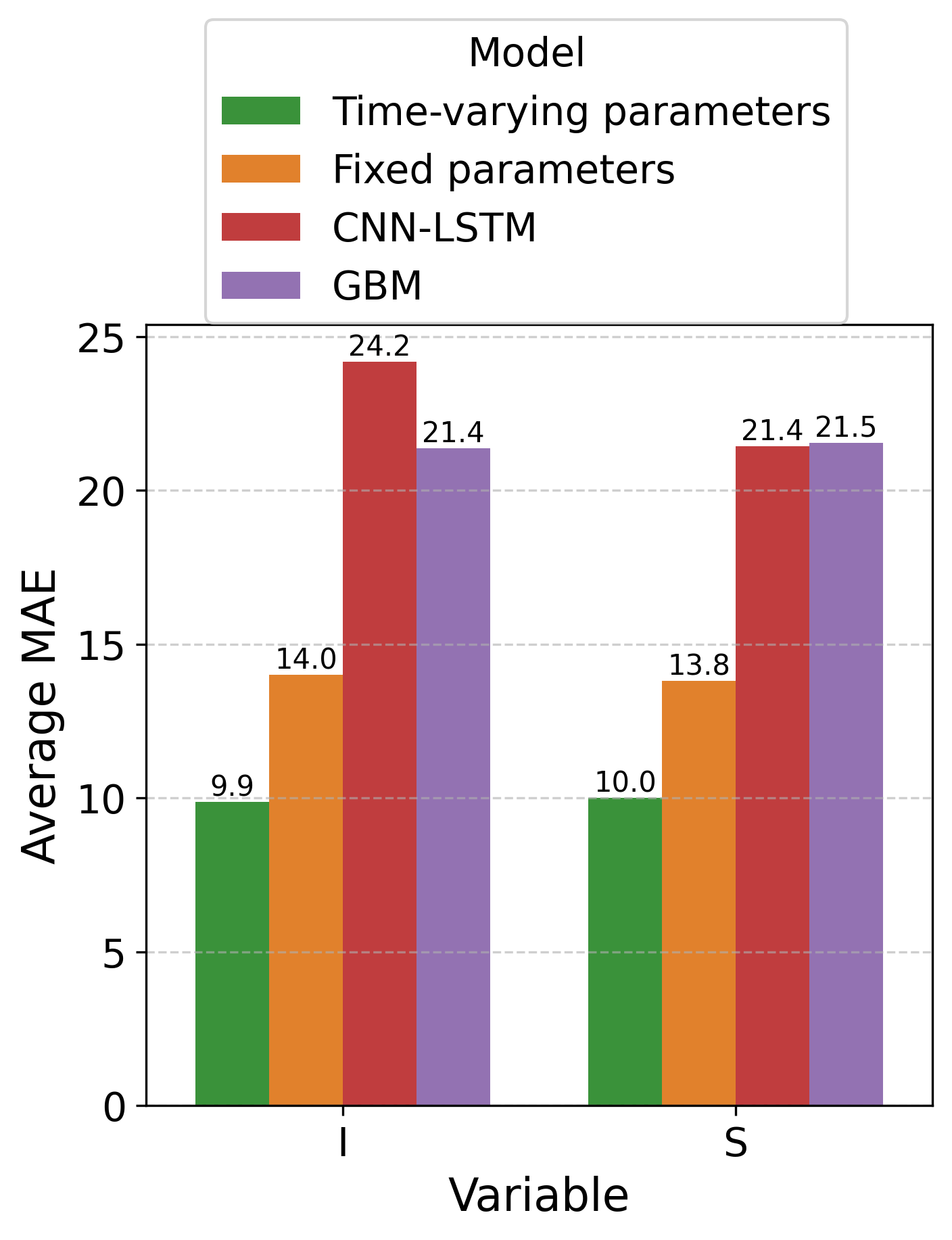}
        \put(-225,210){\textbf{(a)}}
    \end{subfigure}
    \hfill
    \begin{subfigure}{0.49\textwidth}
        \centering
        \includegraphics[width=\linewidth]{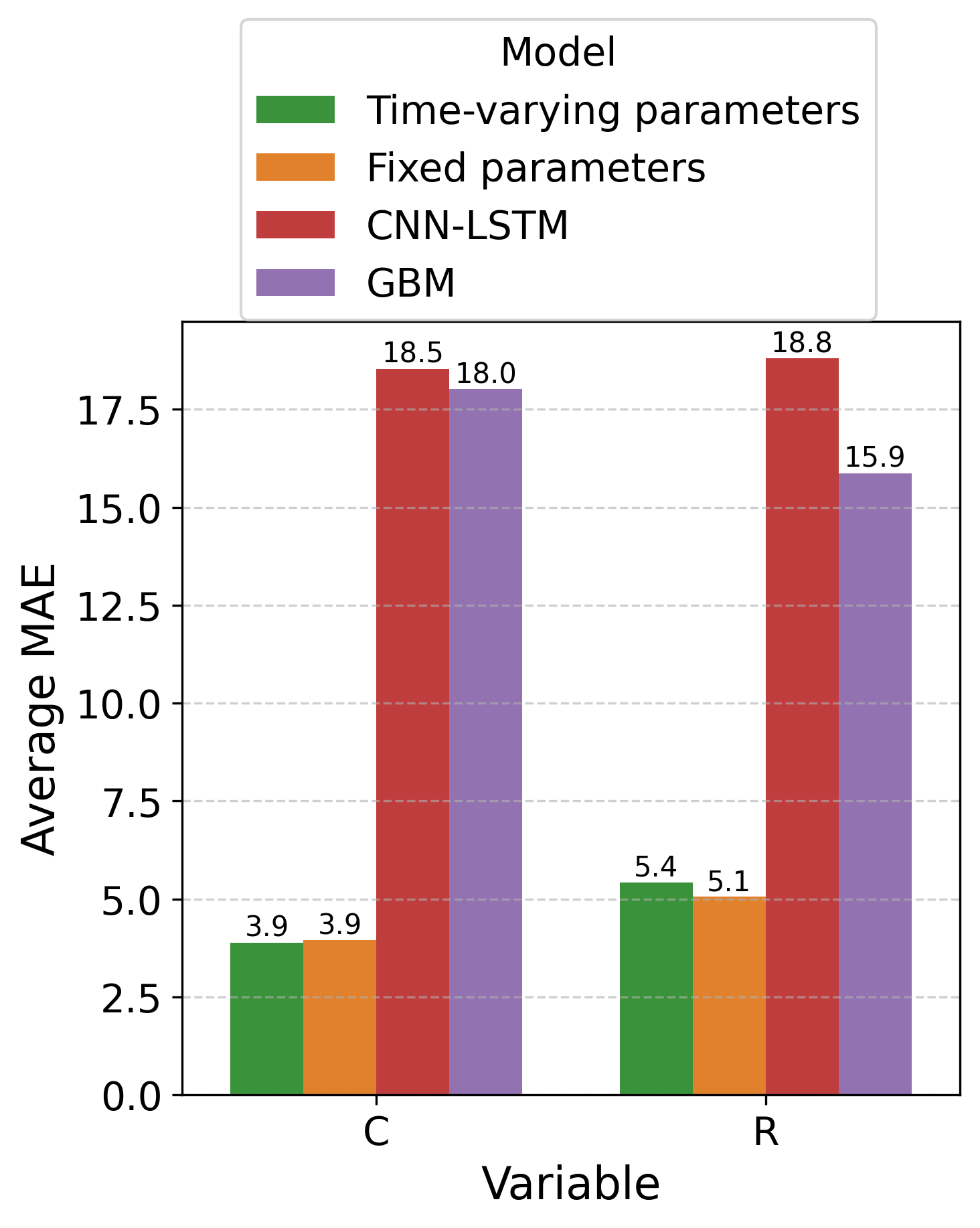}
        \put(-225,210){\textbf{(b)}}
    \end{subfigure}

    \vspace{0.0cm}

    % Second row
    \begin{subfigure}{0.49\textwidth}
        \centering
        \includegraphics[width=\linewidth]{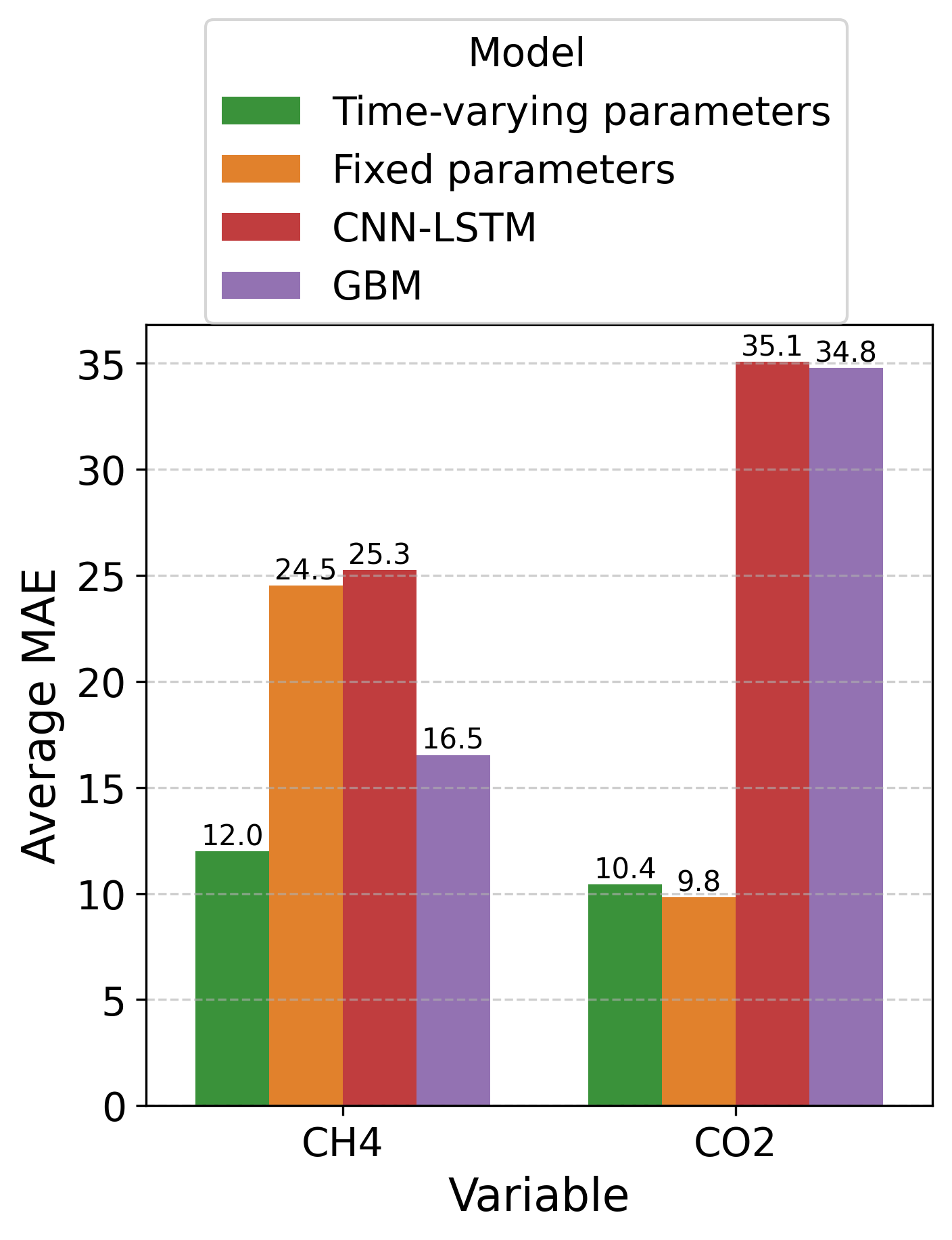}
        \put(-225,210){\textbf{(c)}}
    \end{subfigure}
    \hfill
    \begin{subfigure}{0.49\textwidth}
        \centering
        \includegraphics[width=\linewidth]{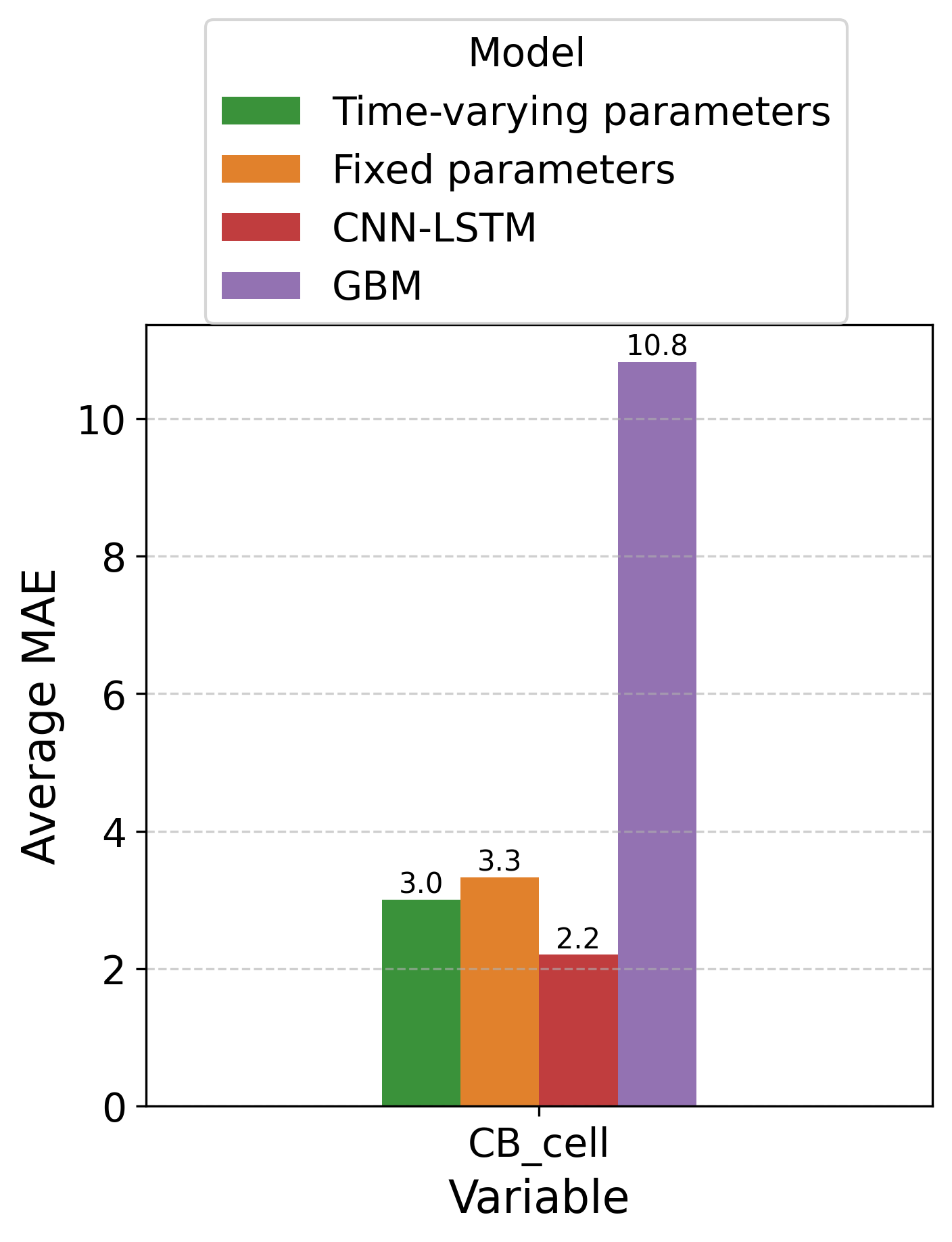}
        \put(-225,210){\textbf{(d)}}
    \end{subfigure}

    \caption{Average forecasting MAEs using the expanding-window cross-validation approach for the (a) SIR dataset, (b) CR dataset, (c) gases dataset, and (d) CB dataset. Green bars represent the time-varying parameter model, orange bars represent the fixed-parameter model, red bars represent the CNN-LSTM model, and purple bars represent the GBM model. MAEs were computed for each variable across different noise levels or monitoring stations within each dataset and then averaged.}
    \label{test_mae_comparison}
\end{figure}

\subsubsection{Optimal forecasting beyond cross-validation}

The expanding-window cross-validation approach did not always capture the most effective set of hyperparameters, namely the interval length and number of time-varying parameters, for the time-varying parameter model. For each dataset, there existed distinct configurations that produced superior predictive performance (SI Appendix Table~\ref{tab:CR}, \ref{tab:SIR}, \ref{tab:gases}, \ref{tab:CB}). 

In the SIR dataset, the optimal configuration achieved a MAE of 2.5\% for the infected ($I$) population and 2.2\% for the susceptible ($S$) population~(Figure \ref{test_optimal_mae_comparison}a)--an improvement of approximately 8\% compared to the hyperparameters of the expanding-window cross-validation setup. In the CR dataset, the model yielded MAEs of 1.3\% for the consumer ($C$) and 2.7\% for the resource ($R$) dynamics~(Figure \ref{test_optimal_mae_comparison}b), reflecting an enhancement of around 2\%. 

Similarly, for the gases dataset, the optimal configuration produced MAEs of 5.7\% for $\mathrm{CH_4}$ and 2.6\% for $\mathrm{CO_2}$~(Figure \ref{test_optimal_mae_comparison}c), marking an improvement of roughly 5\% over the expanding-window cross-validation approach. Finally, in the CB dataset, the best-performing configuration resulted in an MAE of 0.5\%~(Figure \ref{test_optimal_mae_comparison}d), which represents a 2.5\% gain in predictive accuracy relative to the expanding-window cross-validation method. 

Collectively, these results indicate that while the expanding-window cross-validation framework provides a robust baseline, fine-tuning the interval length and number of time-varying parameters can lead to noticeable improvements in predictive precision across diverse dynamical systems.

Sample forecasting plots demonstrate that the time-varying parameter model with optimal configuration consistently tracks the observed trajectories across all variables in the case studies, resulting in markedly improved forecasting performance~(SI Appendix Figure \ref{sample_plot_forecasting}). In contrast, the fixed-parameter model produces forecasts that only moderately match the time series of $R$, and $\mathrm{CH_4}$, and fails to capture the evolving dynamics in the remaining variables~(SI Appendix Figure \ref{sample_plot_forecasting}).

\begin{figure}[hbt!]
    \centering
    % First row
    \begin{subfigure}{0.49\textwidth}
        \includegraphics[width=\linewidth]{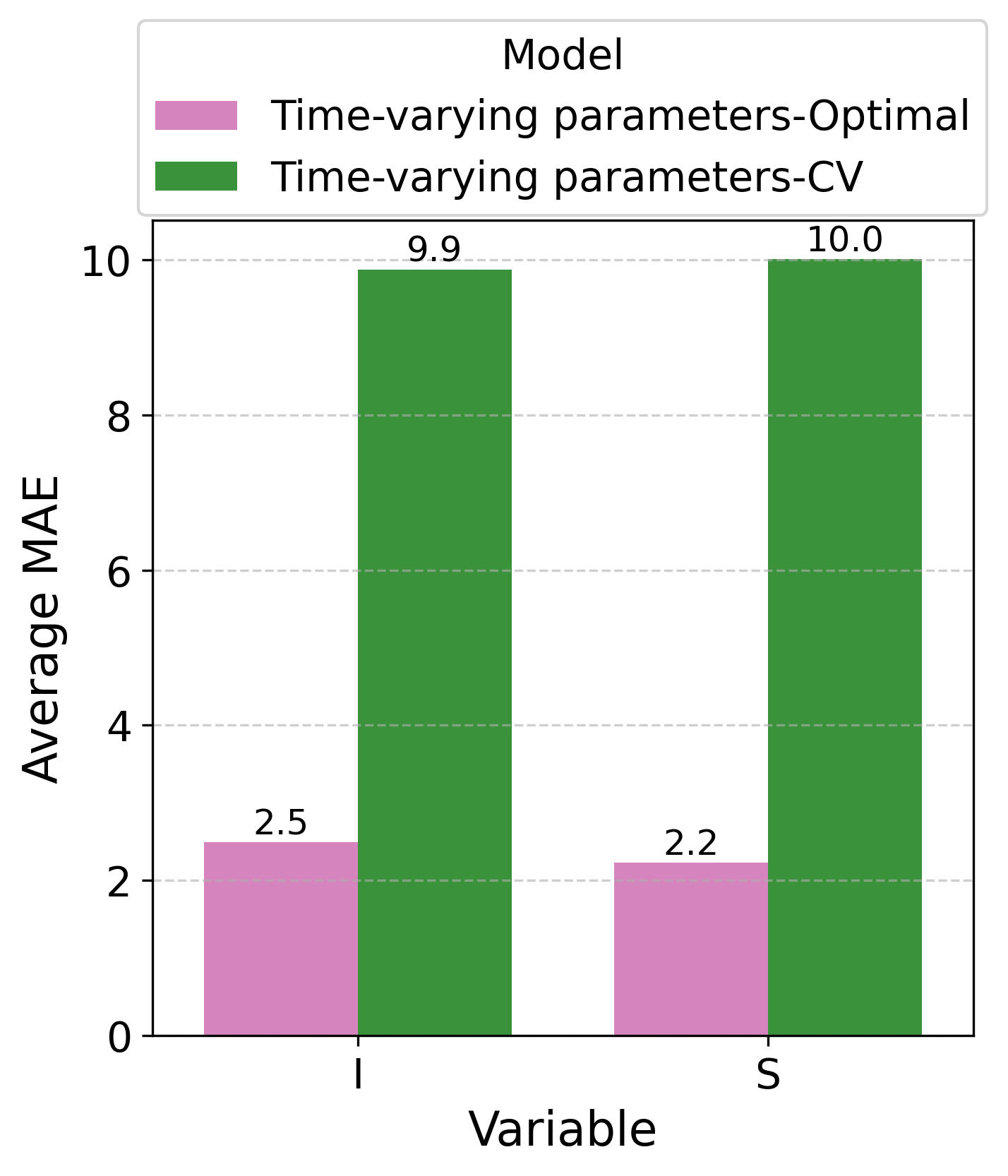}
        \put(-225,210){\textbf{(a)}}
    \end{subfigure}
    \hfill
    \begin{subfigure}{0.49\textwidth}
        \centering
        \includegraphics[width=\linewidth]{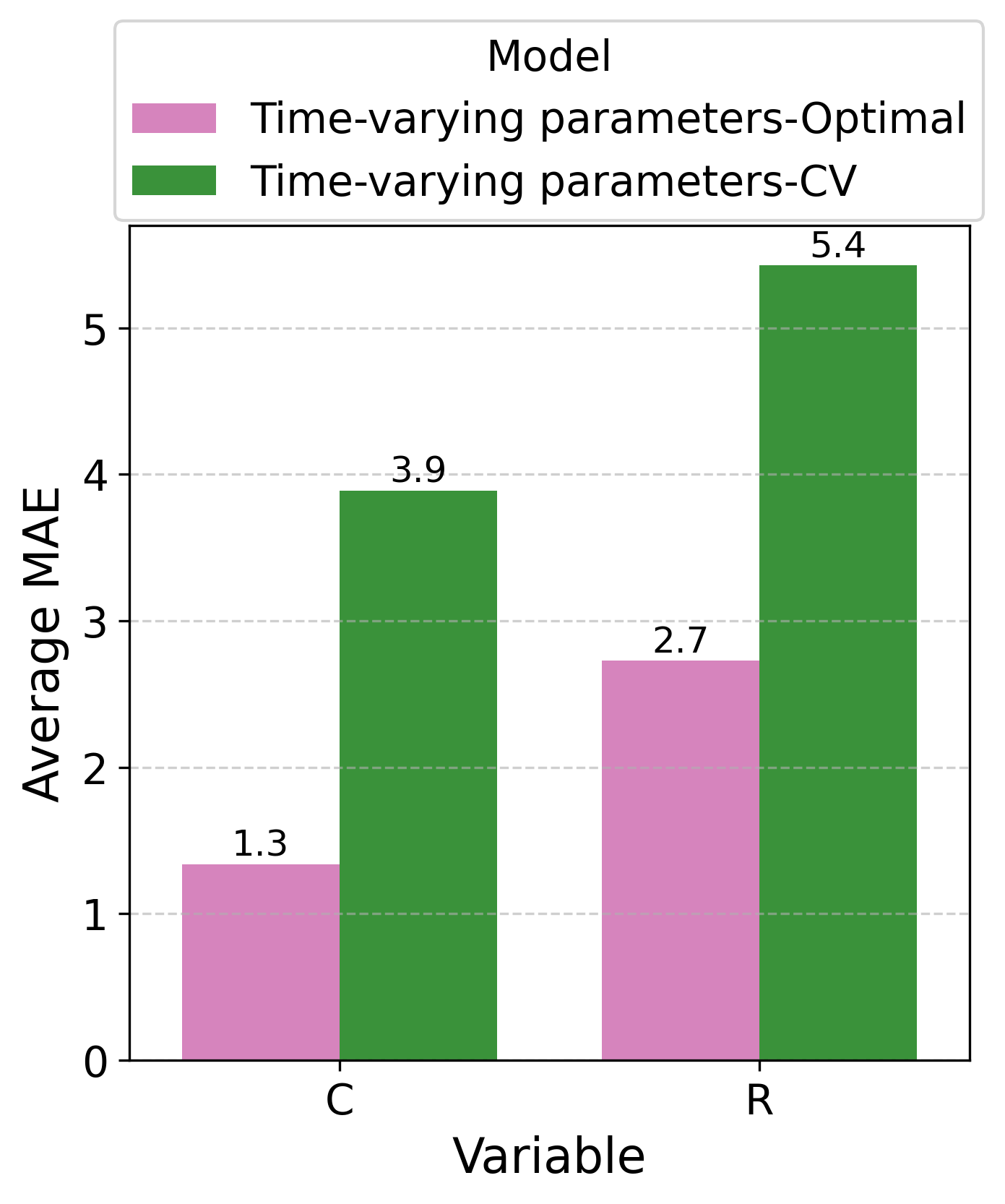}
        \put(-225,210){\textbf{(b)}}
    \end{subfigure}

    \vspace{0.0cm}

    % Second row
    \begin{subfigure}{0.49\textwidth}
        \centering
        \includegraphics[width=\linewidth]{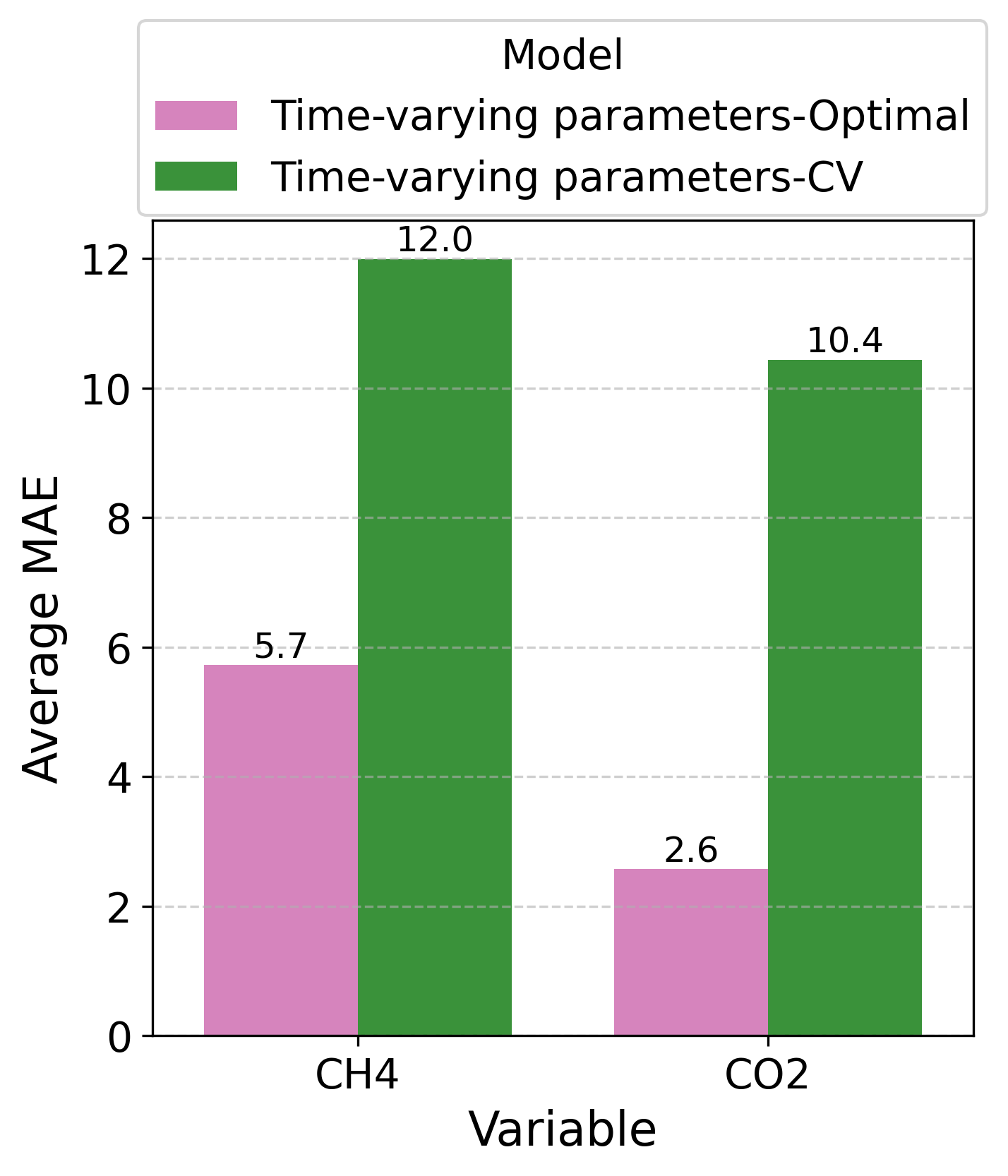}
        \put(-225,210){\textbf{(c)}}
    \end{subfigure}
    \hfill
    \begin{subfigure}{0.49\textwidth}
        \centering
        \includegraphics[width=\linewidth]{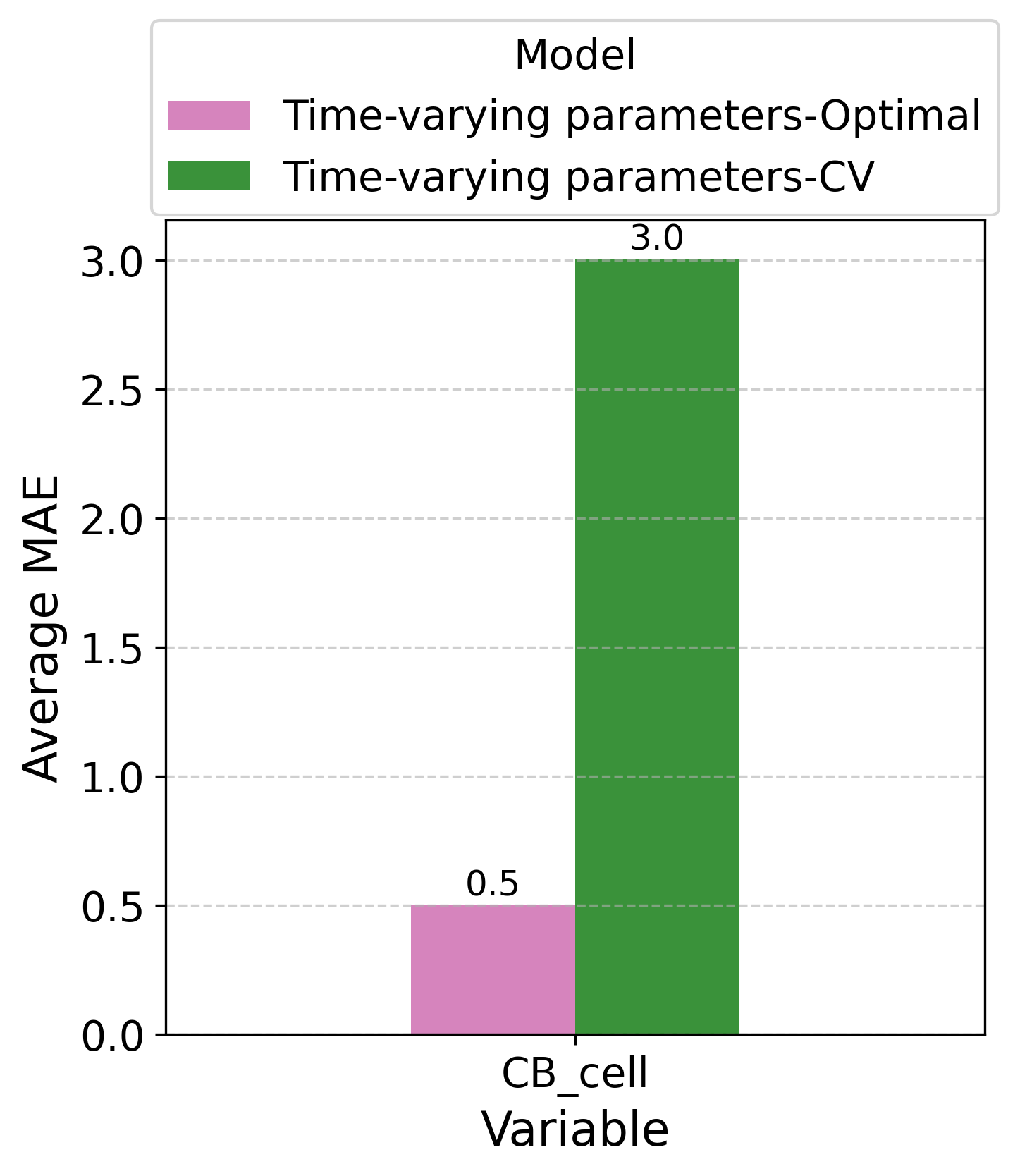}
        \put(-225,210){\textbf{(d)}}
    \end{subfigure}

    \caption{Average forecasting MAEs by the time-varying parameter model based on optimal configuration and cross-validation for the (a) SIR dataset, (b) CR dataset, (c) gases dataset, and (d) CB dataset. Green bars represent the error by the model based on the cross-validation (CV) method, and pink bars represent the optimal configuration forecasting error of the model. MAEs were computed for each variable across different noise levels or monitoring stations within each dataset and then averaged.}
    \label{test_optimal_mae_comparison}
\end{figure}

\FloatBarrier
\section{Discussion}
We present a data-driven framework that couples sparse equation discovery with machine-learned parameter evolution to enable adaptive learning and forecasting of dynamical systems. The results show that allowing a subset of coefficients to vary over time improves both learning fidelity and short-term predictive accuracy, particularly for non-stationary processes such as epidemic transmission and environmental gas emissions. Compared with fixed-coefficient sparse regression, the proposed model reduced forecasting MAE by 2-7\% across most datasets and maintained training errors below 3\%, indicating that temporal parameterization can enhance model expressiveness without sacrificing interpretability.

Traditional ML models are widely used in epidemiological and environmental prediction \cite{hamrani2020machine, guo2017developing, saha2025dispersion}. While these models often achieve strong point forecasts, their black-box structure limits mechanistic interpretation \cite{rubio2022modeling, li2019discovering}. In our experiments, two common data-driven forecasters--LSTM-CNN and GBM--produced substantially higher forecasting MAEs and showed greater sensitivity to noise, indicating that such models alone may struggle to generalize when system mechanisms evolve, or external drivers are only partially informative. Rather than viewing this as a weakness of ML itself, this outcome highlights the importance of inductive structure: embedding ML-predicted evolving coefficients into discovered differential equations can improve both interpretability and forecasting stability \cite{chakraborty2024policy, wang2022policy}. Our framework operationalizes this by coupling sparse equation learners with RF-predicted time-varying parameters, enabling adaptive forecasts while preserving dynamical transparency. This integration is especially valuable for evolving natural systems-including influenza transmission, climate-forced ecological interactions, atmospheric gas emissions, and cyanobacteria dynamics-where fixed-coefficient or unconstrained ML forecasters may fail to capture transient or non-stationary responses \cite{lowen2014roles, laws2017climate, small2015emissions, tseng2025apply}.

The simulated SIR and CR systems provided a controlled comparison between intrinsically non-stationary and approximately stationary parameter regimes. While our framework recovered accurate dynamics for both systems, the forecasting benefit of time-varying coefficients was substantially more pronounced in SIR. This outcome is consistent with theoretical expectations: CR trajectories, driven by nearly constant interaction strengths, can be reasonably approximated by fixed coefficients over short horizons, whereas SIR evolution is strongly governed by non-stationary drivers-most critically, a temporally shifting transmission rate. Consequently, the fixed-coefficient sparse regression baseline exhibited large forecasting error for SIR, reinforcing the importance of temporal parameterization \cite{chakraborty2024policy, wang2023discrete}. Similar advantages of time-varying formulations have been reported in epidemiological and ecological models under seasonal forcing and external perturbations \cite{spannaus2022inferring, song2013time}.

In the empirical datasets, the time-varying model achieved substantially lower errors in both learning and forecasting the $\mathrm{CO_2}$ and $\mathrm{CH_4}$. The fixed-coefficient sparse model struggled especially with forecasting $\mathrm{CH_4}$, and the LSTM-CNN and GBM baselines struggled with both, reflecting the influence of seasonal cycles and emission fluctuations on atmospheric gas dynamics. Because environmental systems shift continuously, capturing their behavior requires time-varying representations \cite{ornik2021learning}. In the CB dataset, both models performed similarly during learning, suggesting that some systems can be approximated with fixed parameters; however, the time-varying model consistently outperformed in forecasting. Collectively, these results support a constructive conclusion: equation learners that incorporate time-varying parameters yield more noise-robust and more accurate finite-horizon forecasts than fixed-coefficient sparse regression or standalone ML forecasters

In noise-free settings, both sparse models achieved low forecasting MAE, confirming that fixed-coefficient representations can be adequate when dynamics are nearly stationary and data are clean. However, performance diverged under noise and transient dynamical regimes. The time-varying formulation remained stable across both SIR and CR learning intervals, while the fixed-coefficient model exhibited high variance in CR folds, producing substantially larger learning error. Interestingly, the fixed-coefficient model exhibited high learning MAE at certain CR noise levels but lower short-horizon prediction error, as its coefficients converged to long-run averages that smoothed local fluctuations. While such averaging can appear competitive over small forecast windows, it obscures derivative-level dynamics and limits regime-specific generalization. In contrast, learning coefficients on short intervals improves dynamical expressiveness and reduces cross-fold error, but forecasting accuracy can still degrade if the prediction horizon is not well-aligned with the timescale of coefficient evolution.

Forecast accuracy in our framework is conditioned on the predictive skill of the time-varying parameters, which in this study were estimated using a single RF model. Although RF effectively captures nonlinear feature-parameter relationships, forecasting models with evolving dynamics may benefit from predictors with complementary inductive biases or stronger temporal adaptation \cite{ahmed2010empirical, muter2024future}. Neural networks, Bayesian networks, and hybrid learners have shown promise in modeling complex, noisy, and transient feature-response interactions \cite{chakraborty2024policy, wang2022policy}. Future work should therefore explore multi-baseline or ensemble-based parameter predictors to improve generalization and reduce propagation drift under noise, particularly for rapidly evolving or seasonally forced systems \cite{chakraborty2025deep, chakraborty2024early}.

Although expanding-window cross-validation is standard for hyperparameter selection, it may not identify the best interval length or the most effective number of time-varying parameters, as it optimizes for average fold performance rather than segment-specific predictive skill. Our fold-specific evaluation revealed configurations that consistently outperformed the cross-validation-selected model--most notably in the SIR system, where optimal interval alignment reduced forecasting MAE by approximately 8\% for both $S$ and $I$, with additional 2-5\% gains observed in the CR, gases, and CB datasets. We further observed that cross-validation error increased more sharply in the presence of noise, a known limitation of averaging-based validation that can obscure locally optimal configurations \cite{bates2024cross, bergmeir2018note}. These findings motivate future work on forecast-aware validation objectives, noise-weighted model selection, and adaptive interval tuning for transient or externally forced dynamics.

Candidate terms were initialized using domain knowledge to ensure a well-posed and interpretable library. For simulated systems (SIR and CR), where the governing interactions are known, state variables were used directly as candidate functions. For empirical datasets (gases and cyanobacteria, CB), the library was expanded to include measured source variables and weather covariates anticipated to modulate dynamics. Although this design supports strong performance, further reductions in library size are often possible without degrading forecast skill. Causal selection strategies-such as restricting predictors to the Markov blanket in a Bayesian network \cite{heggerud2024predicting}--can substantially lower dimensionality while preserving short-horizon accuracy. Because the cost of learning and predicting evolving coefficients scales with the number of active terms, library reduction also bounds the number of time-varying parameters, improving computational efficiency and robustness. Our framework readily accommodates such dataset-specific library adaptation, enabling future integration of automated causal or correlation-based term selection to further enhance scalability and generalization.

% noise vs prediction and learning change  

Although the framework achieved strong learning and forecasting accuracy, the analysis also identifies several opportunities for improvement. First, expanding-window cross-validation, while standard, may miss locally optimal choices of interval length and time-varying parameter counts because it prioritizes average fold performance rather than horizon-specific forecast skill. Second, autoregressive ML forecasters exhibit error accumulation over longer horizons, especially under noise or rapid regime shifts, a challenge that can be mitigated through dynamical constraints and noise-aware validation objectives. Third, parameter forecasting depends on exogenous drivers (here, four weather covariates); limited variability or measurement error in these inputs can reduce coefficient-forecast reliability, motivating future work on adaptive or expanded driver sets. Fourth, the selection of time-varying parameters is based on top correlation (with the bias term always treated as time-varying), which may introduce selection bias and can fail when correlations are transient, noisy, or do not reflect true causal influence. Finally, the candidate library was initialized using prior system knowledge, which ensured interpretability but may exclude latent or unobserved drivers; future extensions should investigate automated causal or correlation-based library reduction and selection to enhance scalability and generalization. Collectively, these considerations highlight areas for careful model design and future methodological improvements.

\section*{Data availability}
All the data and codes used in this study are available in a Zenodo repository \cite{chakraborty_2026_18473544}.

\section*{Funding}
Hao Wang's research was partially supported by the Natural Sciences and Engineering Research Council of Canada (Individual Discovery Grant RGPIN-2020-03911 and Discovery Accelerator Supplement Award RGPAS-2020-00090) and the Canada Research Chairs program (Tier 1 Canada Research Chair Award). Pouria Ramazi also acknowledges the support from an NSERC Discovery Grant.

\section*{Acknowledgement}
We would like to thank ILMEE members for providing valuable feedback. We polished spelling, grammar, and general style of this paper using ChatGPT \cite{openai2023chatgpt}. Computational resources were provided by the Digital Research Alliance of Canada (formerly Compute Canada).

\section*{Competing interests} We declare we have no competing interests.

\bibliographystyle{unsrt}
\bibliography{2references}

\begin{thebibliography}{10}

\bibitem{baker2018mechanistic}
Ruth~E Baker, Jose-Maria Pena, Jayaratnam Jayamohan, and Antoine J{\'e}rusalem.
\newblock Mechanistic models versus machine learning, a fight worth fighting for the biological community?
\newblock {\em Biology letters}, 14(5):20170660, 2018.

\bibitem{lan2025shallow}
Yunduo Lan, Sung-Young Shin, and Lan~K Nguyen.
\newblock From shallow to deep: the evolution of machine learning and mechanistic model integration in cancer research.
\newblock {\em Current Opinion in Systems Biology}, 40:100541, 2025.

\bibitem{trappe2023density}
Martin-I Trappe and Ryan~A Chisholm.
\newblock A density functional theory for ecology across scales.
\newblock {\em Nature Communications}, 14(1):1089, 2023.

\bibitem{small2015emissions}
Christina~C Small, Sunny Cho, Zaher Hashisho, and Ania~C Ulrich.
\newblock Emissions from oil sands tailings ponds: Review of tailings pond parameters and emission estimates.
\newblock {\em Journal of Petroleum Science and Engineering}, 127:490--501, 2015.

\bibitem{natchimuthu2014influence}
Sivakiruthika Natchimuthu, Balathandayuthabani Panneer~Selvam, and David Bastviken.
\newblock Influence of weather variables on methane and carbon dioxide flux from a shallow pond.
\newblock {\em Biogeochemistry}, 119:403--413, 2014.

\bibitem{lowen2007influenza}
Anice~C Lowen, Samira Mubareka, John Steel, and Peter Palese.
\newblock Influenza virus transmission is dependent on relative humidity and temperature.
\newblock {\em PLOS pathogens}, 3(10):e151, 2007.

\bibitem{shaman2009absolute}
Jeffrey Shaman and Melvin Kohn.
\newblock Absolute humidity modulates influenza survival, transmission, and seasonality.
\newblock {\em Proceedings of the National Academy of Sciences}, 106(9):3243--3248, 2009.

\bibitem{chakraborty2024policy}
Amit~K Chakraborty, Hao Wang, and Pouria Ramazi.
\newblock From policy to prediction: Assessing forecasting accuracy in an integrated framework with machine learning and disease models.
\newblock {\em Journal of Computational Biology}, 31(11):1104--1117, 2024.

\bibitem{wang2022policy}
Xiunan Wang, Hao Wang, Pouria Ramazi, Kyeongah Nah, and Mark Lewis.
\newblock From policy to prediction: Forecasting covid-19 dynamics under imperfect vaccination.
\newblock {\em Bulletin of Mathematical Biology}, 84(9):90, 2022.

\bibitem{banga2025mechanistic}
Julio~R Banga and Alejandro~F Villaverde.
\newblock Mechanistic dynamic modelling of biological systems: The road ahead.
\newblock {\em Current Opinion in Systems Biology}, 42:100553, 2025.

\bibitem{song2024towards}
Wenxiang Song, Shijie Jiang, Gustau Camps-Valls, Mathew Williams, Lu~Zhang, Markus Reichstein, Harry Vereecken, Leilei He, Xiaolong Hu, and Liangsheng Shi.
\newblock Towards data-driven discovery of governing equations in geosciences.
\newblock {\em Communications Earth \& Environment}, 5(1):589, 2024.

\bibitem{brunton2016discovering}
Steven~L Brunton, Joshua~L Proctor, and J~Nathan Kutz.
\newblock Discovering governing equations from data by sparse identification of nonlinear dynamical systems.
\newblock {\em Proceedings of the National Academy of Sciences}, 113(15):3932--3937, 2016.

\bibitem{rudy2017data}
Samuel~H Rudy, Steven~L Brunton, Joshua~L Proctor, and J~Nathan Kutz.
\newblock Data-driven discovery of partial differential equations.
\newblock {\em Science advances}, 3(4):e1602614, 2017.

\bibitem{li2019discovering}
Shanwu Li, Eurika Kaiser, Shujin Laima, Hui Li, Steven~L Brunton, and J~Nathan Kutz.
\newblock Discovering time-varying aerodynamics of a prototype bridge by sparse identification of nonlinear dynamical systems.
\newblock {\em Physical Review E}, 100(2):022220, 2019.

\bibitem{fukami2021sparse}
Kai Fukami, Takaaki Murata, Kai Zhang, and Koji Fukagata.
\newblock Sparse identification of nonlinear dynamics with low-dimensionalized flow representations.
\newblock {\em Journal of Fluid Mechanics}, 926:A10, 2021.

\bibitem{callaham2022role}
Jared~L Callaham, Steven~L Brunton, and Jean-Christophe Loiseau.
\newblock On the role of nonlinear correlations in reduced-order modelling.
\newblock {\em Journal of Fluid Mechanics}, 938:A1, 2022.

\bibitem{callaham2022empirical}
Jared~L Callaham, Georgios Rigas, Jean-Christophe Loiseau, and Steven~L Brunton.
\newblock An empirical mean-field model of symmetry-breaking in a turbulent wake.
\newblock {\em Science Advances}, 8(19):eabm4786, 2022.

\bibitem{loiseau2020data}
Jean-Christophe Loiseau.
\newblock Data-driven modeling of the chaotic thermal convection in an annular thermosyphon.
\newblock {\em Theoretical and Computational Fluid Dynamics}, 34(4):339--365, 2020.

\bibitem{loiseau2018constrained}
Jean-Christophe Loiseau and Steven~L Brunton.
\newblock Constrained sparse galerkin regression.
\newblock {\em Journal of Fluid Mechanics}, 838:42--67, 2018.

\bibitem{prokop2024biological}
Bartosz Prokop and Lendert Gelens.
\newblock From biological data to oscillator models using sindy.
\newblock {\em Iscience}, 27(4), 2024.

\bibitem{mangan2016inferring}
Niall~M Mangan, Steven~L Brunton, Joshua~L Proctor, and J~Nathan Kutz.
\newblock Inferring biological networks by sparse identification of nonlinear dynamics.
\newblock {\em IEEE Transactions on Molecular, Biological, and Multi-Scale Communications}, 2(1):52--63, 2016.

\bibitem{kaiser2018sparse}
Eurika Kaiser, J~Nathan Kutz, and Steven~L Brunton.
\newblock Sparse identification of nonlinear dynamics for model predictive control in the low-data limit.
\newblock {\em Proceedings of the Royal Society A}, 474(2219):20180335, 2018.

\bibitem{wu2024data}
Xiaojun Wu, MeiLu McDermott, and Adam~L MacLean.
\newblock Data-driven model discovery and model selection for noisy biological systems.
\newblock {\em bioRxiv}, pages 2024--10, 2024.

\bibitem{prokop2023data}
Bartosz Prokop and Lendert Gelens.
\newblock Data-driven discovery of oscillator models using sindy: Towards the application on experimental data in biology.
\newblock {\em bioRxiv}, pages 2023--08, 2023.

\bibitem{sandoz2023sindy}
Antoine Sandoz, Verena Ducret, Georg~A Gottwald, Gilles Vilmart, and Karl Perron.
\newblock Sindy for delay-differential equations: application to model bacterial zinc response.
\newblock {\em Proceedings of the Royal Society A}, 479(2269):20220556, 2023.

\bibitem{prabhu2023data}
Siddharth Prabhu, Srinivas Rangarajan, and Mayuresh Kothare.
\newblock Data-driven discovery of sparse dynamical model of cardiovascular system for model predictive control.
\newblock {\em Computers in Biology and Medicine}, 166:107513, 2023.

\bibitem{thiele2020system}
Gregor Thiele, Arne Fey, David Sommer, and J{\"o}rg Kr{\"u}ger.
\newblock System identification of a hysteresis-controlled pump system using sindy.
\newblock In {\em 2020 24th International Conference on System Theory, Control and Computing (ICSTCC)}, pages 457--464. IEEE, 2020.

\bibitem{wulff2024minimal}
Paul Wulff, Nils Gr{\"a}bner, and Utz von Wagner.
\newblock Minimal model identification of drum brake squeal via sindy.
\newblock {\em Archive of Applied Mechanics}, 94(10):3101--3117, 2024.

\bibitem{lee2024energy}
Hao Lee, Ruoning Ren, Yifei Qian, and Jacob Rosen.
\newblock Energy reduction for wearable pneumatic valve system with sindy and time-variant model predictive control.
\newblock {\em IEEE/ASME Transactions on Mechatronics}, 2024.

\bibitem{guo2024uncertainty}
Lin Guo, Xiaokai Yang, Zhonghua Zheng, Nicole Riemer, and Christopher~W Tessum.
\newblock Uncertainty quantification in reduced-order gas-phase atmospheric chemistry modeling using ensemble sindy.
\newblock {\em arXiv preprint arXiv:2407.09757}, 2024.

\bibitem{rubio2022modeling}
Javier Rubio-Herrero, Carlos~Ortiz Marrero, and Wai-Tong~Louis Fan.
\newblock Modeling atmospheric data and identifying dynamics temporal data-driven modeling of air pollutants.
\newblock {\em Journal of Cleaner Production}, 333:129863, 2022.

\bibitem{yang2024atmospheric}
Xiaokai Yang, Lin Guo, Zhonghua Zheng, Nicole Riemer, and Christopher~W Tessum.
\newblock Atmospheric chemistry surrogate modeling with sparse identification of nonlinear dynamics.
\newblock {\em Journal of Geophysical Research: Machine Learning and Computation}, 1(2):e2024JH000132, 2024.

\bibitem{rudy2019data}
Samuel Rudy, Alessandro Alla, Steven~L Brunton, and J~Nathan Kutz.
\newblock Data-driven identification of parametric partial differential equations.
\newblock {\em SIAM Journal on Applied Dynamical Systems}, 18(2):643--660, 2019.

\bibitem{wang2023discrete}
Xiunan Wang and Hao Wang.
\newblock Discrete inverse method for extracting disease transmission rates from accessible infection data.
\newblock {\em SIAM Journal on Applied Mathematics}, 84(3):S336--S361, 2023.

\bibitem{bousquet2022deep}
Arthur Bousquet, William~H Conrad, Said~Omer Sadat, Nelli Vardanyan, and Youngjoon Hong.
\newblock Deep learning forecasting using time-varying parameters of the sird model for covid-19.
\newblock {\em Scientific Reports}, 12(1):3030, 2022.

\bibitem{long2021identification}
Jie Long, AQM Khaliq, and Khaled~M Furati.
\newblock Identification and prediction of time-varying parameters of covid-19 model: a data-driven deep learning approach.
\newblock {\em International Journal of Computer Mathematics}, 98(8):1617--1632, 2021.

\bibitem{ji2025hybrid}
Juping Ji, Shohel Ahmed, and Hao Wang.
\newblock A hybrid approach to study and forecast climate-sensitive norovirus infections in the usa.
\newblock {\em Journal of Theoretical Biology}, 598:112007, 2025.

\bibitem{pastpipatkul2024analysis}
Pathairat Pastpipatkul and Panicha Subsai.
\newblock Analysis of time-varying coefficients and forecasting effects between greenhouse gas emissions and its determinants in thailand.
\newblock In {\em Applications of Optimal Transport to Economics and Related Topics}, pages 591--604. Springer, 2024.

\bibitem{chakraborty2024early}
Amit~K Chakraborty, Shan Gao, Reza Miry, Pouria Ramazi, Russell Greiner, Mark~A Lewis, and Hao Wang.
\newblock An early warning indicator trained on stochastic disease-spreading models with different noises.
\newblock {\em Journal of the Royal Society Interface}, 21(217):20240199, 2024.

\bibitem{kermack1927contribution}
William~Ogilvy Kermack and Anderson~G McKendrick.
\newblock A contribution to the mathematical theory of epidemics.
\newblock {\em Proceedings of the Royal Society of London. Series A, Containing papers of a Mathematical and Physical character}, 115(772):700--721, 1927.

\bibitem{rosenzweig1963graphical}
Michael~L Rosenzweig and Robert~H MacArthur.
\newblock Graphical representation and stability conditions of predator-prey interactions.
\newblock {\em The American Naturalist}, 97(895):209--223, 1963.

\bibitem{kloeden1992numerical}
Peter~E. Kloeden and Eckhard Platen.
\newblock {\em Numerical Solution of Stochastic Differential Equations}, volume~23 of {\em Stochastic Modelling and Applied Probability}.
\newblock Springer, Berlin, Heidelberg, 1992.

\bibitem{wbeaNetworkStation}
{N}etwork {M}ap \& {S}tation {D}ata - {W}ood {B}uffalo {E}nvironmental {A}ssociation - wbea.org.
\newblock \url{https://wbea.org/data/network-map-station-data/}.
\newblock [Accessed 06-11-2024].

\bibitem{albertaenergyregulator}
{A}lberta {E}nergy {R}egulator. {A}lberta mineable oil sands plant statistics.
\newblock \url{https://www.aer.ca/data-and-performance-reports/statistical-reports/st39}.
\newblock [Accessed 01-23-2025].

\bibitem{cheggerud_2023_10109225}
Christopher~M Heggerud.
\newblock cheggerud/cb-prediction: Nov\_10\_2023.
\newblock \url{https://doi.org/10.5281/zenodo.10109225}, 2023.
\newblock [Accessed 24-01-2024].

\bibitem{heggerud2024predicting}
Christopher~M Heggerud, Jingjing Xu, Hao Wang, Mark~A Lewis, Ron~W Zurawell, Charlie~JG Loewen, Rolf~D Vinebrooke, and Pouria Ramazi.
\newblock Predicting imminent cyanobacterial blooms in lakes using incomplete timely data.
\newblock {\em Water Resources Research}, 60(2):e2023WR035540, 2024.

\bibitem{uhlenbeck1930theory}
George~E. Uhlenbeck and Leonard~S. Ornstein.
\newblock On the theory of the brownian motion.
\newblock {\em Physical Review}, 36(5):823--841, 1930.

\bibitem{breiman2001random}
Leo Breiman.
\newblock Random forests.
\newblock {\em Machine Learning}, 45:5--32, 2001.

\bibitem{pedregosa2011scikit}
Fabian Pedregosa et~al.
\newblock Scikit-learn: Machine learning in python.
\newblock {\em Journal of Machine Learning Research}, 12:2825--2830, 2011.

\bibitem{shi2015convolutional}
Xingjian Shi et~al.
\newblock Convolutional lstm network: A machine learning approach for precipitation nowcasting.
\newblock {\em Advances in Neural Information Processing Systems}, 28, 2015.

\bibitem{hochreiter1997long}
Sepp Hochreiter and J{\"u}rgen Schmidhuber.
\newblock Long short-term memory.
\newblock {\em Neural Computation}, 9:1735--1780, 1997.

\bibitem{bai2018empirical}
Shaojie Bai, J~Zico Kolter, and Vladlen Koltun.
\newblock An empirical evaluation of generic convolutional and recurrent networks for sequence modeling.
\newblock {\em arXiv preprint arXiv:1803.01271}, 2018.

\bibitem{friedman2001greedy}
Jerome~H Friedman.
\newblock Greedy function approximation: A gradient boosting machine.
\newblock {\em Annals of Statistics}, 29:1189--1232, 2001.

\bibitem{abadi2016tensorflow}
Mart{\'\i}n Abadi et~al.
\newblock Tensorflow: A system for large-scale machine learning.
\newblock {\em Proceedings of the 12th USENIX Symposium on Operating Systems Design and Implementation}, pages 265--283, 2016.

\bibitem{chollet2015keras}
Fran{\c{c}}ois Chollet.
\newblock Keras.
\newblock {\em GitHub repository}, 2015.
\newblock \url{https://keras.io}.

\bibitem{omalley2019kerastuner}
Tom O'Malley et~al.
\newblock Keras tuner, 2019.
\newblock \url{https://keras.io/keras_tuner}.

\bibitem{kingma2014adam}
Diederik~P Kingma and Jimmy Ba.
\newblock Adam: A method for stochastic optimization.
\newblock {\em arXiv preprint arXiv:1412.6980}, 2014.

\bibitem{prechelt1998early}
Lutz Prechelt.
\newblock Early stopping-but when?
\newblock {\em Neural Networks: Tricks of the Trade}, pages 55--69, 1998.

\bibitem{Teschl2012}
Gerald Teschl.
\newblock {\em Ordinary Differential Equations and Dynamical Systems}.
\newblock American Mathematical Society, 2012.

\bibitem{ZhangSchaeffer2019}
Zeyu Zhang and Hayden Schaeffer.
\newblock On the convergence of the sindy algorithm.
\newblock {\em Multiscale Modeling \& Simulation}, 17(3):948--972, 2019.

\bibitem{gronwall1919note}
Thomas~H. Gronwall.
\newblock Note on the derivatives with respect to a parameter of the solutions of a system of differential equations.
\newblock {\em Annals of Mathematics}, 20(4):292--296, 1919.

\bibitem{Rudin1987}
Walter Rudin.
\newblock {\em Real and Complex Analysis}.
\newblock McGraw--Hill, 3rd edition, 1987.

\bibitem{hamrani2020machine}
Abderrachid Hamrani, Abdolhamid Akbarzadeh, and Chandra~A Madramootoo.
\newblock Machine learning for predicting greenhouse gas emissions from agricultural soils.
\newblock {\em Science of The Total Environment}, 741:140338, 2020.

\bibitem{guo2017developing}
Pi~Guo, Tao Liu, Qin Zhang, Li~Wang, Jianpeng Xiao, Qingying Zhang, Ganfeng Luo, Zhihao Li, Jianfeng He, Yonghui Zhang, et~al.
\newblock Developing a dengue forecast model using machine learning: A case study in china.
\newblock {\em PLOS neglected tropical diseases}, 11(10):e0005973, 2017.

\bibitem{saha2025dispersion}
Esha Saha, Oscar Wang, Amit~K Chakraborty, Pablo~Venegas Garcia, Russell Milne, and Hao Wang.
\newblock Dispersion based recurrent neural network model for methane monitoring in albertan tailings ponds.
\newblock {\em Journal of Environmental Management}, 395:127748, 2025.

\bibitem{lowen2014roles}
Anice~C Lowen and John Steel.
\newblock Roles of humidity and temperature in shaping influenza seasonality.
\newblock {\em Journal of virology}, 88(14):7692--7695, 2014.

\bibitem{laws2017climate}
Angela~N Laws.
\newblock Climate change effects on predator--prey interactions.
\newblock {\em Current Opinion in Insect Science}, 23:28--34, 2017.

\bibitem{tseng2025apply}
Shih-Hsien Tseng, Pei~Han Feng, and Thi Ha~Trang Duong.
\newblock Apply data science and feature selection techniques to predict carbon dioxide emissions in taiwan.
\newblock {\em Stochastic Environmental Research and Risk Assessment}, pages 1--23, 2025.

\bibitem{spannaus2022inferring}
Adam Spannaus, Theodore Papamarkou, Samantha Erwin, and J~Blair Christian.
\newblock Inferring the spread of covid-19: the role of time-varying reporting rate in epidemiological modelling.
\newblock {\em Scientific Reports}, 12(1):10761, 2022.

\bibitem{song2013time}
Xiaodong Song, Brett~A Bryan, Auro~C Almeida, Keryn~I Paul, Gang Zhao, and Yin Ren.
\newblock Time-dependent sensitivity of a process-based ecological model.
\newblock {\em Ecological Modelling}, 265:114--123, 2013.

\bibitem{ornik2021learning}
Melkior Ornik and Ufuk Topcu.
\newblock Learning and planning for time-varying mdps using maximum likelihood estimation.
\newblock {\em Journal of Machine Learning Research}, 22(35):1--40, 2021.

\bibitem{ahmed2010empirical}
Nesreen~K Ahmed, Amir~F Atiya, Neamat~El Gayar, and Hisham El-Shishiny.
\newblock An empirical comparison of machine learning models for time series forecasting.
\newblock {\em Econometric reviews}, 29(5-6):594--621, 2010.

\bibitem{muter2024future}
Bushra~Majeed Muter and Ayat~Jasim Mohammed.
\newblock The future of ai: Assessing the strengths and limitations of deep learning and machine learning.
\newblock {\em Wisdom Journal For Studies \& Research}, 4(06):348--374, 2024.

\bibitem{chakraborty2025deep}
Amit~K Chakraborty, Reza Miry, Russell Greiner, Mark~A Lewis, Hao Wang, Tianyu Guan, and Pouria Ramazi.
\newblock Deep learning for disease outbreak prediction: a parallel {LSTM}-{CNN} model.
\newblock {\em Journal of the Royal Society Interface}, 22(229):20250046, 2025.

\bibitem{bates2024cross}
Stephen Bates, Trevor Hastie, and Robert Tibshirani.
\newblock Cross-validation: what does it estimate and how well does it do it?
\newblock {\em Journal of the American Statistical Association}, 119(546):1434--1445, 2024.

\bibitem{bergmeir2018note}
Christoph Bergmeir, Rob~J Hyndman, and Bonsoo Koo.
\newblock A note on the validity of cross-validation for evaluating autoregressive time series prediction.
\newblock {\em Computational Statistics \& Data Analysis}, 120:70--83, 2018.

\bibitem{chakraborty_2026_18473544}
Amit~K. Chakraborty.
\newblock Data and codes: Turning mechanistic models into forecasters by using machine learning, February 2026.

\bibitem{openai2023chatgpt}
{OpenAI}.
\newblock Chatgpt, 2023.
\newblock Large language model used for proofreading assistance.

\end{thebibliography}

% fix figure captions 

\section*{SI Appendix}

\begin{figure}[htbp]
    \centering
    % First row
    \begin{subfigure}{0.8\textwidth}
        \includegraphics[width=0.9\linewidth]{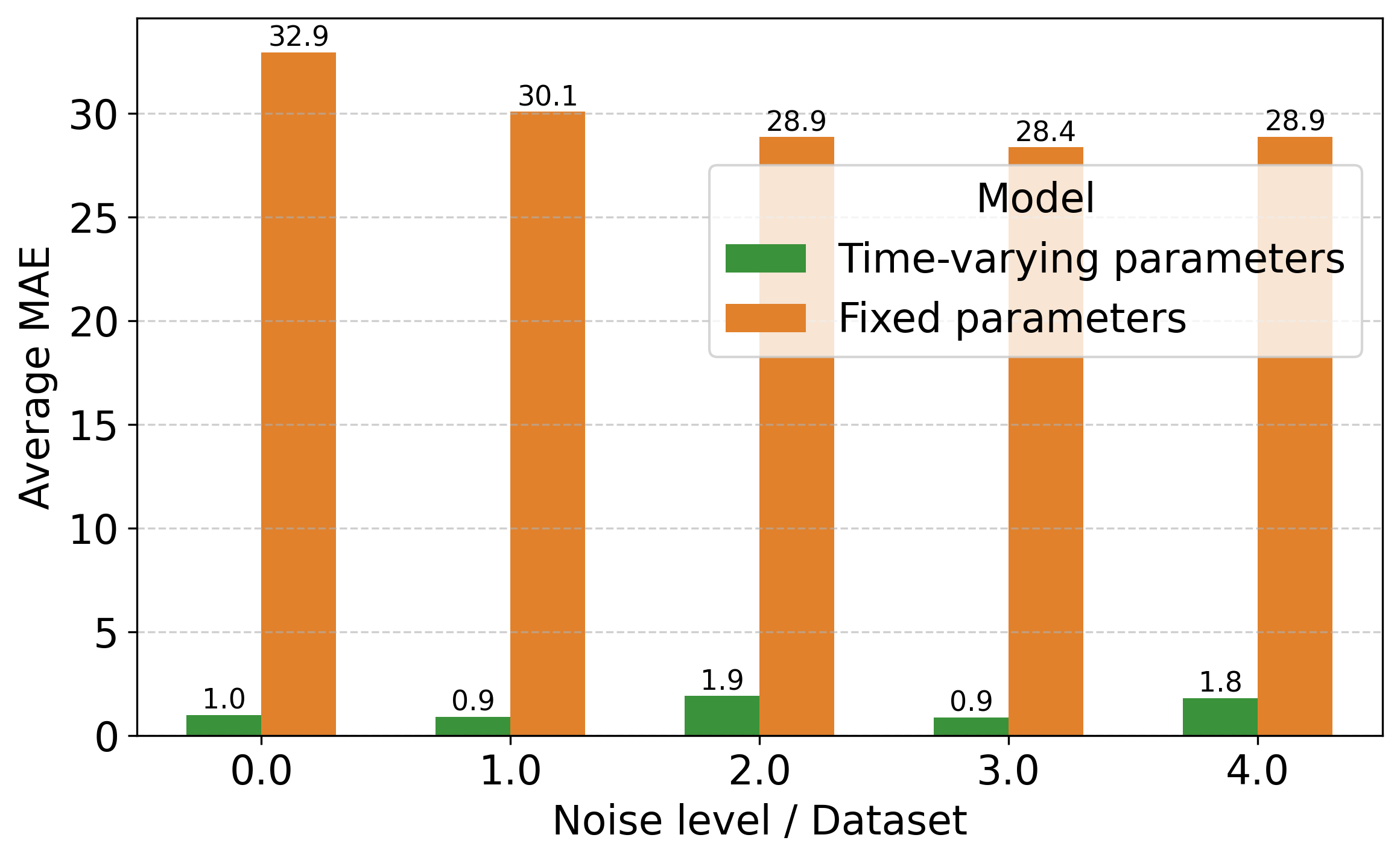}
        \put(-320,200){\textbf{(a)}}
    \end{subfigure}

    \vspace{0.0cm}

    % Second row
    \begin{subfigure}{0.8\textwidth}
        \includegraphics[width=0.9\linewidth]{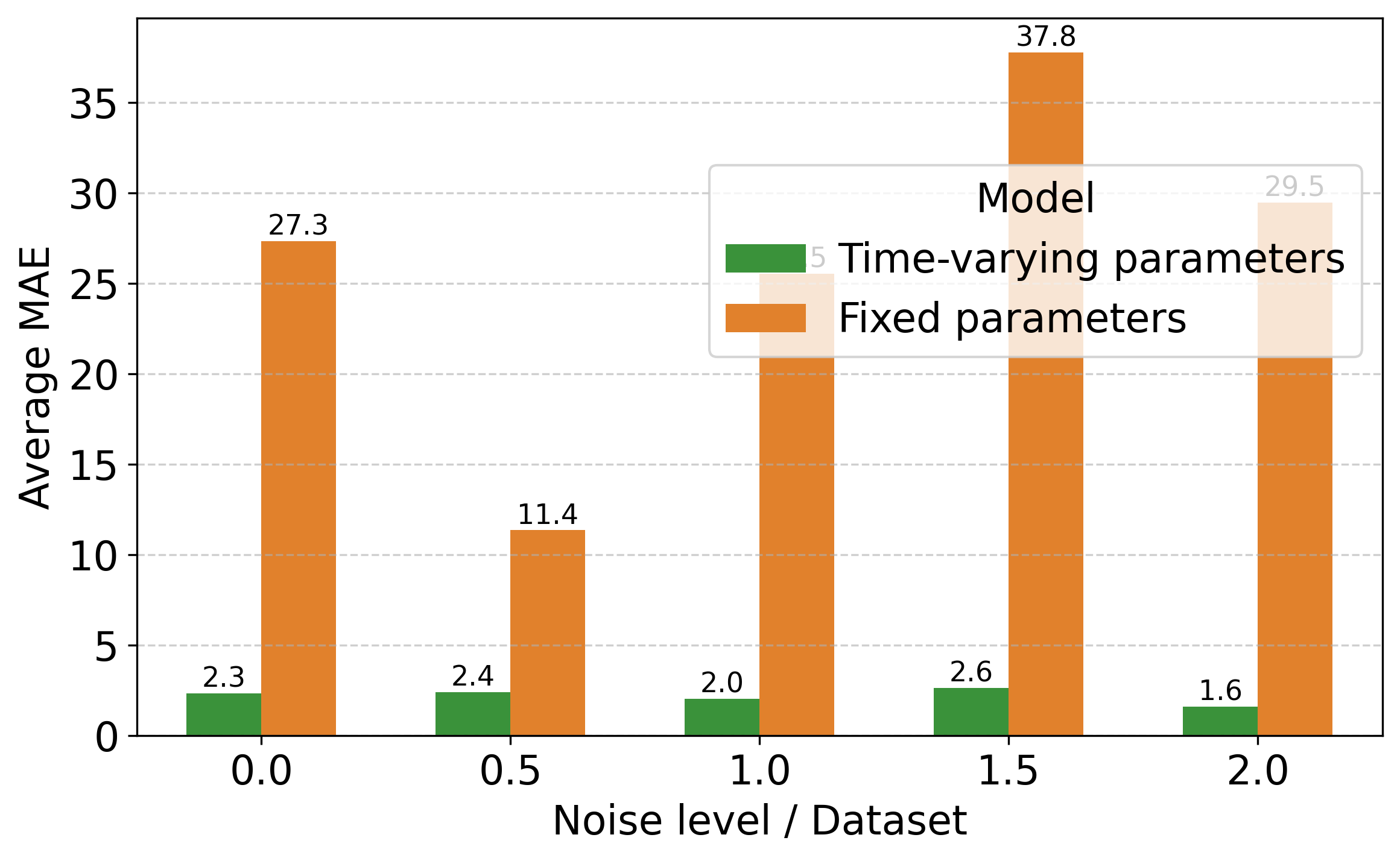}
        \put(-320,200){\textbf{(b)}}
    \end{subfigure}

        \vspace{0.0cm}

    % Second row
    \begin{subfigure}{0.8\textwidth}
        \includegraphics[width=0.9\linewidth]{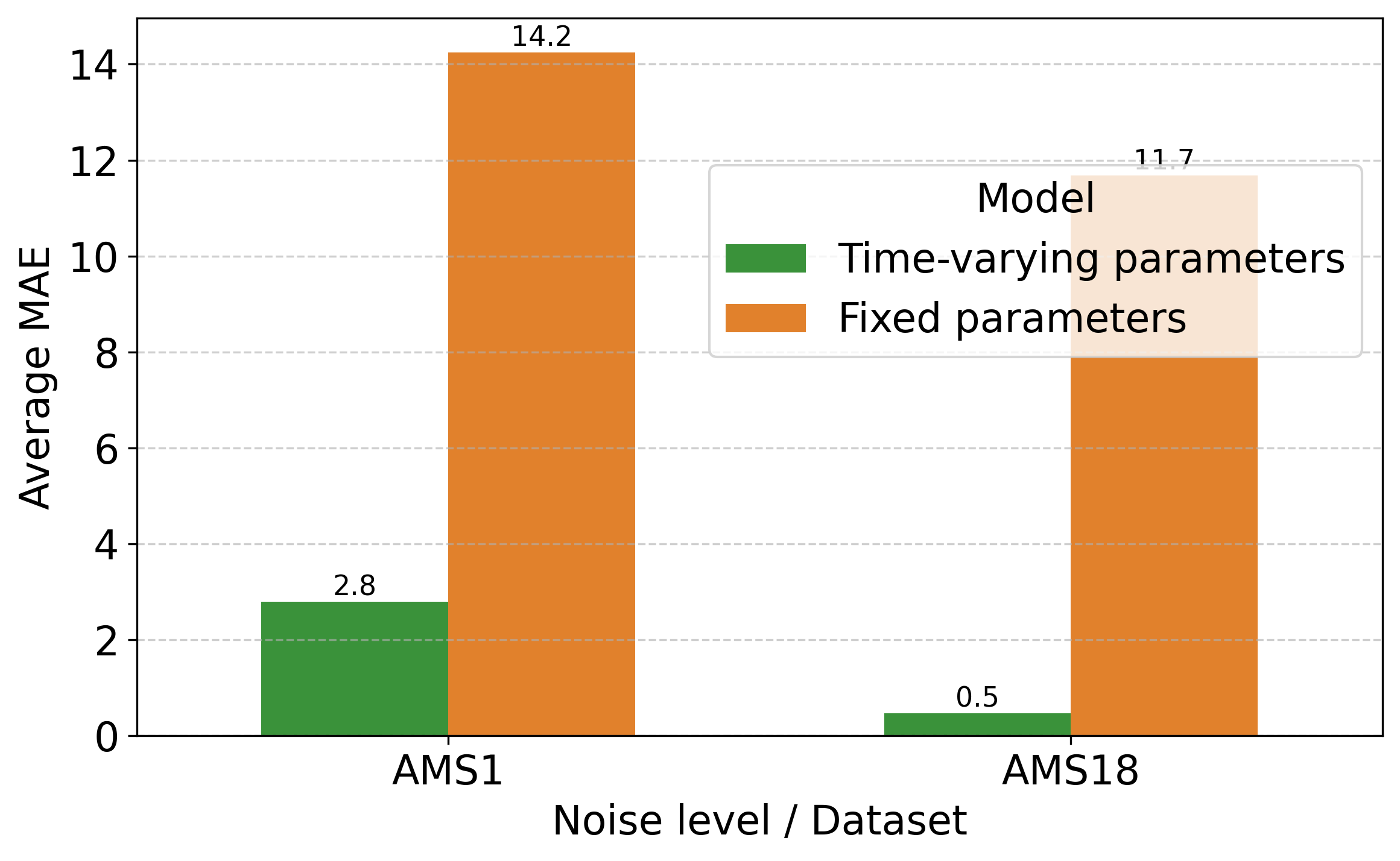}
        \put(-320,200){\textbf{(c)}}
    \end{subfigure}

    \caption{Average learning MAE across different noise in (a) SIR dataset, (b) CR dataset, and (c) gases dataset.}
    \label{train_mae_noise_comparison}
\end{figure}

\begin{figure}[htbp]
    \centering
    % First row
    \begin{subfigure}{0.8\textwidth}
        \includegraphics[width=0.9\linewidth]{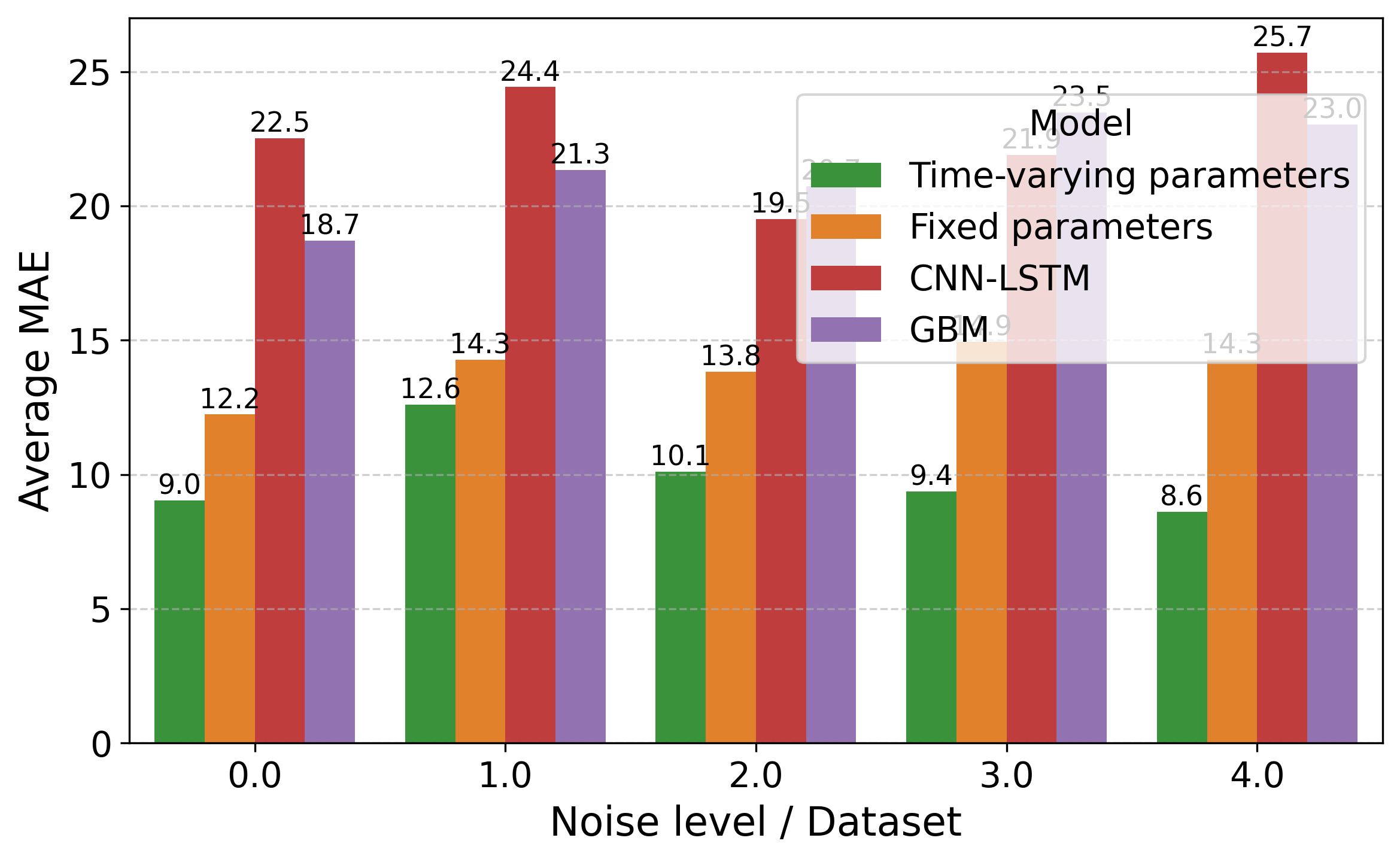}
        \put(-320,200){\textbf{(a)}}
    \end{subfigure}

    \vspace{0.0cm}

    % Second row
    \begin{subfigure}{0.8\textwidth}
        \includegraphics[width=0.9\linewidth]{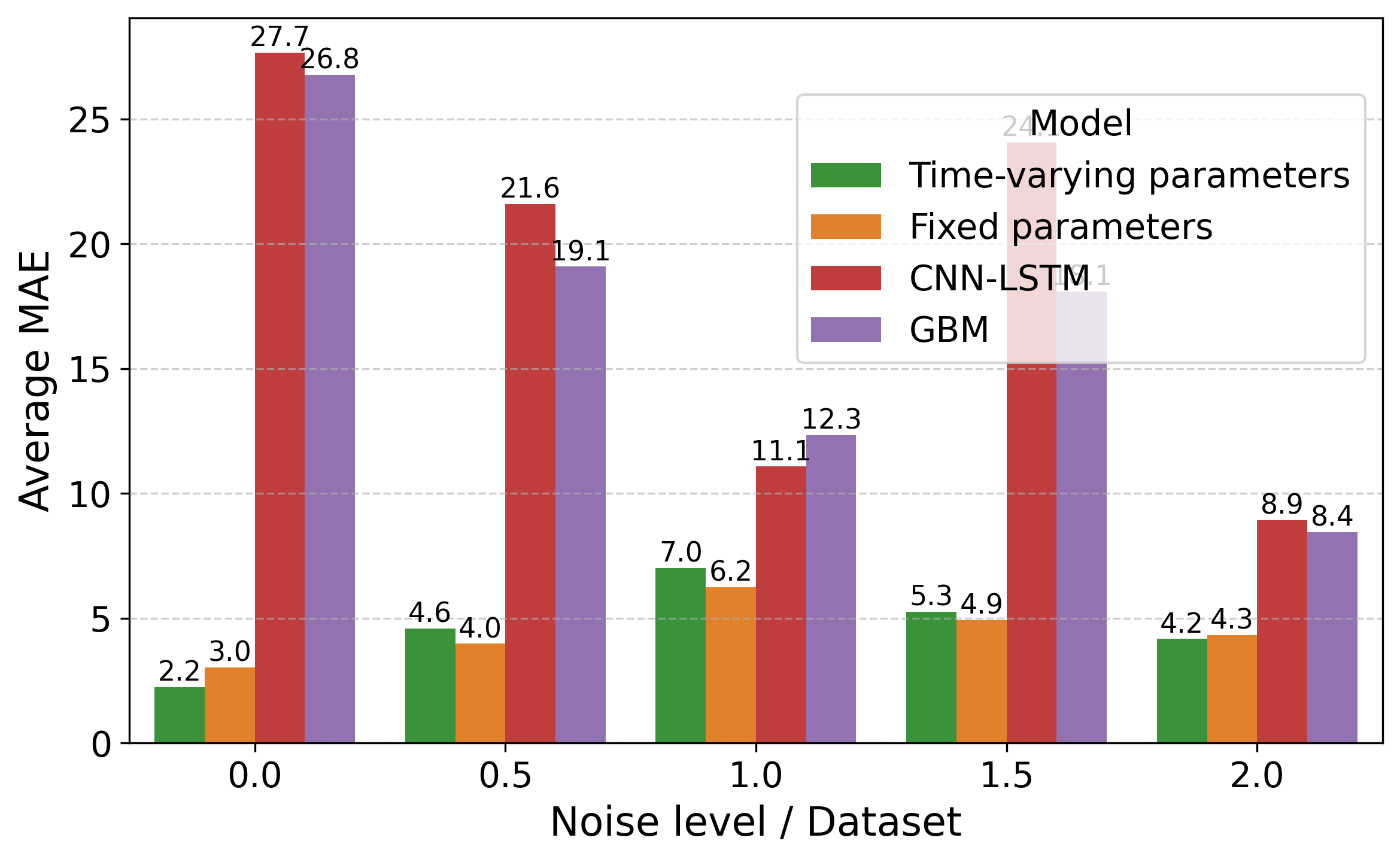}
        \put(-320,200){\textbf{(b)}}
    \end{subfigure}

        \vspace{0.0cm}

    % Second row
    \begin{subfigure}{0.8\textwidth}
        \includegraphics[width=0.9\linewidth]{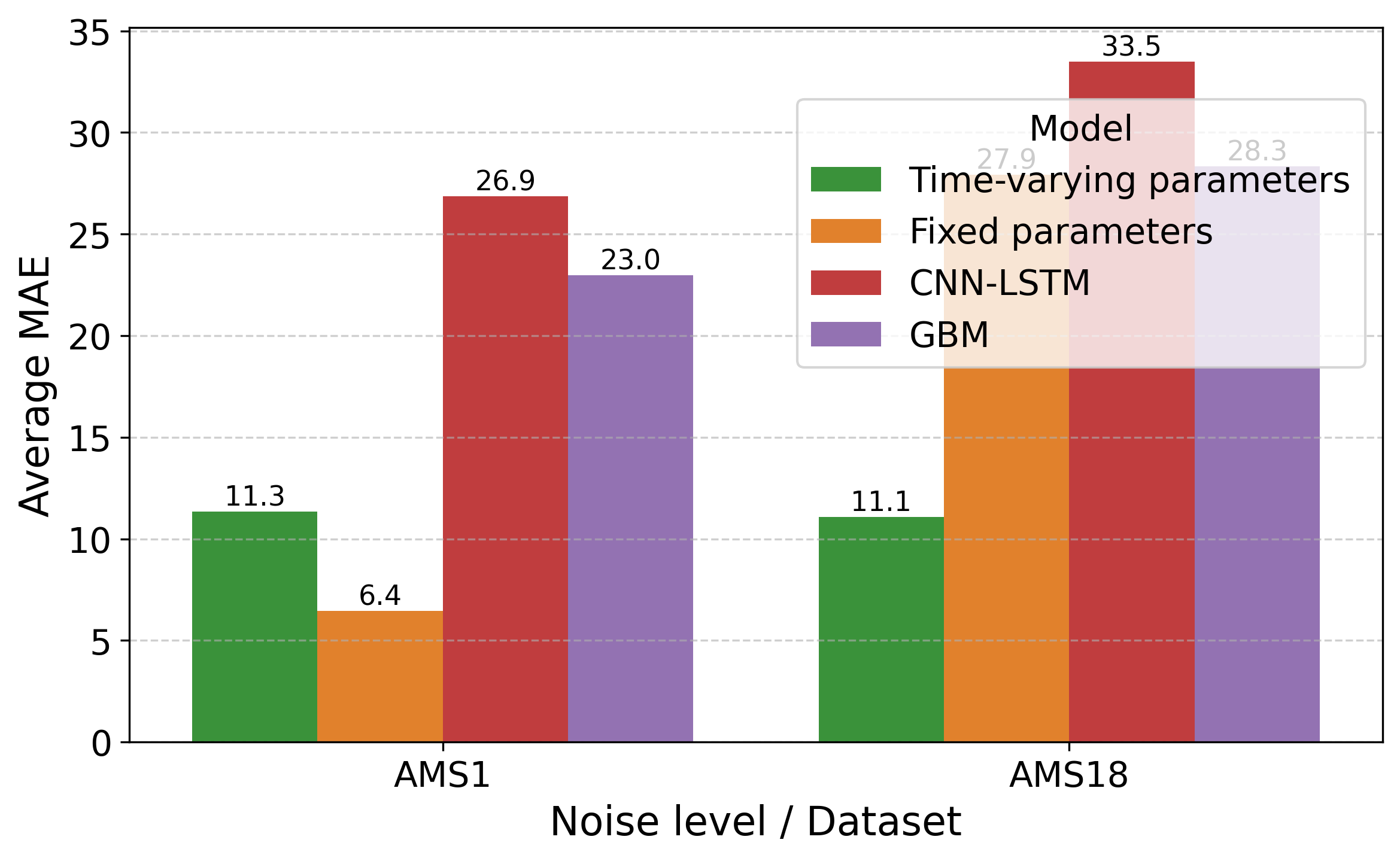}
        \put(-320,200){\textbf{(c)}}
    \end{subfigure}

    \caption{Average forecasting MAE across different noise in (a) SIR dataset, (b) CR dataset, and (c) gases dataset.}
    \label{test_mae_noise_comparison}
\end{figure}

\begin{figure}[htbp]
    \centering
    % First row
    \begin{subfigure}{0.8\textwidth}
        \includegraphics[width=0.9\linewidth]{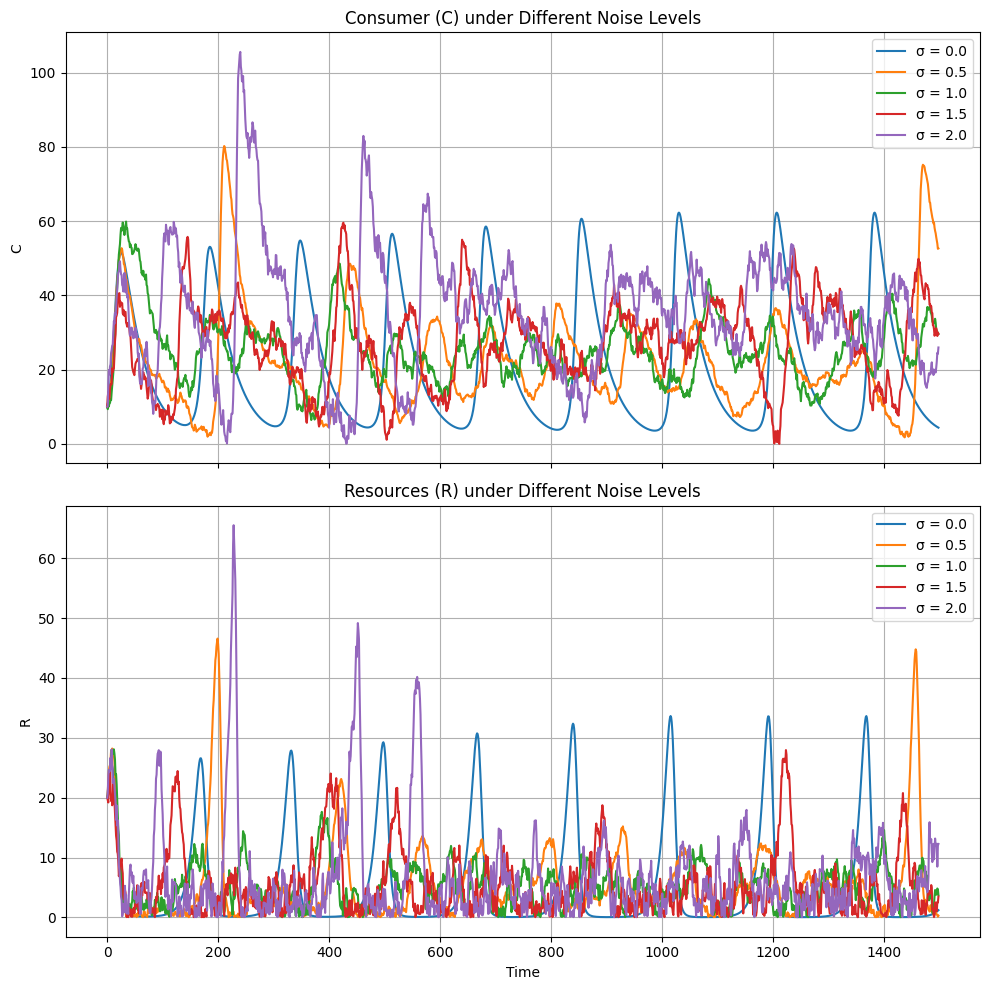}
        \put(-320,320){\textbf{(a)}}
    \end{subfigure}

    \vspace{0.0cm}

    % Second row
    \begin{subfigure}{0.8\textwidth}
        \includegraphics[width=0.9\linewidth]{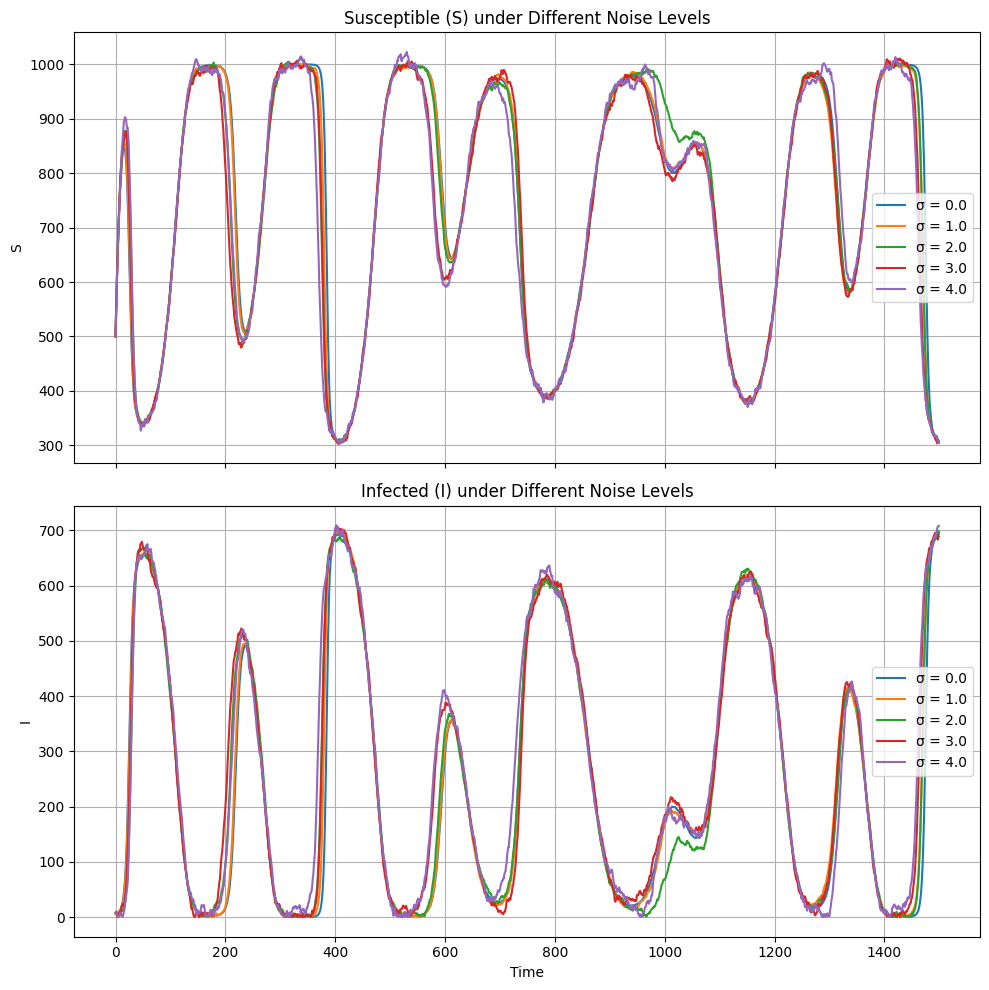}
        \put(-320,320){\textbf{(b)}}
    \end{subfigure}

    \caption{Simulations of a) CR model and b) SIR model.}
    \label{simulations}
\end{figure}

\begin{figure}[htbp]
    \centering
    % First row
    \begin{subfigure}{0.8\textwidth}
        \includegraphics[width=0.9\linewidth]{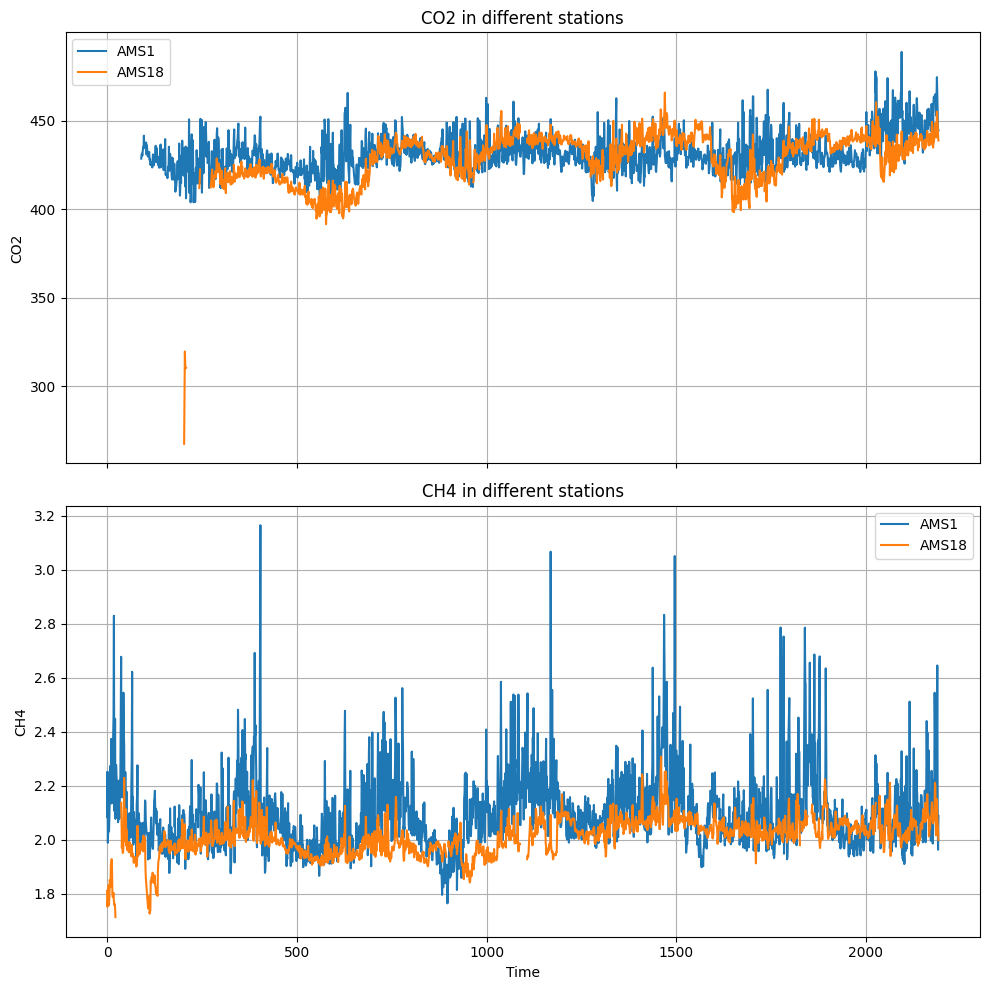}
        \put(-320,320){\textbf{(a)}}
    \end{subfigure}

    \vspace{0.0cm}

    % Second row
    \begin{subfigure}{0.8\textwidth}
        \includegraphics[width=0.9\linewidth]{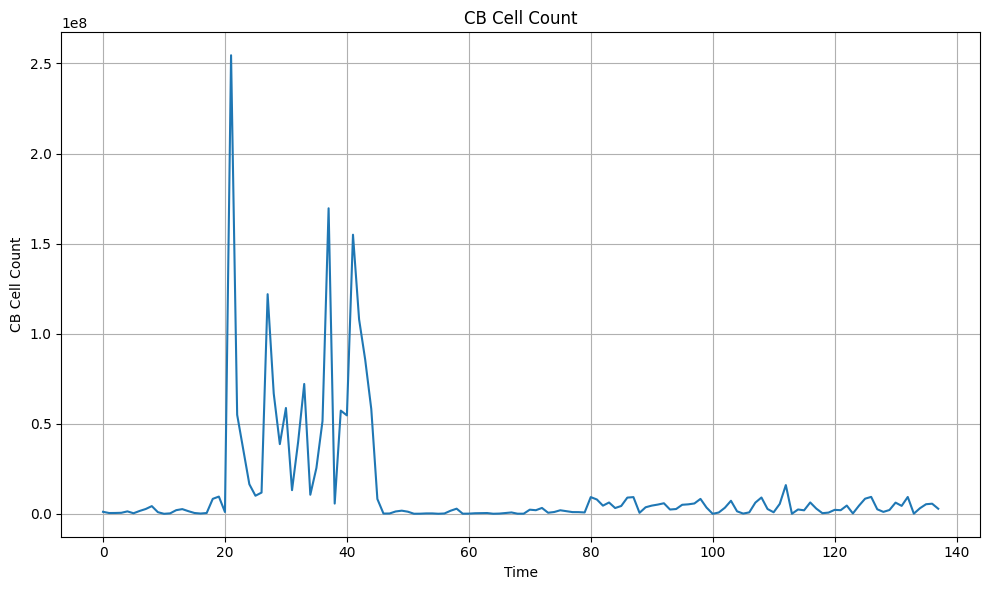}
        \put(-320,195){\textbf{(b)}}
    \end{subfigure}

    \caption{Time series of a) gases dataset and b) CB dataset.}
    \label{time-series}
\end{figure}

\begin{figure}[htbp]
\centering
\scriptsize

% ======== ROW 1: MAIN MODELS ========
\begin{subfigure}{0.24\textwidth}
    \includegraphics[width=\linewidth]{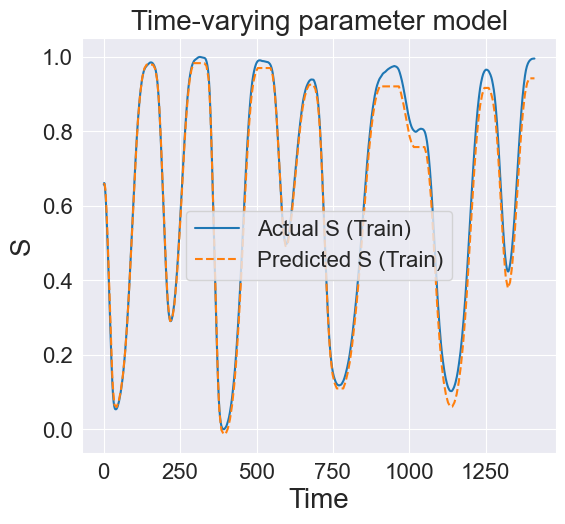}
\end{subfigure}\hfill
\begin{subfigure}{0.24\textwidth}
    \includegraphics[width=\linewidth]{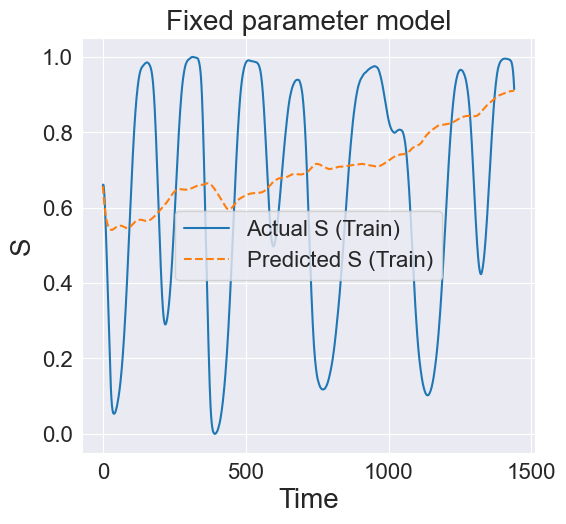}
\end{subfigure}\hfill
\begin{subfigure}{0.24\textwidth}
    \includegraphics[width=\linewidth]{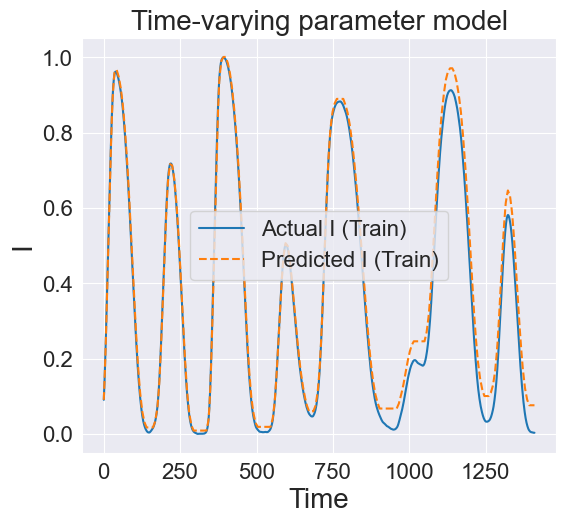}
\end{subfigure}
\begin{subfigure}{0.24\textwidth}
    \includegraphics[width=\linewidth]{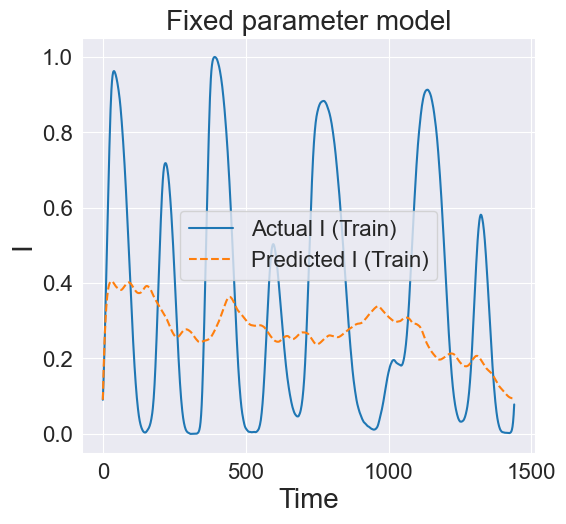}
\end{subfigure}

\vspace{0cm}

% ======== ROW 2: FIXED MODELS ========
\begin{subfigure}{0.24\textwidth}
    \includegraphics[width=\linewidth]{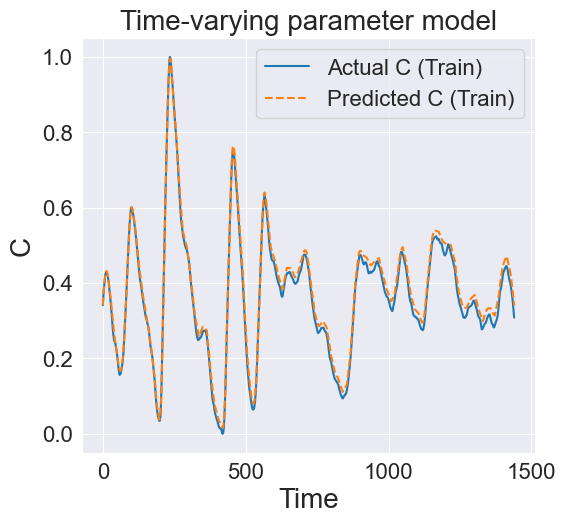}
\end{subfigure}\hfill
\begin{subfigure}{0.24\textwidth}
    \includegraphics[width=\linewidth]{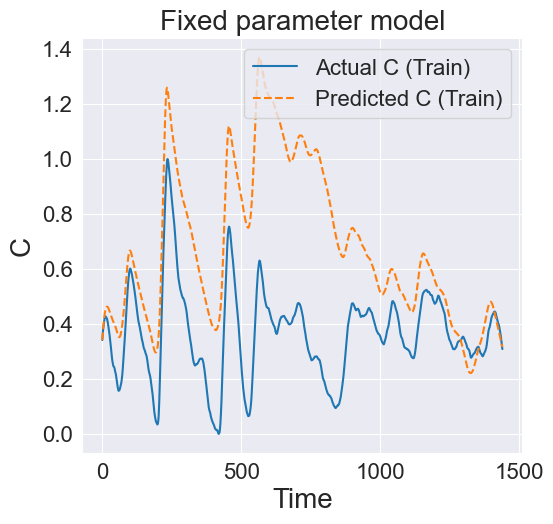}
\end{subfigure}\hfill
\begin{subfigure}{0.24\textwidth}
    \includegraphics[width=\linewidth]{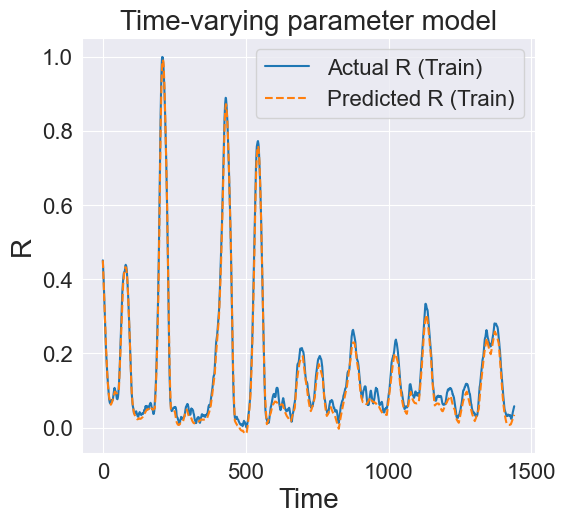}
\end{subfigure}
\begin{subfigure}{0.24\textwidth}
    \includegraphics[width=\linewidth]{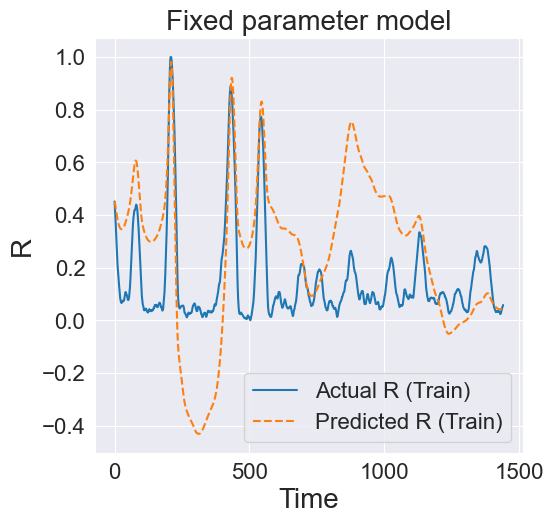}
\end{subfigure}

\vspace{0cm}

% ======== ROW 3: MAIN MODELS ========
\begin{subfigure}{0.24\textwidth}
    \includegraphics[width=\linewidth]{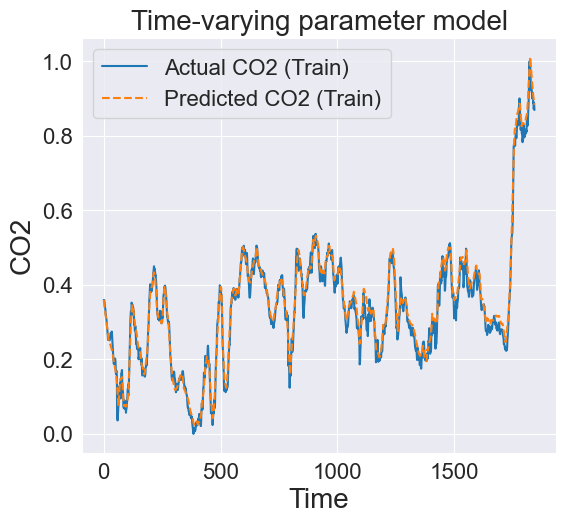}
\end{subfigure}\hfill
\begin{subfigure}{0.24\textwidth}
    \includegraphics[width=\linewidth]{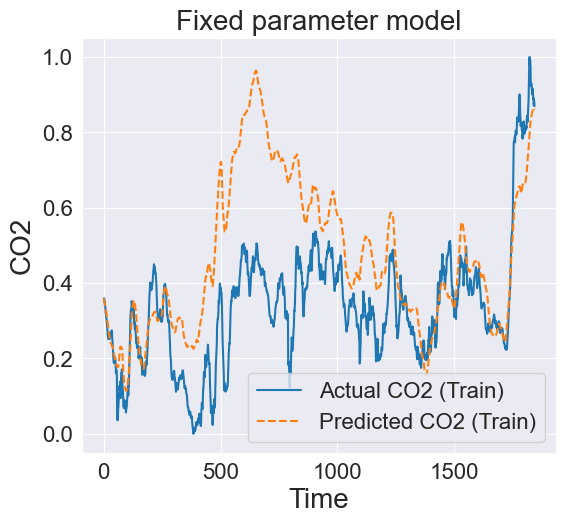}
\end{subfigure}\hfill
\begin{subfigure}{0.24\textwidth}
    \includegraphics[width=\linewidth]{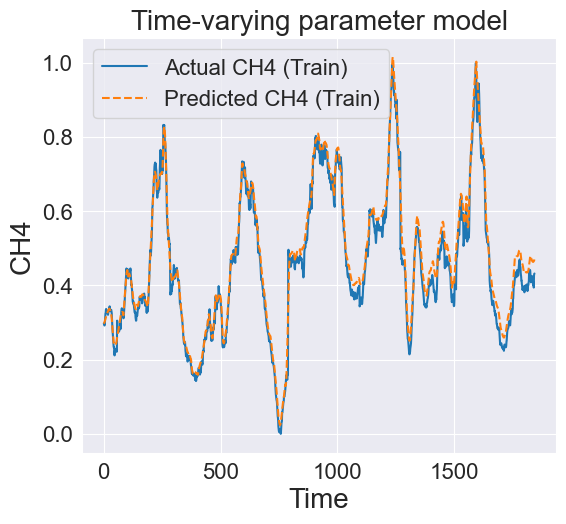}
\end{subfigure}
\begin{subfigure}{0.24\textwidth}
    \includegraphics[width=\linewidth]{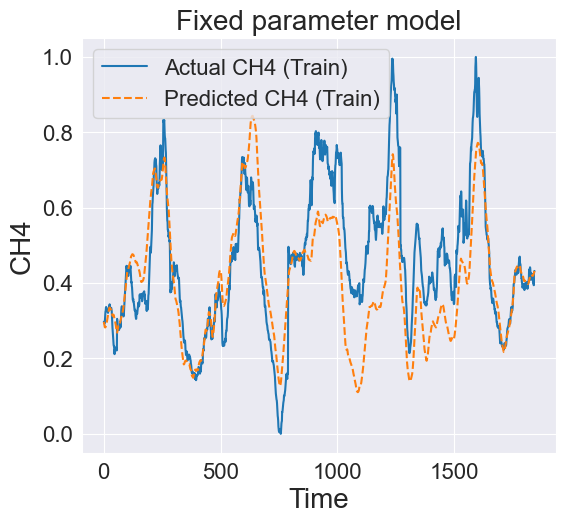}
\end{subfigure}

\vspace{0cm}

% ======== ROW 4: FIXED MODELS ========
\begin{subfigure}{0.24\textwidth}
    \includegraphics[width=\linewidth]{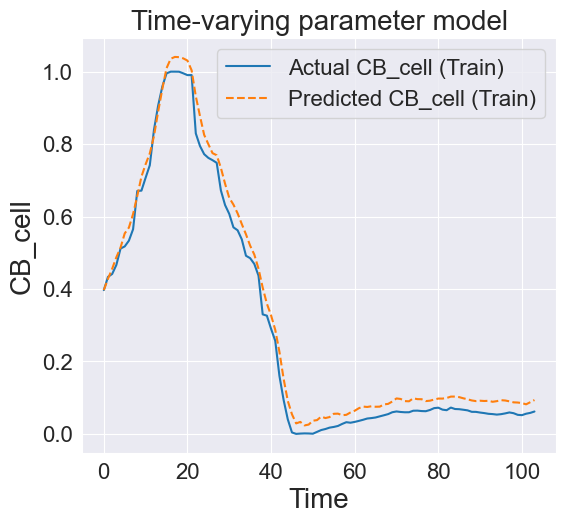}
\end{subfigure}\hfill
\begin{subfigure}{0.24\textwidth}
    \includegraphics[width=\linewidth]{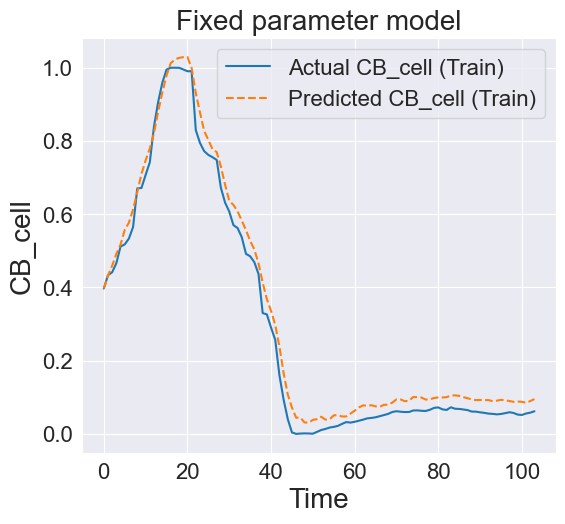}
\end{subfigure}

\caption{Learning comparison between the time-varying parameter and fixed parameter model across four datasets. First and third columns are learning made by the time-varying parameter model, while the second and fourth columns are learning made by the fixed parameter model.}
\label{sample_plot_learning}
\end{figure}

\begin{figure}[htbp]
\centering
\scriptsize

% ======== ROW 1: MAIN MODELS ========
\begin{subfigure}{0.24\textwidth}
    \includegraphics[width=\linewidth]{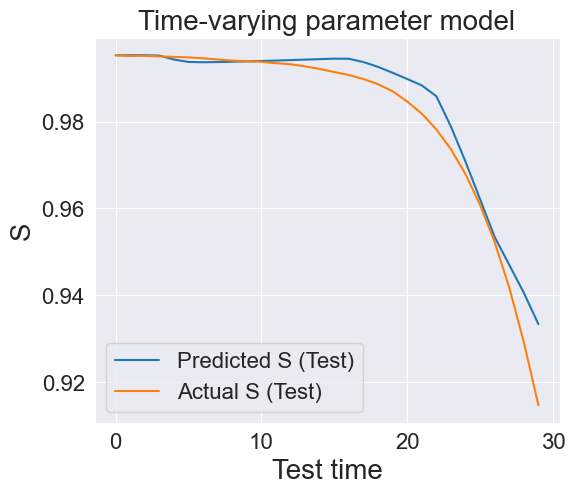}
\end{subfigure}\hfill
\begin{subfigure}{0.24\textwidth}
    \includegraphics[width=\linewidth]{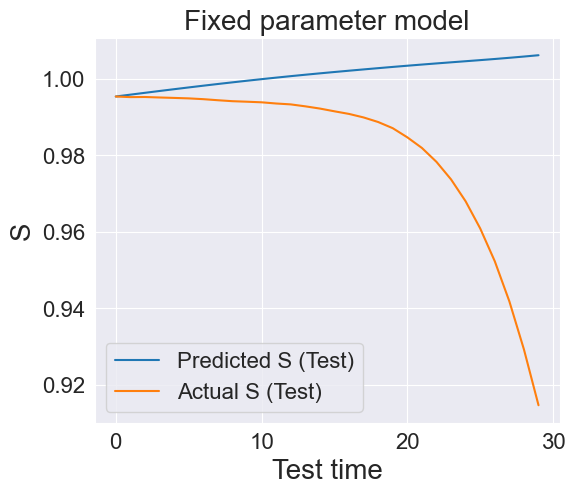}
\end{subfigure}\hfill
\begin{subfigure}{0.24\textwidth}
    \includegraphics[width=\linewidth]{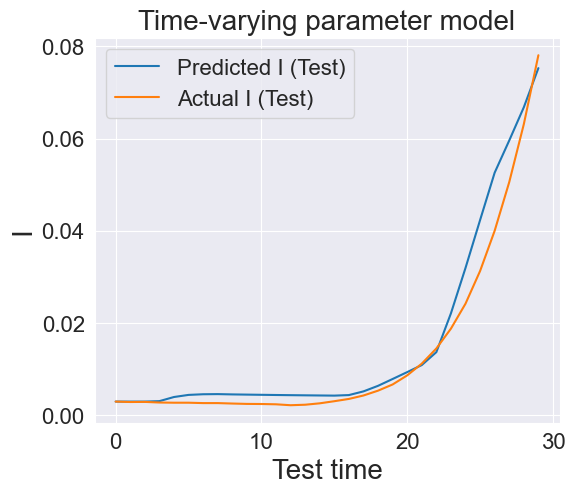}
\end{subfigure}
\begin{subfigure}{0.24\textwidth}
    \includegraphics[width=\linewidth]{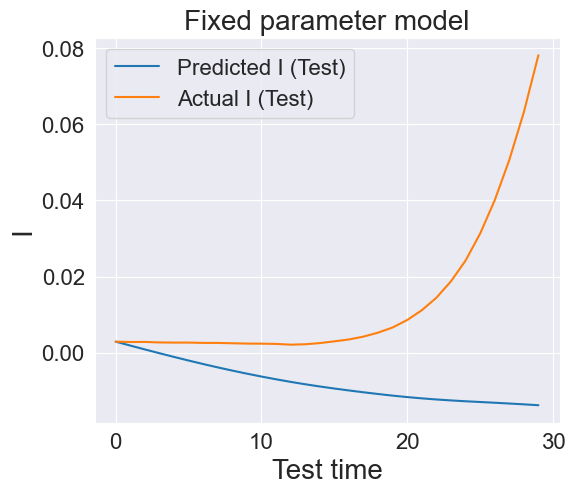}
\end{subfigure}

\vspace{0cm}

% ======== ROW 2: FIXED MODELS ========
\begin{subfigure}{0.24\textwidth}
    \includegraphics[width=\linewidth]{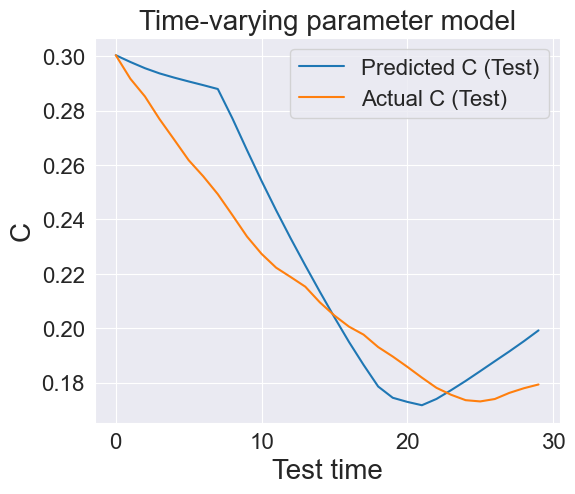}
\end{subfigure}\hfill
\begin{subfigure}{0.24\textwidth}
    \includegraphics[width=\linewidth]{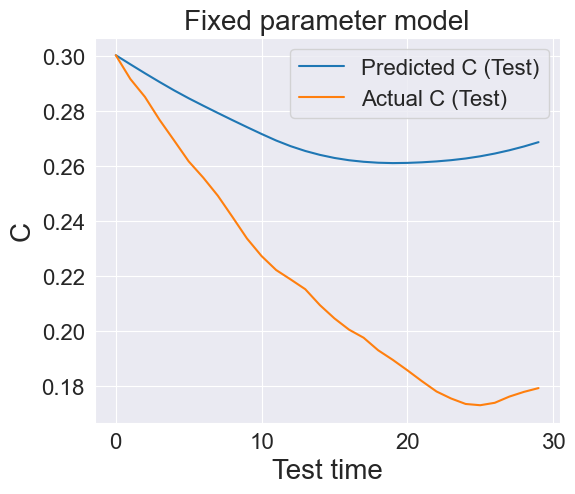}
\end{subfigure}\hfill
\begin{subfigure}{0.24\textwidth}
    \includegraphics[width=\linewidth]{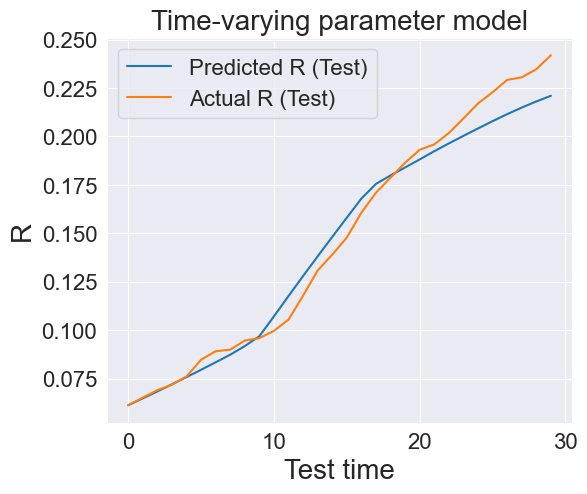}
\end{subfigure}
\begin{subfigure}{0.24\textwidth}
    \includegraphics[width=\linewidth]{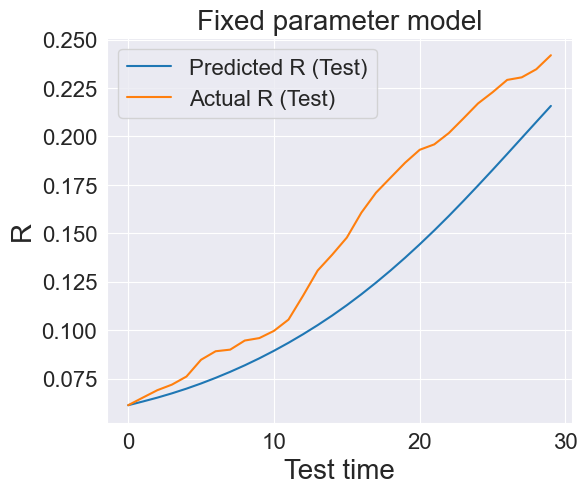}
\end{subfigure}

\vspace{0cm}

% ======== ROW 3: MAIN MODELS ========
\begin{subfigure}{0.24\textwidth}
    \includegraphics[width=\linewidth]{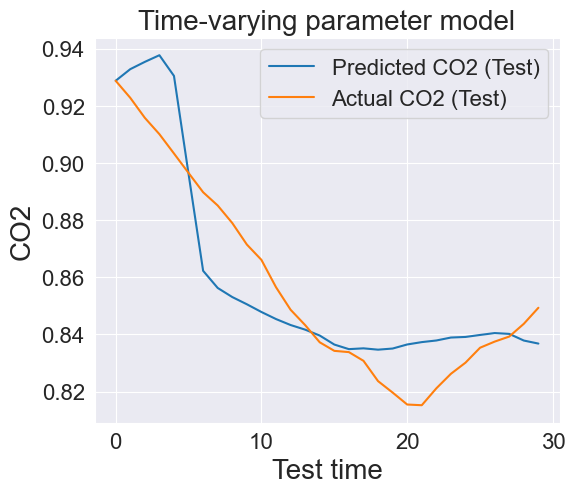}
\end{subfigure}\hfill
\begin{subfigure}{0.24\textwidth}
    \includegraphics[width=\linewidth]{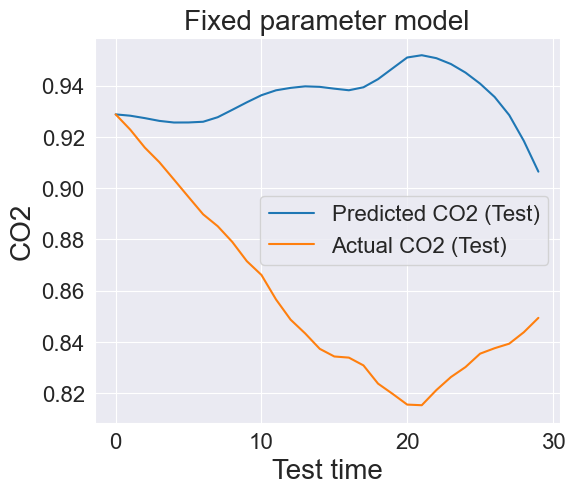}
\end{subfigure}\hfill
\begin{subfigure}{0.24\textwidth}
    \includegraphics[width=\linewidth]{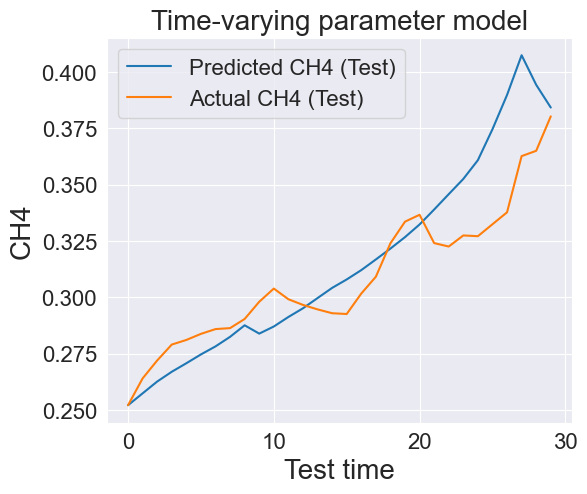}
\end{subfigure}
\begin{subfigure}{0.24\textwidth}
    \includegraphics[width=\linewidth]{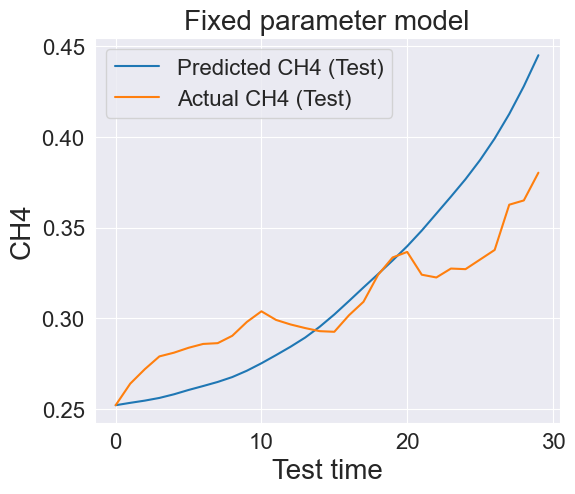}
\end{subfigure}

\vspace{0cm}

% ======== ROW 4: FIXED MODELS ========
\begin{subfigure}{0.24\textwidth}
    \includegraphics[width=\linewidth]{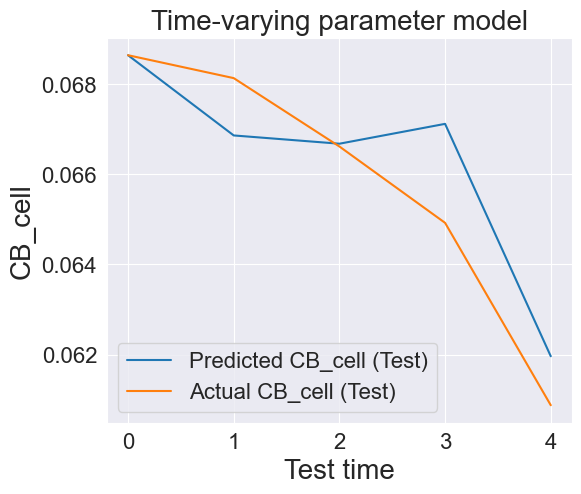}
\end{subfigure}\hfill
\begin{subfigure}{0.24\textwidth}
    \includegraphics[width=\linewidth]{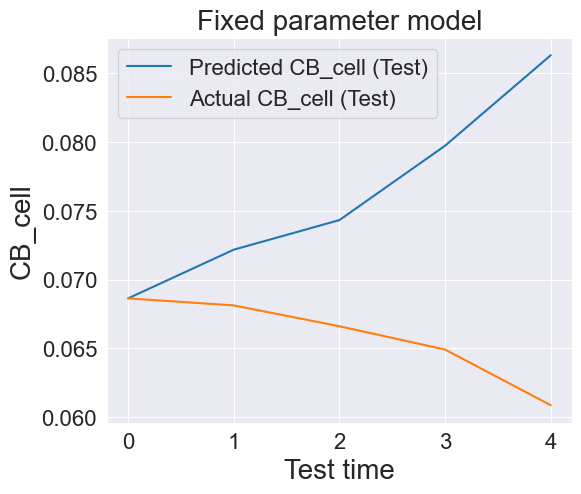}
\end{subfigure}

\caption{ Forecasting comparison between the time-varying parameter and fixed parameter model across four datasets. First and third columns are predictions made by the time-varying parameter model, while the second and fourth columns are predictions made by the fixed parameter model.}
\label{sample_plot_forecasting}
\end{figure}

\begin{table}[htb!]
\scriptsize
\caption{Number of time-varying parameters, interval lengths, and MAEs across different test folds using the expanding-window cross-validation (CV) method and the optimal configuration (OC) for each noise setting in the CR model simulations.}
\label{tab:CR}
\begin{tabular}{|m{1cm}|m{1cm}|m{1cm}|m{1cm}|m{1cm}|m{1cm}|m{1cm}|m{1cm}|m{1cm}|m{1cm}|}
\hline
\multicolumn{1}{|l|}{\textbf{Dataset}} & \multicolumn{1}{l|}{\textbf{Noise}} & \textbf{Variable}           & \textbf{Fold} & \textbf{CV: Interval length} & \textbf{CV: Num. of time varying params.} & \textbf{CV: MAE test} & \textbf{OC: Interval length} & \textbf{OC: Num. of time varying params.} & \textbf{OC: MAE test} \\ \hline
\multirow{50}{*}{\textbf{CR}}          & \multirow{10}{*}{\textbf{0.0}}      & \multirow{5}{*}{\textbf{C}} & 1             & 28                       & 1                                   & 1                 & 21                       & 2                                   & 0.29              \\ \cline{4-10} 
                                       &                                     &                             & 2             & 28                       & 1                                   & 2.02              & 14                       & 2                                   & 0.59              \\ \cline{4-10} 
                                       &                                     &                             & 3             & 28                       & 1                                   & 0.93              & 7                        & 5                                   & 0.46              \\ \cline{4-10} 
                                       &                                     &                             & 4             & 28                       & 1                                   & 0.1               & 7                        & 1                                   & 0.07              \\ \cline{4-10} 
                                       &                                     &                             & 5             & 28                       & 1                                   & 0.21              & 21                       & 1                                   & 0.08              \\ \cline{3-10} 
                                       &                                     & \multirow{5}{*}{\textbf{R}} & 1             & 28                       & 1                                   & 5.22              & 28                       & 2                                   & 1.72              \\ \cline{4-10} 
                                       &                                     &                             & 2             & 28                       & 1                                   & 3.79              & 21                       & 2                                   & 2.16              \\ \cline{4-10} 
                                       &                                     &                             & 3             & 28                       & 1                                   & 6                 & 14                       & 4                                   & 1.07              \\ \cline{4-10} 
                                       &                                     &                             & 4             & 28                       & 1                                   & 1.56              & 7                        & 1                                   & 0.16              \\ \cline{4-10} 
                                       &                                     &                             & 5             & 28                       & 1                                   & 1.62              & 14                       & 1                                   & 0.16              \\ \cline{2-10} 
                                       & \multirow{10}{*}{\textbf{0.5}}      & \multirow{5}{*}{\textbf{C}} & 1             & 21                       & 1                                   & 2.75              & 28                       & 1                                   & 2.42              \\ \cline{4-10} 
                                       &                                     &                             & 2             & 21                       & 1                                   & 5.54              & 14                       & 2                                   & 3.86              \\ \cline{4-10} 
                                       &                                     &                             & 3             & 21                       & 1                                   & 1.39              & 21                       & 6                                   & 0.72              \\ \cline{4-10} 
                                       &                                     &                             & 4             & 21                       & 1                                   & 2.97              & 28                       & 1                                   & 2.77              \\ \cline{4-10} 
                                       &                                     &                             & 5             & 14                       & 1                                   & 4.06              & 28                       & 2                                   & 2.85              \\ \cline{3-10} 
                                       &                                     & \multirow{5}{*}{\textbf{R}} & 1             & 21                       & 1                                   & 1.55              & 28                       & 1                                   & 1.45              \\ \cline{4-10} 
                                       &                                     &                             & 2             & 21                       & 1                                   & 5.53              & 14                       & 4                                   & 4.01              \\ \cline{4-10} 
                                       &                                     &                             & 3             & 21                       & 1                                   & 1.85              & 28                       & 4                                   & 0.72              \\ \cline{4-10} 
                                       &                                     &                             & 4             & 21                       & 1                                   & 15.28             & 7                        & 2                                   & 8.01              \\ \cline{4-10} 
                                       &                                     &                             & 5             & 14                       & 1                                   & 4.91              & 7                        & 2                                   & 3.63              \\ \cline{2-10} 
                                       & \multirow{10}{*}{\textbf{1.0}}      & \multirow{5}{*}{\textbf{C}} & 1             & 28                       & 2                                   & 3.1               & 28                       & 6                                   & 1.55              \\ \cline{4-10} 
                                       &                                     &                             & 2             & 28                       & 2                                   & 4.48              & 21                       & 5                                   & 2.9               \\ \cline{4-10} 
                                       &                                     &                             & 3             & 28                       & 2                                   & 2.38              & 21                       & 2                                   & 1.54              \\ \cline{4-10} 
                                       &                                     &                             & 4             & 28                       & 2                                   & 1.06              & 14                       & 6                                   & 0.55              \\ \cline{4-10} 
                                       &                                     &                             & 5             & 28                       & 2                                   & 19.81             & 14                       & 4                                   & 0.95              \\ \cline{3-10} 
                                       &                                     & \multirow{5}{*}{\textbf{R}} & 1             & 28                       & 2                                   & 2.77              & 28                       & 1                                   & 2.69              \\ \cline{4-10} 
                                       &                                     &                             & 2             & 28                       & 2                                   & 16.36             & 21                       & 6                                   & 10.66             \\ \cline{4-10} 
                                       &                                     &                             & 3             & 28                       & 2                                   & 2.3               & 28                       & 2                                   & 2.3               \\ \cline{4-10} 
                                       &                                     &                             & 4             & 28                       & 2                                   & 3.5               & 28                       & 6                                   & 1.23              \\ \cline{4-10} 
                                       &                                     &                             & 5             & 28                       & 2                                   & 14.45             & 21                       & 4                                   & 3.06              \\ \cline{2-10} 
                                       & \multirow{10}{*}{\textbf{1.5}}      & \multirow{5}{*}{\textbf{C}} & 1             & 14                       & 1                                   & 3.94              & 7                        & 5                                   & 1.18              \\ \cline{4-10} 
                                       &                                     &                             & 2             & 14                       & 1                                   & 3.46              & 21                       & 3                                   & 1.04              \\ \cline{4-10} 
                                       &                                     &                             & 3             & 14                       & 1                                   & 9.69              & 7                        & 6                                   & 1.45              \\ \cline{4-10} 
                                       &                                     &                             & 4             & 14                       & 1                                   & 6.17              & 28                       & 2                                   & 2.69              \\ \cline{4-10} 
                                       &                                     &                             & 5             & 28                       & 1                                   & 5.04              & 14                       & 3                                   & 1.08              \\ \cline{3-10} 
                                       &                                     & \multirow{5}{*}{\textbf{R}} & 1             & 14                       & 1                                   & 2.84              & 21                       & 4                                   & 1.54              \\ \cline{4-10} 
                                       &                                     &                             & 2             & 14                       & 1                                   & 4.03              & 14                       & 6                                   & 1.62              \\ \cline{4-10} 
                                       &                                     &                             & 3             & 14                       & 1                                   & 5.8               & 21                       & 6                                   & 5.39              \\ \cline{4-10} 
                                       &                                     &                             & 4             & 14                       & 1                                   & 8.27              & 14                       & 6                                   & 3.62              \\ \cline{4-10} 
                                       &                                     &                             & 5             & 28                       & 1                                   & 3.35              & 28                       & 3                                   & 1.21              \\ \cline{2-10} 
                                       & \multirow{10}{*}{\textbf{2.0}}      & \multirow{5}{*}{\textbf{C}} & 1             & 14                       & 2                                   & 1.34              & 21                       & 1                                   & 0.94              \\ \cline{4-10} 
                                       &                                     &                             & 2             & 14                       & 2                                   & 4.72              & 7                        & 5                                   & 0.62              \\ \cline{4-10} 
                                       &                                     &                             & 3             & 14                       & 2                                   & 3.87              & 28                       & 2                                   & 0.59              \\ \cline{4-10} 
                                       &                                     &                             & 4             & 14                       & 2                                   & 1.42              & 7                        & 1                                   & 0.83              \\ \cline{4-10} 
                                       &                                     &                             & 5             & 14                       & 2                                   & 5.71              & 14                       & 3                                   & 1.45              \\ \cline{3-10} 
                                       &                                     & \multirow{5}{*}{\textbf{R}} & 1             & 14                       & 2                                   & 6.57              & 21                       & 6                                   & 2.97              \\ \cline{4-10} 
                                       &                                     &                             & 2             & 14                       & 2                                   & 2.86              & 14                       & 5                                   & 1.18              \\ \cline{4-10} 
                                       &                                     &                             & 3             & 14                       & 2                                   & 10.22             & 14                       & 3                                   & 6.43              \\ \cline{4-10} 
                                       &                                     &                             & 4             & 14                       & 2                                   & 1.07              & 28                       & 5                                   & 0.43              \\ \cline{4-10} 
                                       &                                     &                             & 5             & 14                       & 2                                   & 4                 & 14                       & 3                                   & 0.74              \\ \hline
\end{tabular}
\end{table}

% Please add the following required packages to your document preamble:
% \usepackage{multirow}
\begin{table}[htb!]
\scriptsize
\caption{Number of time-varying parameters, interval lengths, and MAEs across different test folds using the expanding-window cross-validation (CV) method and the optimal configuration (OC) for each noise setting in the SIR model simulations.}
\label{tab:SIR}
\begin{tabular}{|m{1cm}|m{1cm}|m{1cm}|m{1cm}|m{1cm}|m{1cm}|m{1cm}|m{1cm}|m{1cm}|m{1cm}|}
\hline
\textbf{Dataset}              & \textbf{Noise}                 & \textbf{Variable}           & \textbf{Fold} & \textbf{CV: Interval length} & \textbf{CV: Num. of time varying params.} & \textbf{CV: MAE test} & \textbf{OC: Interval length} & \textbf{OC: Num. of time varying params.} & \textbf{OC: MAE test} \\ \hline
\multirow{50}{*}{\textbf{SIR}} & \multirow{10}{*}{\textbf{0.0}} & \multirow{5}{*}{\textbf{I}} & 1             & 14                       & 6                                   & 7.81              & 28                       & 5                                   & 3.42              \\ \cline{4-10} 
                              &                                &                             & 2             & 14                       & 6                                   & 16.78             & 28                       & 6                                   & 1.67              \\ \cline{4-10} 
                              &                                &                             & 3             & 28                       & 6                                   & 11.43             & 28                       & 1                                   & 1.78              \\ \cline{4-10} 
                              &                                &                             & 4             & 28                       & 6                                   & 0.66              & 28                       & 6                                   & 0.66              \\ \cline{4-10} 
                              &                                &                             & 5             & 28                       & 6                                   & 6.49              & 28                       & 6                                   & 6.49              \\ \cline{3-10} 
                              &                                & \multirow{5}{*}{\textbf{S}} & 1             & 14                       & 6                                   & 8.66              & 28                       & 6                                   & 3.61              \\ \cline{4-10} 
                              &                                &                             & 2             & 14                       & 6                                   & 17.54             & 28                       & 6                                   & 1.26              \\ \cline{4-10} 
                              &                                &                             & 3             & 28                       & 6                                   & 11                & 28                       & 1                                   & 1.8               \\ \cline{4-10} 
                              &                                &                             & 4             & 28                       & 6                                   & 1.41              & 7                        & 1                                   & 0.74              \\ \cline{4-10} 
                              &                                &                             & 5             & 28                       & 6                                   & 8.47              & 21                       & 4                                   & 4.03              \\ \cline{2-10} 
                              & \multirow{10}{*}{\textbf{1.0}} & \multirow{5}{*}{\textbf{I}} & 1             & 21                       & 6                                   & 7.83              & 28                       & 5                                   & 3.2               \\ \cline{4-10} 
                              &                                &                             & 2             & 21                       & 6                                   & 10.75             & 28                       & 6                                   & 1.57              \\ \cline{4-10} 
                              &                                &                             & 3             & 28                       & 6                                   & 13.98             & 14                       & 6                                   & 2.24              \\ \cline{4-10} 
                              &                                &                             & 4             & 21                       & 6                                   & 25.82             & 21                       & 1                                   & 0.48              \\ \cline{4-10} 
                              &                                &                             & 5             & 28                       & 6                                   & 5.08              & 14                       & 5                                   & 1.08              \\ \cline{3-10} 
                              &                                & \multirow{5}{*}{\textbf{S}} & 1             & 21                       & 6                                   & 8.08              & 14                       & 2                                   & 1.75              \\ \cline{4-10} 
                              &                                &                             & 2             & 21                       & 6                                   & 11.88             & 28                       & 6                                   & 0.73              \\ \cline{4-10} 
                              &                                &                             & 3             & 28                       & 6                                   & 13.2              & 28                       & 1                                   & 2.94              \\ \cline{4-10} 
                              &                                &                             & 4             & 21                       & 6                                   & 25.76             & 21                       & 1                                   & 0.47              \\ \cline{4-10} 
                              &                                &                             & 5             & 28                       & 6                                   & 3.69              & 21                       & 6                                   & 3.62              \\ \cline{2-10} 
                              & \multirow{10}{*}{\textbf{2.0}} & \multirow{5}{*}{\textbf{I}} & 1             & 21                       & 6                                   & 11.43             & 14                       & 3                                   & 4.16              \\ \cline{4-10} 
                              &                                &                             & 2             & 14                       & 3                                   & 17.39             & 28                       & 5                                   & 4.06              \\ \cline{4-10} 
                              &                                &                             & 3             & 14                       & 3                                   & 9.05              & 28                       & 1                                   & 1.28              \\ \cline{4-10} 
                              &                                &                             & 4             & 14                       & 3                                   & 2.45              & 7                        & 6                                   & 0.26              \\ \cline{4-10} 
                              &                                &                             & 5             & 14                       & 3                                   & 9.33              & 21                       & 5                                   & 1.42              \\ \cline{3-10} 
                              &                                & \multirow{5}{*}{\textbf{S}} & 1             & 21                       & 6                                   & 10.75             & 7                        & 2                                   & 3.16              \\ \cline{4-10} 
                              &                                &                             & 2             & 14                       & 3                                   & 13.36             & 21                       & 5                                   & 3.11              \\ \cline{4-10} 
                              &                                &                             & 3             & 14                       & 3                                   & 2.02              & 28                       & 2                                   & 0.79              \\ \cline{4-10} 
                              &                                &                             & 4             & 14                       & 3                                   & 14.36             & 7                        & 6                                   & 0.31              \\ \cline{4-10} 
                              &                                &                             & 5             & 14                       & 3                                   & 10.89             & 21                       & 6                                   & 4.29              \\ \cline{2-10} 
                              & \multirow{10}{*}{\textbf{3.0}} & \multirow{5}{*}{\textbf{I}} & 1             & 21                       & 6                                   & 12.46             & 14                       & 4                                   & 2.87              \\ \cline{4-10} 
                              &                                &                             & 2             & 21                       & 6                                   & 12.66             & 28                       & 4                                   & 1.13              \\ \cline{4-10} 
                              &                                &                             & 3             & 28                       & 6                                   & 7.75              & 28                       & 1                                   & 1.31              \\ \cline{4-10} 
                              &                                &                             & 4             & 28                       & 6                                   & 2.62              & 28                       & 5                                   & 1.13              \\ \cline{4-10} 
                              &                                &                             & 5             & 28                       & 6                                   & 8.35              & 7                        & 6                                   & 4.47              \\ \cline{3-10} 
                              &                                & \multirow{5}{*}{\textbf{S}} & 1             & 21                       & 6                                   & 10.3              & 28                       & 4                                   & 1.74              \\ \cline{4-10} 
                              &                                &                             & 2             & 21                       & 6                                   & 12.48             & 14                       & 4                                   & 0.57              \\ \cline{4-10} 
                              &                                &                             & 3             & 28                       & 6                                   & 8                 & 28                       & 1                                   & 1.88              \\ \cline{4-10} 
                              &                                &                             & 4             & 28                       & 6                                   & 2.48              & 14                       & 6                                   & 0.7               \\ \cline{4-10} 
                              &                                &                             & 5             & 28                       & 6                                   & 16.5              & 28                       & 4                                   & 5.17              \\ \cline{2-10} 
                              & \multirow{10}{*}{\textbf{4.0}} & \multirow{5}{*}{\textbf{I}} & 1             & 14                       & 1                                   & 10.01             & 14                       & 3                                   & 3.05              \\ \cline{4-10} 
                              &                                &                             & 2             & 21                       & 6                                   & 9.14              & 28                       & 5                                   & 8.06              \\ \cline{4-10} 
                              &                                &                             & 3             & 21                       & 6                                   & 6.09              & 28                       & 1                                   & 2.71              \\ \cline{4-10} 
                              &                                &                             & 4             & 21                       & 6                                   & 6.67              & 28                       & 2                                   & 1.32              \\ \cline{4-10} 
                              &                                &                             & 5             & 21                       & 6                                   & 14.84             & 14                       & 6                                   & 2.46              \\ \cline{3-10} 
                              &                                & \multirow{5}{*}{\textbf{S}} & 1             & 14                       & 1                                   & 10.13             & 21                       & 2                                   & 2.48              \\ \cline{4-10} 
                              &                                &                             & 2             & 21                       & 6                                   & 5.88              & 21                       & 3                                   & 1.19              \\ \cline{4-10} 
                              &                                &                             & 3             & 21                       & 6                                   & 3.8               & 21                       & 5                                   & 3.07              \\ \cline{4-10} 
                              &                                &                             & 4             & 21                       & 6                                   & 6.15              & 21                       & 5                                   & 2.38              \\ \cline{4-10} 
                              &                                &                             & 5             & 21                       & 6                                   & 13.35             & 14                       & 5                                   & 3.97              \\ \hline
\end{tabular}
\end{table}

% Please add the following required packages to your document preamble:
% \usepackage{multirow}
\begin{table}[htb!]
\scriptsize
\caption{Number of time-varying parameters, interval lengths, and MAEs across different test folds using the expanding-window cross-validation (CV) method and the optimal configuration (OC) for each station in the gases dataset.}
\label{tab:gases}
\begin{tabular}{|m{1cm}|m{1cm}|m{1cm}|m{1cm}|m{1cm}|m{1cm}|m{1cm}|m{1cm}|m{1cm}|m{1cm}|}
\hline
\textbf{Dataset}              & \textbf{Station}                 & \textbf{Variable}           & \textbf{Fold} & \textbf{CV: Interval length} & \textbf{CV: Num. of time varying params.} & \textbf{CV: MAE test} & \textbf{OC: Interval length} & \textbf{OC: Num. of time varying params.} & \textbf{OC: MAE test} \\ \hline
\multirow{20}{*}{gases}       & \multirow{10}{*}{AMS1}       & \multirow{5}{*}{CH\textsubscript{4}} & 1    & 7               & 3                          & 5.68     & 21              & 4                          & 1.38     \\ \cline{4-10} 
                              &                              &                      & 2    & 7               & 3                          & 37       & 14              & 3                          & 1.52     \\ \cline{4-10} 
                              &                              &                      & 3    & 14              & 3                          & 5.07     & 14              & 15                         & 1.13     \\ \cline{4-10} 
                              &                              &                      & 4    & 14              & 3                          & 10.58    & 7               & 36                         & 1.02     \\ \cline{4-10} 
                              &                              &                      & 5    & 14              & 3                          & 4.64     & 14              & 81                         & 1.27     \\ \cline{3-10} 
                              &                              & \multirow{5}{*}{CO\textsubscript{2}} & 1    & 7               & 3                          & 12.56    & 7               & 28                         & 3.02     \\ \cline{4-10} 
                              &                              &                      & 2    & 7               & 3                          & 10.91    & 14              & 8                          & 2.61     \\ \cline{4-10} 
                              &                              &                      & 3    & 14              & 3                          & 7.99     & 7               & 3                          & 2.48     \\ \cline{4-10} 
                              &                              &                      & 4    & 14              & 3                          & 6.47     & 14              & 5                          & 2.89     \\ \cline{4-10} 
                              &                              &                      & 5    & 14              & 3                          & 12.54    & 21              & 23                         & 2.46     \\ \cline{2-10} 
                              & \multirow{10}{*}{AMS18}      & \multirow{5}{*}{CH\textsubscript{4}} & 1    & 7               & 96                         & 24.6     & 7               & 81                         & 23.78    \\ \cline{4-10} 
                              &                              &                      & 2    & 7               & 99                         & 4.94     & 7               & 5                          & 4.51     \\ \cline{4-10} 
                              &                              &                      & 3    & 7               & 99                         & 15.28    & 14              & 95                         & 14.08    \\ \cline{4-10} 
                              &                              &                      & 4    & 7               & 99                         & 6.56     & 28              & 2                          & 4.61     \\ \cline{4-10} 
                              &                              &                      & 5    & 7               & 99                         & 5.57     & 7               & 75                         & 4        \\ \cline{3-10} 
                              &                              & \multirow{5}{*}{CO\textsubscript{2}} & 1    & 7               & 96                         & 11.16    & 7               & 3                          & 2.65     \\ \cline{4-10} 
                              &                              &                      & 2    & 7               & 99                         & 10.9     & 7               & 7                          & 1.63     \\ \cline{4-10} 
                              &                              &                      & 3    & 7               & 99                         & 5.75     & 14              & 12                         & 3.43     \\ \cline{4-10} 
                              &                              &                      & 4    & 7               & 99                         & 12.2     & 7               & 6                          & 3.05     \\ \cline{4-10} 
                              &                              &                      & 5    & 7               & 99                         & 13.86    & 7               & 40                         & 1.47     \\ \hline
\end{tabular}
\end{table}

% Please add the following required packages to your document preamble:
% \usepackage{multirow}
\begin{table}[htb!]
\scriptsize
\caption{Number of time-varying parameters, interval lengths, and MAEs across different test folds using the expanding-window cross-validation (CV) method and the optimal configuration (OC) for each station in the Cyanobacteria dataset.}
\label{tab:CB}
\begin{tabular}{|m{1cm}|m{1cm}|m{1cm}|m{1cm}|m{1cm}|m{1cm}|m{1cm}|m{1cm}|m{1cm}|m{1cm}|}
\hline
\textbf{Dataset}              & \textbf{Region}                 & \textbf{Variable}           & \textbf{Fold} & \textbf{CV: Interval length} & \textbf{CV: Num. of time varying params.} & \textbf{CV: MAE test} & \textbf{OC: Interval length} & \textbf{OC: Num. of time varying params.} & \textbf{OC: MAE test} \\ \hline
\multirow{5}{*}{CB}           & \multirow{5}{*}{AB}        & \multirow{5}{*}{CB\textsubscript{cell}} & 1    & 20              & 103                        & 2.26     & 15              & 82                         & 0.07     \\ \cline{4-10} 
                              &                              &                      & 2    & 15              & 55                         & 0.91     & 25              & 65                         & 0.29     \\ \cline{4-10} 
                              &                              &                      & 3    & 25              & 64                         & 5.46     & 25              & 106                        & 1.74     \\ \cline{4-10} 
                              &                              &                      & 4    & 25              & 64                         & 3.5      & 15              & 58                         & 0.26     \\ \cline{4-10} 
                              &                              &                      & 5    & 25              & 60                         & 2.88     & 10              & 69                         & 0.15     \\ \hline
\end{tabular}
\end{table}

\end{document}